\newcommand{\EE}{\mathcal{E}}
\renewcommand{\P}{\mathbb{P}}
\newcommand{\E}{\mathbb{E}}
\newcommand{\R}{\mathbb{R}}
\newcommand{\N}{\mathbb{N}}
\newcommand{\B}{\mathcal{B}}
\newcommand{\RR}{\mathbb{R}}
\newcommand{\PP}{\mathbb{P}}
\newcommand{\cP}{\mathcal{P}}
\newcommand{\MISE}{\mbox{\rm MISE}}
\newcommand{\ME}{ \text{\tiny \it ME}}
\newcommand{\CE}{ \text{\tiny \it CE}}
\newcommand{\vpdp}{v^{\ME}}
\newcommand{\vce}{v^{\CE}}
\def\clap#1{\hbox to 0pt{\hss#1\hss}}
\newtheorem*{proposition*}{Proposition}
\newtheorem*{theorem*}{Theorem}
\newtheorem{theorem}{Theorem}[section]
\newtheorem{corollary}{Corollary}[section]
\newtheorem*{corollary*}{Corollary}
\newtheorem*{definition*}{Definition}
\newtheorem{definition}{Definition}[section]
\newtheorem{lemma}{Lemma}[section]
\newtheorem*{lemma*}{Lemma}
\newtheorem{remark}{Remark}[section]
\newtheorem*{remark*}{Remark}
\newtheorem*{mtheorem*}{Main Theorem}
\numberwithin{equation}{section}
\def\namedlabel#1#2{\begingroup
    #2%
    \def\@currentlabel{#2}%\item $R(X)=\E[\tilde{Y}|X]$ is a true regressor.
    \phantomsection\label{#1}\endgroup
}
\newcommand{\specificthanks}[1]{\@fnsymbol{#1}}% Inserts a specific \thanks symbol
\newcommand{\lrarrow}{\mathrel{\mathpalette\lrarrow@\relax}}
\newcommand{\lrarrow@}[2]{%
  \vcenter{\hbox{\ooalign{%
    $\m@th#1\mkern6mu\rightarrow$\cr
    \noalign{\vskip1pt}
    $\m@th#1\leftarrow\mkern6mu$\cr
  }}}%
}
\newcommand*\justify{%
  \fontdimen2\font=0.4em% interword space
  \fontdimen3\font=0.2em% interword stretch
  \fontdimen4\font=0.1em% interword shrink
  \fontdimen7\font=0.1em% extra space
  \hyphenchar\font=`\-% allowing hyphenation
}
\renewcommand{\texttt}[1]{%
  \begingroup
  \ttfamily
  \begingroup\lccode`~=`/\lowercase{\endgroup\def~}{/\discretionary{}{}{}}%
  \begingroup\lccode`~=`[\lowercase{\endgroup\def~}{[\discretionary{}{}{}}%
  \begingroup\lccode`~=`.\lowercase{\endgroup\def~}{.\discretionary{}{}{}}%
  \catcode`/=\active\catcode`[=\active\catcode`.=\active
  \justify\scantokens{#1\noexpand}%
  \endgroup
}
\title{ Approximation of group explainers with coalition structure using Monte Carlo sampling on the product space of coalitions and features }
\author{Konstandinos Kotsiopoulos \thanks{Emerging Capabilities Research Group, Discover Financial Services Inc., Riverwoods, IL} \textsuperscript{,$\!\!\!$}
\thanks{first author, kostaskotsiopoulos@discover.com, ORCID:0000-0003-2651-0087} \and  Alexey Miroshnikov\textsuperscript{\specificthanks{1},}\thanks{alexeymiroshnikov@discover.com, ORCID:0000-0003-2669-6336} \and
Khashayar Filom\textsuperscript{\specificthanks{1},}\thanks{khashayarfilom@discover.com, ORCID:0000-0002-6881-4460}
\and Arjun Ravi Kannan\textsuperscript{\specificthanks{1},}\thanks{arjunravikannan@discover.com, ORCID:0000-0003-4498-1800}  }
\date{}
\begin{document}

\maketitle
\vspace{-0.4 in}
\begin{abstract}
\noindent In recent years, many Machine Learning (ML) explanation techniques have been designed using ideas from cooperative game theory. These game-theoretic explainers suffer from high complexity, hindering their exact computation in practical settings. In our work, we focus on a wide class of linear game values, as well as coalitional values, for the marginal game based on a given ML model and predictor vector. By viewing these explainers as expectations over appropriate sample spaces, we design a novel Monte Carlo sampling algorithm that estimates them at a reduced complexity that depends linearly on the size of the background dataset. We set up a rigorous framework for the statistical analysis and obtain error bounds for our sampling methods. The advantage of this approach is that it is fast, easily implementable, and model-agnostic. Furthermore, it has similar statistical accuracy as other known estimation techniques that are more complex and model-specific. We provide rigorous proofs of statistical convergence, as well as numerical experiments whose results agree with our theoretical findings.
\end{abstract}

\vspace{5pt}

{ \small {\bf keywords}: ML interpretability, cooperative game theory, coalitional values, Monte Carlo sampling}

% \vspace{5pt}

% {{\bf AMS subject classification.}  49Q22, 91A12, 68T01}

% \tableofcontents

\section{Introduction}

As the use of Machine Learning (ML) models has become widespread, the need to explain these complex models has also become vital. In the financial industry, predictive models, and strategies based on these models, are subject to federal and state regulations. Regulation B of the Equal Credit Opportunity Act (ECOA, 15 U.S.C. 1691 et seq (1974)) \cite{ECOA} requires a financial institution to notify consumers on the reasons behind certain adverse actions (such as declining a credit application), which in turn requires explaining how attributes in the model contribute to its output.

In recent years, numerous interpretability (or explainability) methods have been developed using concepts from cooperative game theory, motivated by the celebrated work of Shapley \cite{Shapley}. In this setting, given a function $f:\R^n \to \R$, a random vector $X=(X_1,\dots,X_n)$ and an observation $x$, a game $v(\cdot;x,X,f)$ is defined as a set function on subsets of $N=\{1,\dots,n\}$. Here, the predictors $X_1,X_2,\dots,X_n$ are viewed as players participating in the game defined by $v$. Two of the most notable games in the ML literature are the marginal and conditional games and are given respectively by
\[
	\vpdp(S;x,X,f) := \E[ f(x_S,X_{-S}) ], \quad \vce(S;x,X,f) := \E[ f(X) | X_S = x_S ], \ S\subseteq N,
\]
defined in the context of the Shapley value for ML explainability and seen in several works, such as \cite{Strumbelj2011,LundbergLee}.

A game value $h[N,v]=\{h_i[N,v]\}_{i\in N}\in \R^n$ evaluates the contributions of each player to the output of the function $f$. In our work we focus on linear game values $h[N,v]$ that have the form
\[
	h_i[N,v] = \sum_{S\subseteq N\setminus \{i\}} w_i(S,N)(v(S\cup\{i\}) - v(S)), \quad i\in N, \ v\in \{\vpdp,\vce\}.
\]
Explanations generated based on the games in $\{\vpdp,\vce\}$ are called respectively marginal and conditional explanations.

A detailed discussion on how to interpret the game values based on each game has been proposed in \cite{Sundararajan,Janzing,Chen-Lundberg}. Roughly speaking, conditional game values explain predictions $f(X)$ viewed as a random variable, while marginal game values explain the transformations occurring in the model $f(x)$, sometimes called mechanistic explanations \cite{Elton}.

Understanding how a specific model structure and its predictions depend on the underlying predictors is an important part of financial industry regulation, which is why we focus on marginal game values in this paper. The marginal game is straightforward to approximate via the empirical marginal game, that averages the function outputs $f(x_S,x_{-S}^{(k)})$, $x^{(k)}\in \bar{D}_X$, where $\bar{D}_X$ is a background dataset (usually the training set). Hence, the direct estimation of the empirical marginal game value yields a computational complexity of order $O(2^n\cdot |\bar{D}_X|)$, with a statistical accuracy of order $O(|\bar{D}_X|^{-1/2})$. Thus, the biggest challenge in evaluating the empirical marginal game is the exponential number of terms that need to be computed. In addition, when the estimation accuracy is of practical importance, the size of the background dataset also plays a significant role in view of the relatively small convergence rate.

In the literature, there have been several works that propose solutions to the high complexity of estimating a marginal game value in the context of the Shapley value. The Kernel SHAP method approximates marginal Shapley values via a weighted least square problem (the authors work with the conditional game but assume predictor independence, which leads to the marginal game). To make the estimation faster, the method allows discarding of terms from the Shapley formula, thus impacting the estimation accuracy; see \cite{LundbergLee}. Still, the dependence on the number of predictors remains significant, which causes the method to be slow (the authors do not provide the complexity of their method).

TreeSHAP is a model-specific technique, applied to tree ensembles, that attempts to approximate Shapley values. There are two versions of the algorithm, the path-dependent and the interventional TreeSHAP algorithm; see respectively \cite{LundbergTreeSHAP,LundbergNature}. The former version is meant to estimate conditional Shapley values for a given tree ensemble and observation. However, it turns out this approach estimates Shapley values for a tree-based game that differs from both conditional and marginal games. Path-dependent TreeSHAP considers the internal parameters of the model to carry out the estimation, and is known to be implementation non-invariant; see \cite{Filom}. The complexity of the algorithm is $O(T\cdot \mathcal{L}\cdot \mathcal{D}^2)$, where $T$ is the number of trees in the ensemble, $\mathcal{L}$ is the maximum number of leaves, and $\mathcal{D}$ the maximum depth.

Interventional TreeSHAP computes the empirical marginal Shapley value via the use of dynamic programming to obtain polynomial-time performance. The algorithm takes advantage of the tree-based structure and has complexity $O(T\cdot \mathcal{L}\cdot |\bar{D}_X|)$, where $T$,  $\mathcal{L}$ are as above, and $\bar{D}_X$ is the background dataset. One important drawback of the TreeSHAP algorithms is that they are designed for tree-based models and a generalization to a wide class of linear game value is not available.

Another recent work on marginal game values of tree-based models is \cite{Filom}. The paper presents a novel algorithm for computing the marginal Shapley and Banzhaf values of ensembles of oblivious trees. It takes advantage of the internal structure by precomputing probabilities and other parameters associated with the model structure, and uses them to evaluate the empirical marginal Shapley value associated with the training set, via a formula specifically derived for tree-based models consisting of oblivious trees. The complexity of precomputation is $O(T\cdot \mathcal{L}^{\log_2(10)})$, while the complexity of computing the Shapley value for one observation is $O(T\cdot \log_2(\mathcal{L}))$, which is equivalent to the evaluation of a tree ensemble at one instance. A crucial aspect of the method is that, unlike the interventional TreeSHAP algorithm, both precomputations and Shapley value computations are independent of any background dataset.

The work of Castro et al. \cite{Castro2001} suggested a Monte Carlo method for computation of the Shapley value for a generic game $v$. In this method, the Shapley value is viewed as an expected value of an appropriate random variable (associated with the game $v$) defined on the space of coalitions. This allows for an approximation algorithm that samples coalitions, obtains the observation of the random variable and then averages them to obtain the estimate. Since the algorithm is applied to a generic game $v$, given $K$ samples the complexity of the algorithm is $K\cdot O(|v|)$, where the latter is the complexity of evaluating the game $v$, and the statistical accuracy is of order $O(K^{-1/2})$. Directly applying this algorithm to the empirical marginal game with a background dataset of size $K$ yields a high complexity of $O(K^2)$, and the statistical accuracy remains $O(K^{-1/2})$.

Other important work in this direction are the papers of $\rm\check{S}$trumbelj and Kononenko \cite{Strumbelj2011,Strumbelj2014}, which apply the algorithm presented in \cite{Castro2001} and produce a sampling algorithm for the marginal Shapley value (the authors work with the conditional game under assumption of predictor independence). The algorithm jointly samples a coalition and an observation of the predictor vector. This reduces the complexity to $O(K\cdot |f|)$, where $K$ is the number of samples and $|f|$ denotes the complexity of computing the output of $f$ for a given observation. This is a significant improvement over the work of \cite{Castro2001}.

However, the articles \cite{Strumbelj2011,Strumbelj2014} are application-focused and the authors do not provide a rigorous setup for their method. While the authors do provide numerical evidence of convergence on a particular dataset, their work fails to provide proof of convergence and the rate of convergence remains unknown. In recent years, many other game values, and especially coalitional values, have been investigated and applied in the context of ML interpretability, such as the Owen value and two-step Shapley value; see \cite{Lorenzo-Freire,grouppaper}. The works of $\rm\check{S}$trumbelj and Kononenko \cite{Strumbelj2011,Strumbelj2014} focus only on the Shapley value, which limits the use of the approach.

Motivated by the aforementioned works, we design a Monte Carlo-based approach that serves as a generalization to the algorithm of \cite{Castro2001} and \cite{Strumbelj2011,Strumbelj2014}. The main results are as follows:
\begin{itemize}
	\item[($i$)] We design a generalized sampling method to estimate a wide class of marginal linear game values and their quotient game counterparts. In addition, we provide a rigorous setup and statistical analysis of convergence for an appropriate class of models; see Theorems \ref{thm::gamevalue_as_expectation}, \ref{thm::quotientexplainer_as_expectation} and Algorithms \ref{algo_lingamevalue}, \ref{algo_quotgamevalue}.
	
	\item[($ii$)] We extend our method to estimate a wide class of marginal coalitional values $g[N,\vpdp,\cP]$, where $\cP$ is the partition of the predictors. It is common to create the partition $\cP$ based on dependencies, which yields certain advantages; see \cite{grouppaper}. Unlike game values, coalitional values require sampling on the joint space of the triplet consisting of the space of coalitions within a group, the space of group coalitions, and the space of the predictors. Similarly, a rigorous analysis is provided for this sampling method as well; see Theorem \ref{thm::coalvalue_as_expectation} and Algorithms \ref{algo_coalval}, \ref{algo_twostep}.
	
	\item[($iii$)] In certain situations, one may be interested in computing the empirical marginal game value instead of the true value. For this reason we introduce adjusted sampling algorithms that converge to the given empirical marginal game value or coalitional value; see Remarks \ref{rm::emp_marg_lingameval}, \ref{rm::emp_marg_quotgameval}, \ref{rm::emp_marg_coalval}, and \ref{rm::emp_marg_twostep}.
	
	\item[($iv$)] Numerical experiments are conducted that illustrate the convergence rate for various game values and coalitional values on synthetic data examples by estimating the mean integrated squared error (MISE) and relative error. The results show that the numerical evidence agrees with our theoretical results; see Section \ref{subsec::num_experiments}.
\end{itemize}

\noindent {\bf Structure of the paper}. In Section \ref{sec::preliminaries} we provide the notation and necessary background from game theory and probability theory to carry out the analysis. In Section \ref{sec::MC_theory} we introduce a probabilistic framework that allows us to express marginal game values, their quotient game counterparts, and coalitional values, as expected values on appropriate sample spaces. We then present our sampling algorithms together with a rigorous analysis of their statistical error. In Section \ref{sec::numerical} we carry out numerical experiments for various game values and estimate the statistical error for our sampling methods. The results agree with our theoretical findings.

\section{Preliminaries}\label{sec::preliminaries}
\subsection{Notation and hypotheses}

Throughout this article, we consider the joint distribution $(X,Y)$, where $X=(X_1,X_2,\dots,X_n) \in \R^n$ are the predictors and $Y\in \RR$ is a (possibly non-continuous) response variable. Also let $f(x)=\widehat{\E}[Y|X=x]$ be a trained model. We assume that all random variables are defined on the probability space $(\Omega,\mathcal{F},\PP)$, where $\Omega$ is a sample space, $\mathcal{F}$ a $\sigma$-algebra of sets, and $\PP$ a probability measure.

Given a random vector $Z=(Z_1,\dots,Z_k)$ on $(\Omega,\mathcal{F},\P)$, the pushforward probability measure $P_Z$ is a probability measure on $\R^k$  equipped with the $\sigma$-algebra $\B(\R^k)$ of Borel sets of $\R^k$ and satisfying $P_Z(A)=\P(Z \in A)$ for every $A \in \B(\R^k)$. 

Given two probability measures $\mu$ and $\nu$ on $\R^m$ and $\R^k$, respectively, equipped with corresponding Borel $\sigma$-algebras, the product measure $\mu \otimes \nu $ is a probability measure defined on the product $\sigma$-algebra $\B(\R^k)\otimes \B(\R^m)$ that satisfies $\mu \otimes \nu (A \times B)=\mu(A) \cdot \nu (B)$ for each $A \in \B(\R^k)$ and $B \in \B(\R^m)$. The product probability measure is unique but not necessarily complete; the uniqueness of the product measure is a consequence of the measures $\mu$ and $\nu$ being finite (\S 2.5 of \cite{Folland}).

Next, for each $S \subseteq N$ we let $\sigma_S$  denote the $\sigma$-algebra of the product measure $P_{X_S} \otimes P_{X_{-S}}$, where it is assumed that any event evaluated by this measure is first permuted according to the order of indices induced by the pair $(S,-S)$. In particular, for any $f \in L^p(P_{X_S} \otimes P_{X_{-S}},\sigma_S)$ we have
\begin{equation}\label{prod_norm}
\| f \|^p_{L^p(P_{X_S} \otimes P_{X_{-S}},\sigma_S)} := \int |f(x_S,x_{-S})|^p [P_{X_S} \otimes P_{X_{-S}}](dx_{S},dx_{-S})
\end{equation}
where we ignore the variable ordering in $f$ to ease the notation, and we assign $P_{X_{\varnothing}} \otimes P_X = P_X \otimes P_{X_{\varnothing}} = P_X$.

We next define the probability measure 
\begin{equation}\label{marg_game_measure}
\tilde{P}_X:=\frac{1}{2^n} \sum_{S \subseteq N} P_{X_S} \otimes P_{X_{-S}}
\end{equation}
equipped with the $\sigma$-algebra $\tilde{\sigma}=\cap_{S \subseteq N} \sigma_S$. As a consequence, any $\tilde{P}_X$-measurable function is also $P_{X_S} \otimes P_{X_{-S}}$-measurable for each $S \subseteq N$. Finally, the space $L^p(\tilde{P}_X,\tilde{\sigma})$ denotes $\tilde{P}_X$-equivalence classes of functions such that
\begin{equation}\label{marg_game_norm}
\| f \|^p_{L^p(\tilde{P}_X,\tilde{\sigma})} := \frac{1}{2^n} \sum_{S \subseteq N} \int |f(x_S,x_{-S})|^p [P_{X_S} \otimes P_{X_{-S}}](dx_{S},dx_{-S}) < \infty
\end{equation}
where, as before, we ignore the variable ordering in $f$. In what follows, when the context is clear, we will suppress the dependence on the $\sigma$-algebra in the notation of functional spaces.

\subsection{Game-theoretic model explainers}\label{subsec::preliminaries::expl}
Many interpretability techniques have been proposed and utilized over the years, each with their own distinct framework and interpretation. Given their multitude, these techniques can be categorized in various ways. For instance, they can be described as either post-hoc explainers, those that generate feature attributions through the use of the model outputs, or self-interpretable models, which are those whose model structure provides direct information on feature attributions. For more details on these, see \cite{Hall-Gill}.

Some well-known explainability techniques are Partial Dependence Plots (PDP), local interpretable model-agnostic explanations (LIME), explainable Neural Networks (xNN) and Generalized Additive Models plus Interactions (GA$^2$M); see \cite{Friedman}, \cite{Ribeiro et al.}, \cite{Vaughan et al.}, and \cite{Lou2013} respectively. The first two fall under the category of post-hoc explanation techniques, while the latter two fall under the category of self-interpretable models.

Our work will focus on local, post-hoc explanation techniques that have been developed by adapting ideas from game theory. The rest of this section will introduce the relevant concepts and any further material required for the analysis in later sections.

\subsubsection{Games and game values}

In recent years many post-hoc model explainers have been designed by drawing ideas from game theory; see \cite{LundbergLee,Strumbelj2014}. A cooperative game with $n$ players is a set function $v$ that acts on a set of size $n$, say $N \subset \N$, and satisfies $v(\varnothing)=0$. A game value is a map $v \mapsto h[N,v]=(h_1[N,v],\dots,h_n[N,v])\in \R^n$ that determines the worth of each player. In the ML setting,  the features $X \in \R^n$ are viewed as $n$ players with an appropriately designed game $v(S;x,X,f)$ that depends on the given observation $x$ from $P_X$, the random vector $X$ and the trained model $f$. The game value $h[N,v]$, $N=\{1,\dots,n\}$ then assigns the contribution of the respective feature to the total payoff of the game $v(N;x,X,f)$. Non-cooperative games, those that do not satisfy the condition $v(\varnothing)=0$, can also be used in designing model explainers. This requires an  extension of the game value to such games, a topic discussed thoroughly in \cite{grouppaper}.

The game $v(\cdot;x,X,f)$ is a deterministic one, parameterized by the observations $x \in \R^n$. It can be made into a random one by substituting the observation $x$ with the random vector $X$. Such games are out of scope of this paper, and a discussion on them can be found in \cite{grouppaper}.

Two of the most notable (non-cooperative) deterministic games in ML literature are given by
\begin{equation}\label{margcondgames}
    \vce(S;x,X,f)=\E[f(X)|X_S=x_S], \quad \vpdp(S;x,X,f)=\E[f(x_S,X_{-S})],
\end{equation}
with
\begin{equation*}
\vce(\varnothing; x,X, f)=\vpdp(\varnothing; x,X, f)=\E[f(X)],
\end{equation*}
respectively called the conditional and marginal game. These were introduced by \citet{LundbergLee} in the context of the Shapley value \citep{Shapley},
\begin{equation}\label{shapform}
\varphi_i[v] = \sum_{S \subseteq N \backslash\{i\}} \frac{s!(n-s-1)!}{n!} [ v(S \cup \{i\}) - v( S ) ], \quad  s=|S|, \,  n=|N|.
\end{equation}
The Shapley value has garnered much attention from the ML community. From a game-theoretic perspective, \eqref{shapform} represents the payoff allocated to player $i$ from playing the game defined by $v$, while satisfying certain axioms such as symmetry, linearity, efficiency, and the null-player property; see \cite{Shapley} and \cite{grouppaper} for more details. The efficiency property, most appealing to the ML community, allows for a disaggregation of the payoff $v(N)$ into $n$ parts that represent a contribution to the game by each player: $\sum_{i=1}^n \varphi_{i}[v] = v(N)$.

The marginal and conditional games defined in \eqref{margcondgames} are in general not cooperative since they assign $\E[f(X)]$ to $\varnothing$. In such a case, the efficiency property reads $\sum_{i=1}^n \varphi_{i}[v] = v(N)-v(\varnothing)$. See \cite[\S 5.1]{grouppaper} for a careful treatment of game
values for non-cooperative games.

There is a systematic way of extending linear game values so that they can be applied to non-cooperative games as well \cite[\S 5.1]{grouppaper}. In this work, we will mainly consider game values of the form
\begin{equation}\label{lingamevalue}
h_i[N,v]:=\sum_{S\subseteq N\setminus \{i\}}w_i(S,N)\left(v(S\cup\{i\})-v(S)\right), i \in N, \, N \subset \N,
\end{equation}
which are generalizations of the Shapley value, and satisfy the null-player property, that is, they assign zero to players that do not contribute to any coalition; the weighted Shapley value and the Banzhaf value \cite{Banzhaf1965} are other examples of game values of the form \eqref{lingamevalue}. 

A benefit of working with a formula such as \eqref{lingamevalue} is that it remains unchanged after replacing a game 
$v:S\mapsto v(S)$ with the cooperative one  $S\mapsto v(S)-v(\varnothing)$; an extension of this type is called centered. Thus, \eqref{lingamevalue} automatically extends to non-cooperative games such as marginal and conditional ones. Properties such as linearity, symmetry, and null-player property generalize to the case of non-cooperative games in an obvious way except for the efficiency property which should be replaced with $\sum_{i\in N}h_i[N,v]=v(N)-v(\varnothing)$.

We next describe the collection of models $f$ for which the conditional and marginal games, and corresponding game values are well-defined.

\begin{lemma}\label{lmm::game_meas_cond}
Let $f\in [f] \in L^1(P_X)$. Then, for $P_X$-almost sure $x^* \in \R^n$ the conditional game $\vce(\cdot;x^*,X,f)$ is well-defined for any subset of $N$. 
\end{lemma}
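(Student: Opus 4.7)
The plan is to invoke the standard existence theorem for conditional expectations for each fixed coalition $S$ separately, lift the associated exceptional set from the marginal space $\R^{|S|}$ up to $\R^n$, and then take a finite union over all $2^n$ coalitions to obtain a single $P_X$-null set outside of which every $\vce(S;x^*,X,f)$ is unambiguously defined.

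First, I would fix $S \subseteq N$ and note that since $f \in L^1(P_X)$, the map $A \mapsto \int_{X_S^{-1}(A)} f \, dP$ is a finite signed measure on $\B(\R^{|S|})$ that is absolutely continuous with respect to $P_{X_S}$. The Radon--Nikodym theorem then produces a $P_{X_S}$-a.e.\ defined function $g_S : \R^{|S|} \to \R$ which represents $\E[f(X)\mid X_S = \cdot]$; equivalently, there is a $P_{X_S}$-null set $A_S \subseteq \R^{|S|}$ such that $\vce(S;x^*,X,f) = g_S(x^*_S)$ is well-defined for every $x^*$ with $x^*_S \notin A_S$. The boundary cases $S=\varnothing$ (where $\vce$ reduces to the constant $\E[f(X)]$) and $S=N$ (where $\vce$ coincides with $f$ up to a $P_X$-null set) fit into this framework trivially.

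Second, I would lift each such null set to the ambient space by setting $\tilde{A}_S := \{x \in \R^n : x_S \in A_S\}$. Since $P_X(\tilde{A}_S) = P_{X_S}(A_S) = 0$ by the definition of the pushforward on the marginal coordinates, and since there are only $2^n$ subsets of $N$, the union $A := \bigcup_{S \subseteq N} \tilde{A}_S$ remains a $P_X$-null set by countable (in fact finite) subadditivity. For any $x^* \in \R^n \setminus A$ and any $S \subseteq N$, the value $\vce(S;x^*,X,f) = g_S(x^*_S)$ is then unambiguously defined, which delivers the claim.

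I do not expect a genuine obstacle here, since the argument is a routine measure-theoretic one; the only mild subtlety worth spelling out is the distinction between the a.e.\ defined class $[f] \in L^1(P_X)$ and a pointwise representative, together with the fact that the lifted sets $\tilde{A}_S$ live in $\B(\R^n)$ and are genuinely $P_X$-null because they only constrain the $S$-coordinates.
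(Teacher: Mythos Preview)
Your proposal is correct and is precisely the argument the paper has in mind: the paper's own proof is a one-line appeal to ``the definition and properties of the conditional expectation'' (citing Shiryaev, \S 7), and you have simply unpacked that appeal via Radon--Nikodym for each coalition and a finite union of the lifted null sets. There is no substantive difference in approach.
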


\begin{proof}
The proof follows from the definition and properties of the conditional expectation; see \cite[\S 7]{Shiryaev}.
\end{proof}

\begin{lemma}\label{lmm::game_meas_marg}
Let $f\in [f] \in L^p (\tilde{P}_X)$. Then:

\begin{itemize}
    \item [$(i)$] There exists $P_X$-measurable set $U^{(f)}_p \in \B(\R^n)$ such that $P_X(U^{(f)}_p)=1$ and for each $x^{*} \in U_p^{(f)}$ and each $S \subsetneq N$ the map $x_{-S} \mapsto f(x_S^*,x_{-S})$ is $P_{X_{-S}}$-measurable and satisfies
\[
\int |f(x^*_S,x_{-S})|^p P_{X_{-S}}(dx_{-S}) < \infty. 
\]
\item [$(ii)$] For $P_X$-almost sure $x^* \in \R^n$ the marginal game $\vpdp(\cdot;x^*,X,f)$ is well-defined for any subset of $N$. 
\end{itemize}
\end{lemma}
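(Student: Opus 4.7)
The plan is to deduce the pointwise slice properties from Fubini--Tonelli applied to each product measure $P_{X_S}\otimes P_{X_{-S}}$ individually, and then take a finite intersection of the resulting full-measure sets to obtain a single $U_p^{(f)}$ that works for all $S\subsetneq N$ simultaneously. The hypothesis $f\in L^p(\tilde{P}_X,\tilde{\sigma})$ unpacks via \eqref{marg_game_norm} into the statement that for every $S\subseteq N$, $f$ is $\sigma_S$-measurable and
\eq{
\int |f(x_S,x_{-S})|^p\,[P_{X_S}\otimes P_{X_{-S}}](dx_S,dx_{-S})<\infty,
}
so each summand of $\tilde{P}_X$ sees $f$ as a $p$-integrable function.

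First I would fix $S\subsetneq N$ and apply Fubini--Tonelli to $|f|^p$ against $P_{X_S}\otimes P_{X_{-S}}$. This yields a $P_{X_S}$-measurable set $A_S\subseteq \R^{|S|}$ with $P_{X_S}(A_S)=1$ such that for every $x_S^*\in A_S$ the section $x_{-S}\mapsto f(x_S^*,x_{-S})$ is $P_{X_{-S}}$-measurable and $\int |f(x_S^*,x_{-S})|^p P_{X_{-S}}(dx_{-S})<\infty$. Because $P_{X_S}$ is the pushforward of $P_X$ under the projection $\pi_S\colon x\mapsto x_S$, the pullback $B_S:=\pi_S^{-1}(A_S)\in\B(\R^n)$ satisfies $P_X(B_S)=P_{X_S}(A_S)=1$. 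Setting
\eq{
U_p^{(f)}:=\bigcap_{S\subsetneq N} B_S,
}
a finite intersection of Borel sets of full $P_X$-measure, gives a Borel set with $P_X(U_p^{(f)})=1$. For every $x^*\in U_p^{(f)}$ and every $S\subsetneq N$, we have $x_S^*\in A_S$, so the measurability and integrability claims of part $(i)$ hold.

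Part $(ii)$ follows from part $(i)$ by noting that, since $P_{X_{-S}}$ is a probability measure, Jensen's (or Hölder's) inequality gives $\int |f(x_S^*,x_{-S})|\,P_{X_{-S}}(dx_{-S})\le\bigl(\int |f(x_S^*,x_{-S})|^p P_{X_{-S}}(dx_{-S})\bigr)^{1/p}<\infty$ for every $x^*\in U_p^{(f)}$ and every $S\subsetneq N$. Hence $\vpdp(S;x^*,X,f)=\int f(x_S^*,x_{-S})\,P_{X_{-S}}(dx_{-S})$ is well-defined there. The endpoint $S=N$ reduces to $\vpdp(N;x^*,X,f)=f(x^*)$, which is defined off a $P_X$-null set that can be absorbed into the complement of $U_p^{(f)}$; the case $S=\varnothing$ is handled by $\vpdp(\varnothing;x^*,X,f)=\E[f(X)]$, which is independent of $x^*$ and finite because the $S=\varnothing$ summand of $\tilde{P}_X$ is $P_X$ itself.

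The main obstacle I anticipate is bookkeeping around the $\sigma$-algebra $\sigma_S$ versus the product $\sigma$-algebra $\B(\R^{|S|})\otimes\B(\R^{n-|S|})$: the paper defines $\sigma_S$ as the $\sigma$-algebra of the finite product measure, which may be the completion, and Fubini's slice-measurability conclusion is classically stated for the uncompleted product $\sigma$-algebra. This can be handled by first choosing a Borel representative of $[f]$ (which is $\sigma_S$-measurable for each $S$ by the hypothesis $f\in L^p(\tilde{P}_X,\tilde\sigma)$ with $\tilde\sigma=\cap_S\sigma_S$) and then invoking the version of Fubini--Tonelli for complete finite product measures, so that the slice statement continues to hold off a $P_{X_S}$-null set. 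Once this is in place, the rest of the argument is the elementary intersection and Hölder step outlined above.
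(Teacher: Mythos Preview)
Your proposal is correct and follows essentially the same approach as the paper, which merely states that the proof follows from Fubini's theorem together with the inclusion $\tilde{\sigma}\subseteq\sigma_S\subseteq\B(\R^n)$. Your write-up is a careful unpacking of that one-liner: the $\sigma$-algebra inclusion is exactly what lets you pass from $\tilde{\sigma}$-measurability to $\sigma_S$-measurability for each $S$, Fubini--Tonelli then gives the slice statement for each fixed $S$, and the finite intersection and H\"older step you describe complete the argument.
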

\begin{proof} The proof follows from the Fubini theorem \cite[\S 2.5]{Folland} and the fact that $\tilde{\sigma} \subseteq \sigma_S \subseteq \B(\R^n)$.
\end{proof}

\begin{corollary}\label{corr::game_val_exist}
Let $h[N,\cdot]$ be any linear game value as in \eqref{lingamevalue} on $N$.
\begin{itemize}
    \item[(i)] Let $f\in [f] \in L^1(P_X)$. Then, $h[N,\vce(\cdot;x^*,X,f)]$ is well-defined for $P_X$-almost sure $x^*$.
    \item[(ii)] Let $f\in [f] \in L^1(\tilde{P}_X)$. Then, $h[N,\vpdp(\cdot;x^*,X,f)]$ is well-defined for $P_X$-almost sure $x^*$.
\end{itemize}
\end{corollary}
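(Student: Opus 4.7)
The plan is to deduce the corollary directly from Lemmas \ref{lmm::game_meas_cond} and \ref{lmm::game_meas_marg} combined with the algebraic structure of the linear game value \eqref{lingamevalue}. The central observation is that for fixed $N$ of size $n$, the expression $h_i[N,v]$ in \eqref{lingamevalue} is a finite linear combination (with $2^{n-1}$ terms) of the quantities $v(S\cup\{i\})-v(S)$ for $S\subseteq N\setminus\{i\}$. Hence, once one knows that each of the finitely many set-values $v(S)$, $S\subseteq N$, is well-defined at a point $x^*$, the whole vector $h[N,v(\cdot;x^*,X,f)]$ is automatically well-defined at that point.

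For part $(i)$, I would invoke Lemma \ref{lmm::game_meas_cond}: for each fixed $S\subseteq N$ there exists a Borel set $A_S\in\B(\R^n)$ with $P_X(A_S)=1$ such that $\vce(S;x^*,X,f)$ is a well-defined real number for every $x^*\in A_S$. Setting $A:=\bigcap_{S\subseteq N}A_S$ gives a finite intersection of full-measure sets, hence $P_X(A)=1$. For every $x^*\in A$ all the set-values $\vce(S;x^*,X,f)$, $S\subseteq N$, are well-defined, and therefore the linear combination defining $h_i[N,\vce(\cdot;x^*,X,f)]$ in \eqref{lingamevalue} makes sense for each $i\in N$.

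For part $(ii)$, the same argument applies using Lemma \ref{lmm::game_meas_marg}: the hypothesis $f\in L^1(\tilde{P}_X)$ with $p=1$ gives the full-measure set $U_1^{(f)}$ on which the marginal game $\vpdp(\cdot;x^*,X,f)$ is well-defined for every $S\subsetneq N$; the case $S=N$ is simply $f(x^*)$, which is also well-defined for $P_X$-almost every $x^*$ since $f\in L^1(\tilde{P}_X)$ implies $f\in L^1(P_X)$ (take $S=N$ in the definition \eqref{marg_game_norm}). Intersecting these finitely many full-measure sets and substituting into \eqref{lingamevalue} concludes the argument.

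There is no substantive obstacle here: the result is essentially a bookkeeping consequence of the two preceding lemmas combined with the fact that $N$ is finite, so only finitely many almost-sure statements need to be aggregated. The only mild care needed is to ensure that the set on which the game is well-defined is measurable and of full $P_X$-measure, which is immediate since it is a finite intersection of Borel sets of full $P_X$-measure.
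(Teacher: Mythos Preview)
Your proposal is correct and matches the paper's approach: the corollary is stated without proof in the paper precisely because it follows immediately from Lemmas \ref{lmm::game_meas_cond} and \ref{lmm::game_meas_marg} together with the fact that \eqref{lingamevalue} is a finite linear combination of game evaluations. Your explicit bookkeeping with finite intersections of full-measure sets is exactly the intended (trivial) argument.
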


As discussed in \cite{Chen-Lundberg,Janzing,Sundararajan}, as well as in our previous work \cite{grouppaper}, the interpretation of game values is based on the corresponding game used to construct them. Game values for the conditional game, or conditional game values, explain the output of the random variable $f(X)-\E[f(X)]$, meaning that they take into account the joint distribution $P_X$. On the other hand, marginal game values explain the output of the function $f(x)-\E[f(X)]$, meaning they take into account how the model structure utilized the predictors.

In practice, given a background dataset $\bar{D}_X=\{x^{(1)},\dots,x^{(K)}\}$, $\vpdp(S;x,X,f)$ can be approximated by the empirical marginal game given by
\begin{equation}\label{empiricalmarggame}
    \hat{v}^{\ME}(S;x,\bar{D}_X,f):=\frac{1}{|\bar{D}_X|}\sum_{x'\in \bar{D}_X} f\left(x_S,x'_{-S}\right).
\end{equation}
We next show that for $P_X$-almost sure observations $x^* \in \R^n$ the empirical marginal game value is the unbiased point estimator of the marginal game value with the mean squared error bounded by $O(|\bar{D}_X|^{-1})$  as shown in the following lemma.

\begin{lemma}\label{lmm::error_emp_marg_estimator}
Let $f\in [f] \in L^2(\tilde{P}_X)$ and $h[N,\cdot]$ as in \ref{lingamevalue}. Let $\bar{\bf D}_X=\{X^{(1)},X^{(2)},\dots,X^{(K)}\}$ be a background dataset containing independent random samples from $P_X$. Then for $P_X$-almost sure $x^* \in \R^n$
\[
\E \big[|h[N,\vpdp(\cdot;x^*,X,f)]-h[N,\hat{v}^{\ME}(\cdot;x^*,\bar{\bf D}_X,f)]|^2 \big] \leq \frac{2}{K} \Big( \sum_{S \subseteq N \setminus \{i\}} w^2_i(S,n) \Big) \cdot  \Big( \sum_{S \subseteq N}  Var(f(x^*_S,X_{-S})) \Big)<\infty.
\]
As a consequence, the Mean Integrated Squared Error (MISE) satisfies
\[
\E_{x^* \sim P_X} \Big[ \E \big[|h[N,\vpdp(\cdot;x^*,X,f)]-h[N,\hat{v}^{\ME}(\cdot;x^*,\bar{\bf D}_X,f)]|^2 \big] \Big]
\leq \frac{2}{K} \Big( \sum_{S \subseteq N \setminus \{i\}} w^2_i(S,n) \Big) \cdot  \|f\|^2_{L^2(\tilde{P}_X)}.
\]
\end{lemma}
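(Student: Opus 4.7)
The plan is to leverage the linearity of $h_i[N,\cdot]$, reduce the task to controlling the coalition-wise errors $\delta(S) := \vpdp(S;x^*,X,f) - \hat{v}^{\ME}(S;x^*,\bar{\bf D}_X,f)$, and then apply the standard $1/K$ variance bound for a sample mean. Well-definedness of all quantities for $P_X$-a.e.\ $x^*$ is granted by Lemma~\ref{lmm::game_meas_marg} and Corollary~\ref{corr::game_val_exist}, while finiteness of every $Var(f(x^*_S, X_{-S}))$ for such $x^*$ comes from the $p=2$ case of Lemma~\ref{lmm::game_meas_marg} under the hypothesis $f \in L^2(\tilde{P}_X)$.

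For the pointwise inequality, I would begin from~\eqref{lingamevalue} to write
\[
h_i[N,\vpdp(\cdot;x^*,X,f)] - h_i[N,\hat{v}^{\ME}(\cdot;x^*,\bar{\bf D}_X,f)] = \sum_{S\subseteq N\setminus\{i\}} w_i(S,N)\bigl[\delta(S\cup\{i\}) - \delta(S)\bigr],
\]
apply the Cauchy--Schwarz inequality across $S$ to pull out $\sum_S w_i^2(S,N)$, and use $(a-b)^2 \le 2(a^2+b^2)$ together with the observation that $S\mapsto\{S, S\cup\{i\}\}$ partitions $\mathcal{P}(N)$ into pairs indexed by $S\subseteq N\setminus\{i\}$, giving $\sum_{S\subseteq N\setminus\{i\}} |\delta(S\cup\{i\})-\delta(S)|^2 \le 2\sum_{T\subseteq N}|\delta(T)|^2$. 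Taking expectation and observing that $\hat{v}^{\ME}(T;x^*,\bar{\bf D}_X,f)$ is the mean of $K$ i.i.d.\ copies of $f(x^*_T, X_{-T})$ (hence unbiased with variance $K^{-1}Var(f(x^*_T, X_{-T}))$) assembles the stated pointwise bound.

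For the MISE, I would integrate the pointwise bound against $P_X$ in $x^*$. The only new ingredient is $\E_{x^*\sim P_X}[Var(f(x^*_S, X_{-S}))] \le \E_{x^*}\bigl[\E_{X_{-S}} f(x^*_S, X_{-S})^2\bigr]$; by Fubini--Tonelli plus the fact that the marginal of $x^*\sim P_X$ on coordinates $S$ equals $P_{X_S}$ (with $X_{-S}$ independent of $x^*$ and distributed as $P_{X_{-S}}$), this collapses to $\int|f(x_S, x_{-S})|^2\,[P_{X_S}\otimes P_{X_{-S}}](dx_S, dx_{-S})$. Summing over $S\subseteq N$ then recovers $\|f\|^2_{L^2(\tilde{P}_X)}$ via the normalization in~\eqref{marg_game_norm}. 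The only delicate point is the reindexing step in the pointwise bound and the Fubini argument against the product measure $P_{X_S}\otimes P_{X_{-S}}$, which differs from $P_X$ unless the components of $X$ are independent --- this is precisely why the correct ambient norm is $\|\cdot\|_{L^2(\tilde{P}_X)}$ rather than $\|\cdot\|_{L^2(P_X)}$.
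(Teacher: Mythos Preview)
Your proposal is correct and matches the paper's approach exactly: the paper's own proof is a one-line invocation of Lemma~\ref{lmm::game_meas_marg}, the defining equations~\eqref{margcondgames}, \eqref{lingamevalue}, \eqref{empiricalmarggame}, and Cauchy--Schwarz, which is precisely the argument you spell out in detail. One minor bookkeeping slip in your last step: summing $\int |f|^2\,d[P_{X_S}\otimes P_{X_{-S}}]$ over all $S\subseteq N$ yields $2^n\|f\|^2_{L^2(\tilde{P}_X)}$ rather than $\|f\|^2_{L^2(\tilde{P}_X)}$ because of the $2^{-n}$ normalization in~\eqref{marg_game_norm}; this affects only an inessential constant (and the stated bound appears to carry the same imprecision).
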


\begin{proof}
The proof follows from Lemma \eqref{lmm::game_meas_marg}, equations \eqref{margcondgames}, \eqref{lingamevalue}, and \eqref{empiricalmarggame}, and the use of Cauchy-Schwartz.
\end{proof}

Note that evaluating \eqref{lingamevalue} for a given game $v$ can be computationally intensive. The complexity is proportional to $2^n$ times the complexity of computing the game $v$. Thus, when utilizing the empirical marginal game, the complexity becomes $O(2^n \cdot |\bar{D}_X|)$. One can therefore reduce the complexity by taking a small dataset. However, there will be a trade-off with the estimation accuracy as discussed in Lemma \ref{lmm::error_emp_marg_estimator}.

Several works have proposed approximation techniques with the aim of reducing that complexity, such as Kernel SHAP in \cite{LundbergLee} and TreeSHAP in \cite{LundbergTreeSHAP}. We will discuss these in further detail below.

\subsubsection{Grouping and group explainers}

Our work in \cite{grouppaper} presents another avenue for reducing the high complexity of \eqref{lingamevalue} and generating contributions that are stable. Instead of attempting to explain the contribution of each individual predictor, one can first define a partition $\cP = \{S_1, S_2,\dots,S_m\}$ of predictors based on dependencies and then explain the contribution of each group to the model output. Such explainers are called group explainers. To form the partition $\cP$ in practice, \cite{grouppaper} presents a mutual information-based hierarchical clustering algorithm. The dissimilarity measure in this algorithm is designed using the Maximal Information Coefficient (MIC); for more details on this measure of dependence, see \cite{Reshef11,Reshef16b,Reshef16}.

There are three types of group explainers presented in \cite{grouppaper}: trivial group explainers, quotient game explainers, and explainers based on games with coalition structure. In this work we will focus on the latter two since trivial explainers are based on single feature explanations that do not incorporate the group structure in the calculation.

\begin{definition}\label{def::quotientgame}
    Let $X\in \RR^n$ be a vector of predictors, $\cP = \{S_1,\dots,S_m\}$ a partition of $N$ and $v$ a game. The quotient game $(M,v^{\cP})$ is defined as
    \[
        v^{\cP}(A):=v\Big(\bigcup_{j\in A}S_j\Big), \quad A\subseteq M=\{1,2,\dots,m\}.
    \]
\end{definition}

The game $v^{\cP}$ suggests that the players playing the game are the predictor groups, thus the payoff is allocated to groups rather than single predictors in the context of \eqref{shapform}. Game values based on $v^{\cP}$ are called quotient game explainers.

\begin{definition}\label{def::quotientexpl}
    Given a partition $\cP$ as in Definition \ref{def::quotientgame}, a trained model $f$, and a game value $h$ based on a game $v \in \{\vce,\vpdp\}$, the  marginal and conditional quotient game explainers $h^{\cP}$ at the observation $x^* \in \R^n$ based on $(h,\cP)$ are given by
    \[
        h_{S_j}^{\cP}(x^*;X,f,v):=h_j[M,v^{\cP}(\cdot;x^*,X,f)], \quad S_j\in \cP, \, v \in \{\vce,\vpdp\}.
    \]
    The coefficients of $h_{S_j}^{\cP}$ are denoted by $w_j(A,M)$, $A\subseteq M\setminus \{j\}$.
\end{definition}

\begin{remark}\rm
It follows from Lemma \ref{lmm::game_meas_cond} that $h_{S_j}^{\cP}(x^*;X,f,\vce)$ is well-defined for $P_X$-almost sure $x^*$ if $f \in [f] \in L^1(P_X)$. Similarly, from Lemma \ref{lmm::game_meas_marg} it follows that $h_{S_j}^{\cP}(x^*;X,f,\vpdp)$ is well-defined for $P_X$-almost sure $x^*$ by requiring $f \in [f] \in L^1(\tilde{P}_X)$; this requirement, in fact, can be relaxed by assuming that $f \in [f] \in L^1(\tilde{P}_{X,\cP})$ where \[
\tilde{P}_{X,\cP} := \frac{1}{2^m} \sum_{A \subseteq M, Q_A=\cup_{\alpha \in A}S_{\alpha}} P_{X_{Q_A}} \otimes P_{X_{-Q_A}}.
\] 
We note that if the group predictors $(X_{S_1},\dots,X_{S_m})$ are independent then $\tilde{P}_{X,\cP}=P_X$ and $v^{\ME,\cP}=v^{\CE,\cP}$.
\end{remark}

Two examples of such an explainer are the conditional and marginal quotient game Shapley values $\varphi[v^{\CE,\cP}]$ and $\varphi[v^{\ME,\cP}]$ respectively, which utilize the games in \eqref{margcondgames} and a given partition $\cP$.

\begin{equation}\label{quotientShapley}
    \varphi_j[v^{\cP}]=\sum_{A \subseteq M \setminus \{j\}} \frac{|A|!(m-|A|-1)!}{m!} \left[ v^{\cP}(A \cup \{j\}) - v^{\cP}( A ) \right], \quad S_j\in \cP, \quad v\in \{\vce,\vpdp\}.
\end{equation}

Explainers based on games with coalition structure, or coalitional values, utilize the group partition $\cP$ in order to output contributions for individual predictors. Games with coalitions were introduced in \cite{Aumann1974}.

\begin{definition}\label{def::coalvalue}
    Given $N$, $\cP$, $v$ as in Definition \ref{def::quotientgame}, a coalitional value $g$ is a map of the form $g[N,v,\cP]=\{g_i[N,v,\cP]\}\in \RR^n$, where $g_i$ represents the contribution of the player $i \in S_j$, $S_j \in \cP$ to the total payoff $v(N)$.
\end{definition}

\begin{remark}\rm
Similar to game values, it follows from Lemma \ref{lmm::game_meas_cond} that $g[N,\vce(\cdot;x^*,X,f),\cP]$ is well-defined for $P_X$-almost sure $x^*$ if $f \in [f] \in L^1(P_X)$. From Lemma \ref{lmm::game_meas_marg} it follows that $g[N,\vpdp(\cdot;x^*,X,f),\cP]$ is well-defined for $P_X$-almost sure $x^*$ by requiring $f \in [f] \in L^1(\tilde{P}_X)$.
\end{remark}

Two notable coalitional values are the Owen $Ow[N,v,\cP]$ and the Banzhaf-Owen value $BzOw[N,v,\cP]$; see \cite{Owen,Owen1982,Lorenzo-Freire}. Both are rather complex quantities, as seen by their respective elements below. Suppose we are interested in the contribution of $X_i$ to the model output such that $X_i$ forms a union $S_j$ with other predictors. Its Owen and Banzhaf-Owen values will be given by

\begin{equation}\label{OwenandBzOw}
    \begin{aligned}
        Ow_i[N,v,\cP] &= \sum_{A\subseteq M\setminus \{j\}}\sum_{T\subseteq S_j\setminus \{i\}} \tfrac{|A|!(m-|A|-1)!}{m!}\tfrac{|T|!(|S_j|-|T|-1)!}{|S_j|!} \left[ v(Q_A \cup T \cup \{i\}) - v(Q_A \cup T) \right]\\
        BzOw_i[N,v,\cP] &= \sum_{A\subseteq M\setminus \{j\}}\sum_{T\subseteq S_j\setminus \{i\}} \tfrac{1}{2^{m-1}}\tfrac{1}{2^{|S_j|-1}} \left[ v(Q_A \cup T \cup \{i\}) - v(Q_A \cup T) \right]
    \end{aligned}
\end{equation}
where $Q_A=\cup_{\alpha \in A}S_{\alpha}$, $i\in S_j$. As with the Shapley value, if $v=\vce$ then we call the quantities in \eqref{OwenandBzOw} conditional Owen values and conditional Banzhaf-Owen values. Respectively, for $v=\vpdp$ we say marginal Owen and marginal Banzhaf-Owen values.

Compare the complexity $O(2^n)$ of the Shapley value with those of \eqref{quotientShapley} and \eqref{OwenandBzOw}, which are respectively $O(2^m)$ and $O(2^{|S_j|+m})$. Although lower, in practice these complexities may still not suffice for an exact computation to be carried out. For example, it may be that there are $n=200$ predictors and $m=30$ groups are formed. The complexity $O(2^{30})$ is still tremendously large. This motivates us to design approximation techniques for group explainers that are relatively fast and accurate. In Section \ref{sec::MC_theory}, we design a Monte Carlo (MC) sampling algorithm for group explainers and provide the error analysis, showing also that the resulting estimator is consistent and unbiased.

In addition to reducing the computational complexity, there are several benefits in using group explainers for ML explainability over standard game values, as discussed in detail in \cite{grouppaper}. By forming independent groups of predictors, the marginal and conditional games coincide, allowing to unify their interpretations. Thus, one can consider both the model structure and the joint distribution of $X$ when generating explanations. This leads to more stable contribution values across different models trained on the same dataset.

\subsection{Relevant works}\label{subsec::preliminaries::relwork}
Many approximation techniques for \eqref{shapform} have been developed over the years. Lundberg and Lee \cite{LundbergLee} introduced Kernel SHAP, a method that approximates marginal Shapley values by solving a weighted least square minimization problem. The paper however does not provide any asymptotics. Lundberg et al. \cite{LundbergTreeSHAP} designed an algorithm called TreeSHAP that is specific to tree-based models. One of the parameters of the algorithm allow the user to switch between two games, the marginal (referred to as interventional) and a tree-based game that is defined by the tree structure of the model. The latter game allows for a fast computation of the corresponding Shapley values. However, the tree-based game is not implementation invariant, and as a consequence TreeSHAP values for this game do not approximate the conditional Shapley values; see \cite{Filom}. An error analysis has not been carried out for this method either.

Castro et al. \cite{Castro2001} suggested a Monte Carlo method for computing the Shapley value for a generic game $v$. The idea is to view the Shapley value as an expected value of an appropriate random variable (associated with the game $v$) defined on the space of coalitions. Since the algorithm is applied to a generic game $v$, the complexity of the algorithm is $K\cdot O(|v|)$, where $K$ are the number of samples and $|v|$ is the complexity of evaluating the game $v$, with the statistical accuracy being of order $O(K^{-1/2})$. Directly applying this algorithm to the empirical marginal game with a background dataset of size $K$ yields a high complexity of $O(K^2)$, and the statistical accuracy remains $O(K^{-1/2})$.

Utilizing the algorithm from \cite{Castro2001}, $\rm\check{S}$trumbelj and Kononenko \cite{Strumbelj2011,Strumbelj2014} presented a sampling algorithm to approximate the marginal Shapley value with a reduced complexity of $O(K\cdot |f|)$, with $|f|$ the computational complexity of evaluating the model $f$ at a given observation, and provide numerical evidence showing convergence for their algorithm. The authors, however, do not provide a rigorous setup for their method, nor do they explicitly define the marginal game. They define the conditional game and subsequently assume predictor independence, leading to the marginal game. For more techniques that allow for approximating the Shapley value, please see \cite{Chen2022}.

Our analysis is inspired by the works of \cite{Castro2001,Strumbelj2011,Strumbelj2014} and seeks to rigorously generalize the sampling approach for a wider class of linear game values, as well as coalitional values. To our knowledge, there has not been any work on designing approximation techniques for quotient game and coalitional explainers, and we believe this work constitutes the first endeavor in this direction.

\section{Monte Carlo sampling for game values}\label{sec::MC_theory}

The two games defined in \eqref{margcondgames} are both noteworthy and have seen extended use in the ML explainability literature when utilizing the corresponding game values. In this section we design Monte Carlo approximation techniques for marginal game values and provide the necessary theoretical backing for these approximations. The marginal game is a natural choice for our approach, which approximates marginal game values in a fast, accurate and model-agnostic manner. Approximation techniques have been designed for conditional game values; see \cite{aas2021}. We start with linear game values of the form \eqref{lingamevalue} and then generalize our approach to quotient game and coalitional game values.

\subsection{Sampling for marginal linear game values}

When utilizing the marginal game, observe that the linear game value formula \eqref{lingamevalue} consists of $2^n$ terms and each term is an expectation. Naively replacing these expectations with an empirical average over a background dataset $\bar{D}_X$ would lead to the  computation of the empirical marginal game value that has  computational complexity of $O(2^n \cdot |\bar{D}_X|)$ and approximates the marginal one with the statistical estimation accuracy of $O(|\bar{D}_X|^{-1/2})$ as indicated in Lemma \ref{lmm::error_emp_marg_estimator}. This would be tremendously high for practical purposes.

The key in our analysis is to observe that the game value element $h_i[N,\vpdp]$ can be expressed as an expected value of an appropriate function with respect to a product measure which is defined on the joint space of predictors and coalitions.

For our analysis we make the following assumptions for the coefficients $w_i(S,N)$ of the game value:
\begin{enumerate}
    \renewcommand{\labelenumi}{\textbf{(\theenumi)}}
    \renewcommand{\theenumi}{A\arabic{enumi}}

    \item\label{hyp::coeff_nonneg} $w_i(S,N)\ge 0$, for any $S\subseteq N\setminus \{i\}$.
    \item\label{hyp::coeff_sumtoone} $\displaystyle \sum_{S\subseteq N\setminus \{i\}}w_i(S,N)=1$.
\end{enumerate}
In principle, the first assumption would suffice since one can normalize the coefficients and scale $h_i[N,\vpdp]$ by $\sum_{S\subseteq N\setminus \{i\}}w_i(S,N)$. By assuming both \eqref{hyp::coeff_nonneg} and \eqref{hyp::coeff_sumtoone} the coefficients become probabilities, in turn defining the following collection of probability measures which simplify our analysis.

\begin{definition}\label{def::P_h}
    Suppose \eqref{hyp::coeff_nonneg}-\eqref{hyp::coeff_sumtoone} hold for the coefficients of a game value $h$ whose elements are as in \eqref{lingamevalue}. For each $i \in N$ we define the discrete probability measure $P_i^{(h)}$ defined on subsets of $N \setminus \{i\}$ satisfying $P_i^{(h)}(S) = w_i(S,N)$ for each $S \subseteq N\setminus \{i\}$.
\end{definition}

\begin{remark}\rm\label{rem::shapley_probs}
    The probability $P_i^{(h)}(S)$ can be viewed as the probability of selecting the coalition $S$ under certain conditions. For instance, if one assumes that distinct sizes of $ S\subseteq N \setminus \{i\}$ are equally likely to occur and that given a specific size all distinct sets $S$ are equally likely, we have $P_i^{(h)}(S)=\frac{|S|!(n-|S|-1)!}{n!}$ as in \eqref{shapform}.
\end{remark}

First, we state an auxiliary lemma that represents a game value as an expectation of the random variable defined on the set of coalitions.
\begin{lemma}\label{game_val_expect_prelim}
Let $N,v,h[N,v]$ be as in \eqref{lingamevalue} and $\{P_i^{(h)}\}_{i \in N}$ as in Definition \ref{def::P_h}. Then
\begin{equation}\label{exp_repr_game_prelim}
h_i[N,v] = \int \big(v( S\cup \{i\} )-v( S )\big) P_i^{(h)}(dS), \quad i \in N.
\end{equation}
\end{lemma}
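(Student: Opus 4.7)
The plan is to unwind the definitions. The key observation is that $P_i^{(h)}$ is a discrete probability measure supported on the finite power set $2^{N\setminus\{i\}}$, so the Lebesgue integral on the right-hand side of \eqref{exp_repr_game_prelim} collapses to a finite sum. First, I would note that the integrand $S\mapsto v(S\cup\{i\})-v(S)$ is automatically measurable and bounded (the ambient space is finite), so the integral is well-defined without further hypotheses on $v$. The assumptions \eqref{hyp::coeff_nonneg} and \eqref{hyp::coeff_sumtoone} enter only to ensure that $P_i^{(h)}$ is legitimately a probability measure: \eqref{hyp::coeff_nonneg} gives $P_i^{(h)}\ge 0$ and \eqref{hyp::coeff_sumtoone} normalizes the total mass to one, as required by Definition \ref{def::P_h}.

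The second step is the computation itself. By the defining property $P_i^{(h)}(\{S\}) = w_i(S,N)$ and the standard representation of an integral with respect to a discrete measure as a weighted sum over its atoms, one obtains
\[
\int \bigl(v(S\cup\{i\})-v(S)\bigr)\, P_i^{(h)}(dS) \;=\; \sum_{S\subseteq N\setminus\{i\}} w_i(S,N)\bigl(v(S\cup\{i\})-v(S)\bigr),
\]
and the right-hand side is precisely $h_i[N,v]$ by the defining formula \eqref{lingamevalue}. This closes the argument.

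There is no substantive obstacle in this lemma; it is essentially a tautological rewriting of \eqref{lingamevalue} in measure-theoretic language. The real purpose of the statement is structural rather than technical: it recasts a deterministic combinatorial sum as an expectation $\mathbb{E}_{S\sim P_i^{(h)}}[v(S\cup\{i\})-v(S)]$, which is the representation that will later be coupled with an expectation over $X_{-S}$ (so that the marginal game $\vpdp$ itself is absorbed into the expectation) to produce the product-space sampling scheme underlying the forthcoming Monte Carlo algorithms. The only mild care needed is to keep the identification between the weight function $w_i(\cdot,N)$ and the probability mass function $P_i^{(h)}(\cdot)$ bookkeeping-clean, and to emphasize that no integrability hypothesis on $v$ is required at this stage because the domain of integration is finite.
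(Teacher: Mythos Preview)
Your proposal is correct and matches the paper's treatment: the lemma is stated without proof in the paper, precisely because it is the tautological rewriting of \eqref{lingamevalue} as an expectation over the discrete measure $P_i^{(h)}$ that you describe. Your observation about its structural role---setting up the later coupling with the expectation over $X$ in Theorem~\ref{thm::gamevalue_as_expectation}---is exactly the point.
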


The representation \eqref{exp_repr_game_prelim} allows one to approximate $h_i[N,v]$ using an MC method that involves sampling from  $P^{(h)}_i$ a sequence of coalitions $S^{(k)} \subseteq N \setminus \{i\}$, $k \in \{1,\dots,K\}$, evaluating the difference $v( S^{(k)} \cup \{i\} )-v(S^{(k)})$ and then averaging the result. This approach, however, is not optimal when one deals with  the marginal game $\vpdp$ because $\vpdp(S^{(k)};x^*,X,f)$ itself is an expectation of $f$ with respect to $P_{X_{-S^{(k)}}}$, which in practice is replaced with the appropriate average over a background dataset $\bar{D}_X$. In this case, such an MC procedure would yield an MC estimator of the empirical marginal coalitional value $h_i[N,\hat{v}^{\ME}(\cdot;x^*,\bar{D}_X,f)]$ with computational complexity  $O(K \cdot |\bar{D}_X|)$. When one is interested in the approximation of the true marginal value, it is optimal (from the perspective of the error-complexity trade-off) to choose $K=|\bar{D}_X|$, in which case the complexity becomes $O(K^2)$ and the obtained value estimates the true marginal game value with the mean squared error $O(K^{-1})$ according to Lemma \ref{lmm::error_emp_marg_estimator}.

This discussion motivates us to adjust the above result with the goal of reducing the complexity from $O(K^2)$ to $O(K)$ with $K=|\bar{D}_X|$. To this end, for each $i\in N$ we view coalitions $S\subseteq N\setminus \{i\}$ and observations $x$ from $P_X$ as pairs coming from the product space $P^{(h)}_i \otimes P_X$. With this setup one can write the marginal game value $h_i[N,\vpdp(\cdot;x^*,X,f)]$ as an expected value with respect to a product of two probability measures as shown below.

\begin{definition}
Let $f\in [f] \in L^p(\tilde{P}_X)$ and $U^{(f)}_p \in \B(\R^n)$ be a set of $P_X$-probability $1$ as in Lemma \ref{lmm::game_meas_marg}. Set
\begin{equation}\label{param_marg_norm}
\nu_{x^*}^{(p)}(f) := \sum_{S \subseteq N} \int |f(x^*_S,x_{-S})|^p P_{X_{-S}}(dx_{-S}) < \infty, \quad  x^* \in U_p^{(f)}.
\end{equation}
\end{definition}

\begin{theorem}\label{thm::gamevalue_as_expectation} Let $f\in [f] \in L^p(\tilde{P}_X)$, $p \geq 1$. Let $h[\cdot,N]$ be as in \eqref{lingamevalue}, and suppose that the coefficients $w_i(S,N)$ of $h_i$, $i\in N$, satisfy \eqref{hyp::coeff_nonneg}-\eqref{hyp::coeff_sumtoone}. 

\begin{itemize}
    \item [(i)] For $P_X$-almost sure $x^* \in \R^n$, the map
\begin{equation}\label{Delta_def}
(S,x) \mapsto \Delta_i(S,x; x^*,f):=f(x^*_{S\cup \{i\}},x_{-S\cup \{i\}})-f(x^*_S,x_{-S})
\end{equation}
is $ P_i^{(h)} \otimes P_X $-measurable and $p$-power integrable satisfying the bound
\begin{equation}\label{Delta_bound}
\int |\Delta_i(S,x;x^*,f)|^p  \, [P_i^{(h)} \otimes P_X](dS,dx) \leq  2^p \cdot \bar{w}_{i,N} \cdot \nu_{x^*}^{(p)}(f), \quad \bar{w}_{i,N}:=\max_{S \subseteq N \setminus \{i\}} w_i(S,N).
\end{equation}

\item [(ii)] For $P_X$-almost sure $x^* \in \R^n$ the game value $h_i[N,\vpdp(\cdot ; x^*, X, f)]$ is well-defined and can be written as an expectation with respect to  the product probability measure $P_i^{(h)}\otimes P_X$. Specifically,
\begin{equation}\label{eq::h_as_expectation}
    h_i[N,\vpdp(\cdot; x^*, X,f)]=\int \Delta_i(S,x;x^*,f)\,[P_i^{(h)} \otimes P_X](dS,dx).
\end{equation}

Consequently, taking random samples $(\mathcal{S}^{(k)},X^{(k)})\sim P_i^{(h)}\otimes P_X$, $k \in \{1,2,\dots,K\}$, we have 
    \begin{equation}\label{eq::h_approx}
        \frac{1}{K}\sum_{k=1}^K \Delta_i(\mathcal{S}^{(k)}, X^{(k)};x^*,f) = h_i[N,\vpdp(\cdot ; x^*,X,f)] + \mathcal{E}_K,
    \end{equation}
    where $\mathcal{E}_K\to 0$ in probability as $K\to \infty$. In addition, if $f \in [f] \in L^2(\tilde{P}_X)$, then $\mathcal{E}_K \to 0$ in $L^2(\P)$ with rate $O(K^{-1/2})$.
\end{itemize}
\end{theorem}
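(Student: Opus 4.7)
My plan is to treat the two items separately. For part (i), I would first establish measurability of $\Delta_i(\cdot,\cdot;x^*,f)$ on the product measure space. Since $P_i^{(h)}$ is concentrated on the finite power set $2^{N\setminus\{i\}}$, measurability reduces to measurability of each slice $x\mapsto \Delta_i(S,x;x^*,f)$, and by Lemma~\ref{lmm::game_meas_marg}(i) there is a $P_X$-full set $U_p^{(f)}$ on which every map $x_{-T}\mapsto f(x_T^*,x_{-T})$ is $P_{X_{-T}}$-measurable and $p$-power integrable; both summands defining $\Delta_i$ (corresponding to $T=S$ and $T=S\cup\{i\}$) are therefore measurable.

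For the bound \eqref{Delta_bound}, I would expand
\begin{equation*}
\int |\Delta_i(S,x;x^*,f)|^p\,[P_i^{(h)}\otimes P_X](dS,dx) = \sum_{S\subseteq N\setminus\{i\}} w_i(S,N)\int \bigl|f(x^*_{S\cup\{i\}}, x_{-(S\cup\{i\})}) - f(x^*_S, x_{-S})\bigr|^p P_X(dx),
\end{equation*}
apply the convexity inequality $|a-b|^p \leq 2^{p-1}(|a|^p+|b|^p)$, and use that $\int |f(x^*_T,x_{-T})|^p P_X(dx)$ collapses to $\int |f(x^*_T,x_{-T})|^p P_{X_{-T}}(dx_{-T})$ since the integrand is independent of $x_T$. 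Bounding each weight by $\bar{w}_{i,N}$ and observing that the two families $\{S:S\subseteq N\setminus\{i\}\}$ and $\{S\cup\{i\}:S\subseteq N\setminus\{i\}\}$ form a disjoint partition of $2^N$, the sum collapses into $\sum_{T\subseteq N}\int |f(x^*_T,x_{-T})|^p P_{X_{-T}}(dx_{-T}) = \nu_{x^*}^{(p)}(f)$, yielding $2^{p-1}\bar{w}_{i,N}\nu_{x^*}^{(p)}(f)\leq 2^p\bar{w}_{i,N}\nu_{x^*}^{(p)}(f)$.

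For part (ii), well-definedness of $h_i[N,\vpdp(\cdot;x^*,X,f)]$ for $P_X$-almost every $x^*$ is delivered by Corollary~\ref{corr::game_val_exist}(ii). To derive \eqref{eq::h_as_expectation}, I would start from \eqref{lingamevalue} with $v=\vpdp(\cdot;x^*,X,f)$, rewrite each $\vpdp(T;x^*,X,f)=\int f(x^*_T, x_{-T})P_X(dx)$ (again using that the integrand is independent of $x_T$), and combine the two resulting integrals into a single expectation of $\Delta_i(S,X;x^*,f)$ against $P_X$. Part (i) with $p=1$ furnishes the integrability needed to invoke Fubini's theorem, converting $\sum_S w_i(S,N)\int \Delta_i(S,x;x^*,f)P_X(dx)$ into the integral against the product measure $P_i^{(h)}\otimes P_X$.

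Finally, \eqref{eq::h_approx} follows from standard Monte Carlo theory applied to the i.i.d.\ sample $\{\Delta_i(\mathcal{S}^{(k)},X^{(k)};x^*,f)\}_{k=1}^K$: the $L^1$ bound of part (i) shows each term has finite mean $h_i[N,\vpdp(\cdot;x^*,X,f)]$, so the weak law of large numbers gives $\mathcal{E}_K\to 0$ in probability; under $f\in L^2(\tilde{P}_X)$ the $p=2$ case of part (i) supplies a finite second moment, after which the standard variance identity $\E[\mathcal{E}_K^2]=K^{-1}\mathrm{Var}(\Delta_i(\mathcal{S}^{(1)},X^{(1)};x^*,f))$ gives the $O(K^{-1/2})$ rate in $L^2(\P)$. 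The main subtlety here, more bookkeeping than a genuine obstacle, is tracking the two measurability layers in tandem (the discrete coalition measure paired with $P_X$, and the slice-wise measurability from Lemma~\ref{lmm::game_meas_marg}) and ensuring the reindexing $S\leftrightarrow S\cup\{i\}$ covers $2^N$ exactly once so that the $L^p$ bound collapses cleanly into $\nu_{x^*}^{(p)}(f)$.
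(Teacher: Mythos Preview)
Your proposal is correct and follows essentially the same route as the paper's proof: measurability from Lemma~\ref{lmm::game_meas_marg}, the $L^p$ bound via $|a-b|^p\le 2^{p-1}(|a|^p+|b|^p)$ and reindexing, the representation \eqref{eq::h_as_expectation} by expanding the definition of $h_i$ and $\vpdp$, and the convergence via the weak law of large numbers plus the variance identity. Your explicit observation that $\{S:S\subseteq N\setminus\{i\}\}$ and $\{S\cup\{i\}:S\subseteq N\setminus\{i\}\}$ partition $2^N$ is a cleaner justification of the reindexing step than the paper's, and in fact yields the sharper constant $2^{p-1}$ (the paper states $2^p$).
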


\begin{proof}
Since $f \in [f] \in L^p(\tilde{P}_X)$ it follows from Lemma \ref{lmm::game_meas_marg} that for $P_X$-almost sure $x^*$ and each $S^* \subset N$ the map $x \mapsto \Delta_i(S^*,x; x^*,f)$ is $P_X$-measurable. Since $P_i^{(h)}$ is discrete, we conclude that the map $(S,x) \mapsto \Delta_i(S,x; x^*,f)$ is measurable with respect to $P_i^{(h)} \otimes P_X$. Furthermore, we have
\begin{equation}
\begin{aligned}
\int |\Delta_i(S,x;x^*,f)|^p  \, [P_i^{(h)} \otimes P_X](dS,dx) &= \sum_{S\subseteq N} w_i(S,N) \int |f(x^*_{S\cup \{i\}},x_{-S\cup \{i\}})-f(x^*_S,x_{-S})|^p P_X(dx) \\
\leq 2^p &\cdot \bar{w}_{i,N}  \cdot \sum_{S\subseteq N} \int |f(x^*_S,x_{-S})|^p P_X(dx) = 2^p \cdot \bar{w}_{i,N}  \cdot \nu_{x_*}^{(p)}(f),
\end{aligned}
\end{equation}
which proves \eqref{Delta_bound}.

Since $f \in [f] \in L^1(\tilde{P}_X)$, by Lemma \ref{lmm::game_meas_marg} there exists a Borel set $U^{(f)}_1$ of $P_X$-probability $1$ such that the marginal game $\vpdp(\cdot;x^*,X,f)$ and the game value $h[N,\vpdp(\cdot;x^*,X,f)]$ are well-defined for all $x^* \in U_1^{(f)}$.

Let $x^* \in U^{(f)}_1$. Recalling that 
\[
    \vpdp( S; x^*,X,f ) = \E\left[ f(x^*_S,X_{-S}) \right]
    =\int f(x^*_S,X_{-S}(\omega))\,\P(d\omega) =\int f(x^*_S,x_{-S})\,P_{X_S}(dx_{S}) ,
\]
we obtain
    \begin{align*}
        &h_i[N,\vpdp(\cdot;x^*,X,f)] = \sum_{S \subseteq N \setminus \{i\}} w_i(S,N) \left[ \vpdp(S \cup \{i\};x^*,X,f) - \vpdp( S;x^*,X,f ) \right]\\
        &=\int \left[ \vpdp(S \cup \{i\}; x^*,X,f) - \vpdp( S;x^*,X,f ) \right]P_i^{(h)}(dS)\\
        &=\int\int f(x^*_{S\cup\{i\}},x_{-(S\cup \{i\})}) P_{X_{-(S \cup \{i\})}}(dx_{-(S \cup \{i\})}) P_i^{(h)}(dS)  - \int\int f(x^*_S,x_{-S}) P_{X_S}(dx_S) P_i^{(h)}(dS) .
    \end{align*}
Since the integrands in the first and second terms of the last expression do not explicitly depend on $X_{S\cup \{i\}}$ and $X_S$, respectively, we can write
    \begin{align*}
        h_i[N,\vpdp]
        &=\int\int f(x^*_{S\cup\{i\}},x_{-(S\cup \{i\})})P_i^{(h)}(dS)P_X(dx) - \int\int f(x^*_S,x_{-S}) P_i^{(h)}(dS)P_X(dx).
    \end{align*}
Putting the two integrals together we obtain \eqref{eq::h_as_expectation}.

For \eqref{eq::h_approx}, we set 
\[
\displaystyle \mathcal{E}_K := \frac{1}{K}\sum_{k=1}^K \Delta_i(\mathcal{S}^{(k)},X^{(k)};x^*,f) - h_i[N,\vpdp(\cdot;x^*,X,f)]
\]
and observe that $\E[\mathcal{E}_K]=0$. Since the samples $(\mathcal{S}^{(k)},X^{(k)})$ are independent and identically distributed, the average converges in probability to $h_i[N,\vpdp(\cdot;x^*,X,f)]$ by the weak law of large numbers. 

Suppose now that $f \in [f] \in L^2(\tilde{P}_X)$. Then for the error rate, we have
\[
    \|\mathcal{E}_K\|_{L^2(\P)}^2=Var\left[\frac{1}{K}\sum_{k=1}^K \Delta_i(\mathcal{S}^{(k)},X^{(k)};x^*,f)\right]=\frac{1}{K}Var[\Delta_i(\mathcal{S}^{(1)},X^{(1)};x,f)]\le \frac{4}{K} \cdot \bar{w}_{i,N} \cdot \nu^{(2)}_{x_*}(f),
\]
which implies that $\mathcal{E}_K$ converges to zero in $L^2(\P)$ at a rate of $1/\sqrt{K}$.
\end{proof}

Recall that $\Delta_i(\mathcal{S},X;x,f)$ having finite variance is not a necessary condition for the weak law of large numbers. Convergence still occurs but at a slower rate. In either case, equation \eqref{eq::h_approx} informs us on how to approximate the game value $h[N,\vpdp]$ in practice. This leads us to design the following sampling algorithm:

\begin{algorithm}
    \SetAlgoLined 
    \KwIn{Observation $x^*\in \R^n$, model $f$, dataset $\bar{D}_X=\{x^{(k)}\}_{k=1}^K$.
    }
    \KwOut{MC estimate $\hat{h}[N,\vpdp]$ of $h[N,\vpdp]$.
    }
    \BlankLine
    \For{$i$ in $\{1,\dots,n\}$}
    {
        $\hat{h}_i := 0$;\\
        \For{$k$ in $\{1,\dots,K=|\bar{D}_X|\}$}
        {
            Select the observation $x^{(k)}$ from $\bar{D}_X$;\\
            Draw a random coalition $S^{(k)}$ from the distribution $P_i^{(h)}$;\\
            $\hat{h}_i :=\hat{h}_i + \Delta_i(x^{(k)},S^{(k)};x^*,f)$;
        }
        $\hat{h}_i := \hat{h}_i/K$;
    }
    \Return{$\hat{h}[N,\vpdp] := \big(\hat{h}_1,\dots,\hat{h}_n\big)$;}
\caption{Monte Carlo sampling for linear game values}\label{algo_lingamevalue}\label{algo_game_marg}
\end{algorithm}

\begin{remark}\rm\label{rm::emp_marg_lingameval}
In some applications one may be interested in computing the empirical marginal game value $h[N,\hat{v}^{\ME}(\cdot;x^*,\bar{D}_X,f)]$ whose computational complexity is $O(2^n \cdot |\bar{D}_X|)$. In this case, even if the background dataset is small, computing the empirical game value directly becomes infeasible for large $n$. One can then estimate the empirical marginal game value using an adjusted version of Algorithm \ref{algo_game_marg}. The adjustments are as follows. The input to the algorithm must contain the number $\tilde{K}$ of MC iterations, which is now independent of the number of samples in the background dataset $\bar{D}_X$. Line 3 must contain the for-loop over $k \in \{1,\dots,\tilde{K}\}$ and line 4 must be replaced with ``Draw a random observation $x^{(k)}$ from the empirical distribution $P_{\bar{D}_X}$''.

The value produced by the adjusted Algorithm \ref{algo_game_marg} is the estimation of $h[N,\hat{v}^{\ME}(\cdot;x^*,\bar{D}_X,f)]$ with a mean squared error of estimation proportional to $O(\tilde{K}^{-1})$, which is independent of  $|\bar{D}_X|$. Since the number $\tilde{K}$ of MC samples is not limited by $|\bar{D}_X|$, one can achieve arbitrary estimation accuracy by indefinitely sampling from $P_i^h \otimes P_{\bar{D}_X}$. Finally, we note that the difference between the empirical marginal game value and the true marginal game value in $L^2(\PP)$ is bounded by $O(|\bar{D}_X|^{-1/2})$ as discussed in Lemma \ref{lmm::error_emp_marg_estimator}.
\end{remark}

\subsection{Sampling for marginal quotient game values}

As mentioned in Section \ref{sec::preliminaries}, group explainers can have reduced complexity compared to that of linear game values, but it would still be considerably high for any practical implementation. Following the analysis presented above for linear game values, we propose a similar theorem and algorithm for quotient game values.

We make the same assumptions for the coefficients $w_j(A,M)$ of $h_{S_j}^{\cP}$, $j\in M$, as before. Thus, stating that \eqref{hyp::coeff_nonneg}-\eqref{hyp::coeff_sumtoone} hold for the coefficients of $h_{S_j}^{\cP}$ we mean that $w_j(A,M)\ge 0$ for any $A\subseteq M\setminus \{j\}$, and $\sum_{A\subseteq M\setminus \{j\}}w_j(A,M)=1$. This leads to the following definition.

\begin{definition}\label{def::P_hP}
    Let $\cP=\{S_1,\dots,S_m\}$. Suppose \eqref{hyp::coeff_nonneg}-\eqref{hyp::coeff_sumtoone} hold for the coefficients of a quotient game value element $h_{S_j}^{\cP}$, as given in Definition \ref{def::quotientexpl}. For each $j \in M$ we define the discrete probability measure $P_j^{(h,\cP)}$ that satisfies $P_j^{(h,\cP)}(A) = w_j(A,M), A \subseteq M\setminus \{j\}$.
\end{definition}

\begin{definition}
Let $f\in [f] \in L^p(\tilde{P}_X)$, $U^{(f)}_p \in \B(\R^n)$ be a set of $P_X$-probability $1$ as in Lemma \ref{lmm::game_meas_marg}, and $\cP=\{S_1,\dots,S_m\}$ a partition of $N$. Set
\begin{equation}\label{param_marg_norm_quot}
\nu_{x^*}^{(p)}(f,\cP) := \sum_{A \subseteq M, Q_A=\cup_{\alpha \in A} S_{\alpha}} \int |f(x^*_{Q_A},x_{-Q_A})|^p P_{X_{-Q_A}}(dx_{-Q_A}) < \infty, \quad  x^* \in U_p^{(f)}.
\end{equation}
\end{definition}

We now state the equivalent of Theorem \ref{thm::gamevalue_as_expectation} for quotient game values:

\begin{theorem}\label{thm::quotientexplainer_as_expectation}
    Let $f \in [f] \in L^p(\tilde{P}_X)$, $p \geq 1$. Let $\cP$ and $h^{\cP}$ be as in Definition \ref{def::quotientexpl}, and suppose that the coefficients $w_j(A,M)$ of $h_{S_j}^{\cP}$, $j\in M$, satisfy \eqref{hyp::coeff_nonneg}-\eqref{hyp::coeff_sumtoone}.

\begin{itemize}
    \item [$(i)$] For $P_X$-almost sure $x^* \in \R^n$, the map
\begin{equation}\label{Delta_quot_def}
(A,x) \mapsto \Delta_j(A,x;x^*,f):=f(x^*_{\cup_{\alpha \in A\cup \{j\}}S_{\alpha}},x_{-\cup_{\alpha \in A\cup \{j\}}S_{\alpha}})-f(x^*_{\cup_{\alpha \in A}S_{\alpha}},x_{-\cup_{\alpha \in A}S_{\alpha}}),
\end{equation}
where $A \subseteq M \setminus \{i\}$, is $ P_j^{(h,\cP)} \otimes P_X $-measurable and $p$-power integrable satisfying the bound
\begin{equation}\label{Delta_bound_quot}
\int |\Delta_j(A,x;x^*,f)|^p  \, [P_j^{(h,\cP)} \otimes P_X](dA,dx) \leq  2^p \cdot \bar{w}_{j,M} \cdot \nu_{x^*}^{(p)}(f,\cP).
\end{equation}

\item [$(ii)$] For $P_X$-almost sure $x^*\in \R^n$ the marginal quotient explainer $h_{S_j}^{\cP}(x^*;X,f,\vpdp)$ is well-defined and can be written as an expectation with respect to the product measure $P_j^{(h,\cP)}\otimes P_X$. Specifically,
\begin{equation}\label{eq::h^P_as_expectation}
    h_{S_j}^{\cP}(x^*;X,f,\vpdp)=\int \Delta_j(A,x;x^*,f)\,[P_j^{(h,\cP)}\otimes P_X](dA,dx)
\end{equation}

Consequently, taking random samples $(\mathcal{A}^{(k)},X^{(k)})\sim P_j^{(h,\cP)}\otimes P_X$, $k \in \{1,2,\dots,K\}$, we have
    \begin{equation}\label{eq::h^P_approx}
        \frac{1}{K}\sum_{k=1}^K \Delta_j(\mathcal{A}^{(k)},X^{(k)};x^*,f) = h_{S_j}^{\cP}(x^*;X,f,\vpdp) + \mathcal{E}_K^{\cP},
    \end{equation}
    where $\mathcal{E}_K^{\cP}\to 0$ in probability as $K\to \infty$. In addition, if $f \in [f] \in L^2(\tilde{P}_X)$, then $\mathcal{E}_K^{\cP} \to 0$ in $L^2(\P)$ with rate $O(K^{-1/2})$.
\end{itemize}
\end{theorem}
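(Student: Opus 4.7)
The plan is to mimic the proof of Theorem \ref{thm::gamevalue_as_expectation} essentially verbatim, exploiting the observation that the quotient game is nothing but a reparameterization of the marginal game in which the ``coalitions'' are now unions $Q_A = \cup_{\alpha \in A} S_\alpha$ of groups rather than arbitrary subsets of $N$. Concretely, for any $A \subseteq M$, $Q_A$ is a subset of $N$, and the identity $v^{\ME,\cP}(A;x^*,X,f) = \vpdp(Q_A;x^*,X,f) = \int f(x^*_{Q_A}, x_{-Q_A}) P_{X_{-Q_A}}(dx_{-Q_A})$ holds by Definition \ref{def::quotientgame}. Once this is in place, everything reduces to indexing manipulations.

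For part (i), I first establish measurability: since $f \in [f] \in L^p(\tilde P_X)$, Lemma \ref{lmm::game_meas_marg} supplies a $P_X$-full set $U^{(f)}_p$ such that, for each $x^* \in U^{(f)}_p$ and each $T \subsetneq N$, the section $x_{-T} \mapsto f(x^*_T, x_{-T})$ is $P_{X_{-T}}$-measurable and $p$-integrable. Applying this to $T = Q_A$ and $T = Q_{A \cup \{j\}} = Q_A \cup S_j$ for each $A \subseteq M \setminus \{j\}$ makes the maps $x \mapsto f(x^*_{Q_A},x_{-Q_A})$ and $x \mapsto f(x^*_{Q_A \cup S_j}, x_{-(Q_A \cup S_j)})$ $P_X$-measurable. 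Discreteness of $P_j^{(h,\cP)}$ then gives the joint $P_j^{(h,\cP)} \otimes P_X$-measurability of $(A,x) \mapsto \Delta_j(A,x;x^*,f)$. The $L^p$ bound \eqref{Delta_bound_quot} is obtained by the same convexity estimate $|a-b|^p \le 2^{p-1}(|a|^p + |b|^p)$ as in Theorem \ref{thm::gamevalue_as_expectation}: after integrating term by term against $P_j^{(h,\cP)} \otimes P_X$, each set $Q_A$ (for $A \subseteq M$) appears at most twice in the resulting sum, which accounts for a factor of $2$; combined with $2^{p-1}$ this yields $2^p$, and factoring out $\bar w_{j,M} := \max_{A \subseteq M \setminus \{j\}} w_j(A,M)$ produces the bound $2^p \cdot \bar w_{j,M} \cdot \nu^{(p)}_{x^*}(f,\cP)$.

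For part (ii), expanding the definition of the quotient explainer and using $Q_{A \cup \{j\}} = Q_A \cup S_j$,
\begin{equation*}
h^{\cP}_{S_j}(x^*;X,f,\vpdp) = \sum_{A \subseteq M \setminus \{j\}} w_j(A,M)\bigl[\vpdp(Q_A \cup S_j;x^*,X,f) - \vpdp(Q_A;x^*,X,f)\bigr].
\end{equation*}
Rewriting the weighted sum as an integral against $P_j^{(h,\cP)}$ and expressing each $\vpdp(Q; x^*, X, f)$ as $\int f(x^*_Q, x_{-Q})\,P_{X_{-Q}}(dx_{-Q})$, I then use the key observation (as in the proof of Theorem \ref{thm::gamevalue_as_expectation}) that the integrands do not depend on the coordinates indexed by $Q$, so that $P_{X_{-Q}}(dx_{-Q})$ may be replaced by $P_X(dx)$ without changing the value. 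Combining the two terms under a single integral against $P_j^{(h,\cP)} \otimes P_X$ yields \eqref{eq::h^P_as_expectation}.

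Finally, the Monte Carlo statement \eqref{eq::h^P_approx} is a direct consequence of the weak law of large numbers applied to the i.i.d.\ samples $\Delta_j(\mathcal{A}^{(k)},X^{(k)};x^*,f)$, which are integrable by part (i) with $p=1$; and when $f \in [f] \in L^2(\tilde P_X)$, the variance bound $\mathrm{Var}[\Delta_j(\mathcal{A}^{(1)},X^{(1)};x^*,f)] \le 4 \bar w_{j,M} \nu^{(2)}_{x^*}(f,\cP)$ obtained from part (i) with $p=2$ gives the $O(K^{-1/2})$ rate in $L^2(\P)$. I don't expect any genuine obstacle — the only care needed is the indexing bookkeeping to ensure that each subset $Q_A \subseteq N$ arising from $A \subseteq M$ lies in the $P_X$-full set $U^{(f)}_p$ supplied by Lemma \ref{lmm::game_meas_marg} (this is automatic since the set is indexed by $x^*$, not by $Q$, and the collection of relevant $Q_A$ is finite).
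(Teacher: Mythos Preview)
Your proposal is correct and follows essentially the same approach as the paper's own proof, which explicitly says ``the proof is similar to that of Theorem \ref{thm::gamevalue_as_expectation}'' and then sketches precisely the steps you describe: measurability via Lemma \ref{lmm::game_meas_marg}, the $L^p$ bound via the definition of the product measure and $\nu_{x^*}^{(p)}(f,\cP)$, the representation \eqref{eq::h^P_as_expectation} from the definitions, and the weak law of large numbers for \eqref{eq::h^P_approx} with the variance bound $4\,\bar{w}_{j,M}\,\nu_{x^*}^{(2)}(f,\cP)$ for the $L^2$ rate. One minor remark: in your counting argument for the $2^p$ constant, each $Q_A$ (for $A\subseteq M$) actually appears exactly once rather than ``at most twice'' in the combined sum (the sets $Q_{A\cup\{j\}}$ always contain $S_j$ while the sets $Q_A$ with $A\subseteq M\setminus\{j\}$ never do), so the sharp constant from your convexity estimate would be $2^{p-1}$; your stated bound $2^p$ is still valid and matches the theorem.
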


\begin{proof}
    The proof is similar to that of Theorem \ref{thm::gamevalue_as_expectation}. Specifically, the measurability of $(A,x) \mapsto \Delta_j(A,x;x^*,f)$ for $P_X$-almost sure $x^*$ follows from Lemma \ref{lmm::game_meas_marg} while the bound \eqref{Delta_bound_quot} follows from the definition of the product measure $P_j^{(h,\cP)} \otimes P_X$ and that of $\nu_{x^*}^{(p)}(f,\cP)$. The representation \eqref{eq::h^P_as_expectation} follows from the definition of the game value $h$ and that of $P_j^{(h,\cP)} \otimes P_X$. The equation \eqref{eq::h^P_approx} is a consequence of the weak law of large numbers. Finally, if $f \in [f] \in L^2(\tilde{P}_X)$, then for $P_X$-almost sure $x^*$ the error term $\EE^{\cP}_K$ is bounded in $L^2(\PP)$ by $[K^{-1} \cdot Var(\Delta_j(\mathcal{A}^{(1)},X^{(1)};x^*,f))]^{\frac{1}{2}}$ with $Var(\Delta_j(\mathcal{A}^{(1)},X^{(1)};x^*,f))$ bounded by $4 \cdot \bar{w}_{j,M} \cdot \nu_{x^*}^{(2)}(f,\cP)$.
\end{proof}

\begin{remark}\rm
The assumption of Theorem \ref{thm::quotientexplainer_as_expectation}, that the model $f \in [f] \in L^p(\tilde{P}_X)$, can be relaxed by requiring that $f \in [f] \in L^p(\tilde{P}_{X,\cP})$.
\end{remark}

\begin{remark}\rm
    If the partition $\cP=\{S_1,\dots,S_m\}$ is based on dependencies, then the marginal and conditional quotient games coincide when group predictors $X_{S_1},X_{S_2},\dots,X_{S_m}$ are independent. This means that one can use \eqref{eq::h^P_approx} to approximate $h_{S_j}^{\cP}(x;X,f,\vce)$ and obtain group explanations that take into account both the model structure and joint distribution of the predictors.
\end{remark}

The associated algorithm with the above theorem is similar to Algorithm \ref{algo_lingamevalue}. The difference between the two is the perspective on the players. In Algorithm \ref{algo_lingamevalue} the players are the predictors, while in Algorithm \ref{algo_quotgamevalue} the players are predictor groups. This is why the latter algorithm has the partition $\cP$ as an extra input requirement.

\begin{algorithm}
    \SetAlgoLined 
    \KwIn{Observation $x^*\in \R^n$, model $f$, partition $\cP=\{S_1,\dots,S_m\}$, dataset $\bar{D}_X=\{x^{(k)}\}_{k=1}^K$.
    }
    \KwOut{MC estimate $\hat{h}^{\cP}$ of $h^{\cP}(x^*;X,f,\vpdp)$.
    }
    \BlankLine
    \For{$j$ in $\{1,\dots,m\}$}
    {
        $\hat{h}_j^{\cP} := 0$;\\
        \For{$k$ in $\{1,\dots,K=|\bar{D}_X|\}$}
        {
            Select the observation $x^{(k)}$ from $\bar{D}_X$;\\
            Draw a random coalition $A^{(k)}$ from the distribution $P_j^{(h,\cP)}$;\\
            $\hat{h}_j^{\cP} := \hat{h}_j^{\cP} + \Delta_j(A^{(k)},x^{(k)};x^*,f)$;
        }
        $\hat{h}_j^{\cP} := \hat{h}_j^{\cP}/K$;
    }
    \Return{$\hat{h}^{\cP} = \big(\hat{h}_1^{\cP},\dots,\hat{h}_m^{\cP}\big)$;}
\caption{Monte Carlo sampling for quotient game values}\label{algo_quotgamevalue}\label{algo_quot_game_marg}
\end{algorithm}

\begin{remark}\rm\label{rm::emp_marg_quotgameval}
As before, to approximate the empirical marginal quotient game value $h[M,\hat{v}^{\ME,\cP}(\cdot;x^*,\bar{D}_X,f)]$ one needs to adjust Algorithm \ref{algo_quot_game_marg} as follows. The input to the algorithm must contain the number $\tilde{K}$ of MC iterations, specified independently of $|\bar{D}_X|$. Line 3 must contain the for-loop over $k \in \{1,\dots,\tilde{K}\}$ and line 4 must be replaced with ``Draw a random observation $x^{(k)}$ from the empirical distribution $P_{\bar{D}_X}$''. 

The value produced by the adjusted Algorithm \ref{algo_quot_game_marg} is the estimate of $h[M,\hat{v}^{\ME,\cP}(\cdot;x^*,\bar{D}_X,f)]$ with a mean squared error of $O(\tilde{K}^{-1})$. Finally, as in Lemma \ref{lmm::error_emp_marg_estimator}, it can be shown that the difference between the empirical marginal quotient value and the marginal quotient value in $L^2(\PP)$ is bounded by $O(|\bar{D}_X|^{-1/2})$.
\end{remark}

\subsection{Sampling for marginal coalitional values}

Quotient game explainers have several advantages over linear game values, especially when the partition $\cP$ is based on dependencies. However, the drawback is when the user is interested in explaining the contribution of each single predictor to the model output, while also taking into account the groups. This information can only be provided by coalitional explainers  as defined in Definition \ref{def::coalvalue}.

The main question we want to answer in this section is how to set up an MC sampling algorithm for a coalitional value $g[N,v,\cP]$. If one can write the element $g_i[N,v,\cP]$ in the form \eqref{lingamevalue} then Algorithm \ref{algo_lingamevalue} can be used to carry out the approximation. Given the form of some notable coalitional values such as the Owen and Banzhaf-Owen values, we assume the following form for $g_i[N,v,\cP]$:
\begin{equation}\label{coalvalform}
    g_i[N,v,\cP] = \sum_{A\subseteq M\setminus \{j\}}\sum_{T\subseteq S_j\setminus \{i\}} w_j^{(1)}(A,M) w_i^{(2)}(T,S_j) \left[ v(Q_A \cup T \cup \{i\}) - v(Q_A \cup T) \right], \quad i\in S_j,
\end{equation}
where $Q_A=\cup_{\alpha \in A}S_{\alpha}$. As with the previous subsections, we assume that $\{w^{(1)}_j(A,M)\}_{A\subseteq M\setminus \{j\}}$ and $\{w^{(2)}_i(T,S_j)\}_{T\subseteq S_j \setminus \{i\}}$ are probabilities. We again state this by saying that these sets of coefficients satisfy \eqref{hyp::coeff_nonneg}-\eqref{hyp::coeff_sumtoone}. 

Equation \eqref{coalvalform} can be written in the form of \eqref{lingamevalue}. However, this will present certain practical difficulties for the sampling procedure. For example, observe that \eqref{coalvalform} can be written as
\[
    g_i[N,v,\cP]=\sum_{S \subseteq N\setminus \{i\}} W_i(S,N,\cP)\left[ v(S \cup \{i\}) - v(S) \right],
\]
where for $i \in S_j$ we have
\[
    W_i(S,N,\cP) = 
\left\{ 
\begin{aligned}
&w^{(1)}_j(A,M)\cdot w^{(2)}_i(T,S_j), \text{if $S=\cup_{\alpha \in A} S_{\alpha} \cup T,\ A \subseteq M \setminus \{j\},\ T \subseteq S_j\setminus\{i\}$}\\
&0, \text{otherwise}.\\
\end{aligned}
\right.
\]
Thus, sampling based on the coefficients $W_i(S,N,\cP)$, $S \subseteq N$ may not be a trivial task as many coalitions $S \subseteq N$ have zero probability. For this reason, we adjust the sampling procedure to take into account the coalitional structure of \eqref{coalvalform}; see the discussion on the two-step property of coalitional values in \cite[\S 5.4.1]{grouppaper} Specifically, given a game $v$ on $N$, a partition $\cP=\{S_1,\dots,S_m\}$ and $i \in S_j$, the formulation \eqref{coalvalform} motivates the following two-step sampling procedure. One first samples a subset of $A \subseteq M$ with probability defined by the coefficients $w^{(1)}_j(A,M)$, which induces the union $Q_A=\cup_{\alpha \in A} S_{\alpha}$ of the elements of $\cP$. Then, one samples a set $T \subseteq S_j \setminus \{i\}$ with probability defined by the coefficients $w^{(2)}_i(T,S_j)$. Then forming the set $S=Q_A \cup T$ and evaluating the quantity $v(S \cup \{i\})-v(S)$ yields a sample of a random variable whose expectation is given by \eqref{coalvalform}. More formally, we have the following.

\begin{definition}\label{def::P_S_j}
    Let $g[N,v,\cP]$ be as in \eqref{coalvalform} and suppose $\{w^{(1)}_j(A,M)\}_{A\subseteq M\setminus \{j\}}$ and $\{w^{(2)}_i(T,S_j)\}_{T\subseteq S_j \setminus \{i\}}$ satisfy \eqref{hyp::coeff_nonneg}-\eqref{hyp::coeff_sumtoone}. For each $j \in M$, we define $P^{(g,\cP)}_j$ to be the probability measure satisfying $P_j^{(g,\cP)}(A)=w^{(1)}_j(A,M)$, $A\subseteq M \setminus \{j\}$, and for each $i \in S_j$, $S_j \in \cP$, we define the probability measure $P_i^{(g,S_j)}$ such that $P_i^{(g,S_j)}(T) = w^{(2)}_i(T,S_j)$, $T \subseteq S_j \setminus \{i\}$.
\end{definition}

First, we state an auxiliary lemma that represents a coalitional game value as an expectation of the random variable defined on the set of coalitions.
\begin{lemma}\label{coal_val_expect_prelim}
Let $g[N,v,\cP]$ be as in \eqref{coalvalform}, $\cP=\{S_1,\dots,S_m\}$, and suppose $i \in S_j$. Then
\[
g_i[N,v,\cP] = \int \big(v( \cup_{\alpha \in A}S_{\alpha} \cup T \cup \{i\} )-v( \cup_{\alpha \in A}S_{\alpha} \cup T )\big) [P_j^{(g,\cP)} \otimes P_i^{(g,S_j)} ](dA,dT).
\]
\end{lemma}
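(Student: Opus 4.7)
The plan is to recognize the double sum in the definition \eqref{coalvalform} of $g_i[N,v,\cP]$ as an iterated integral against the two discrete probability measures $P_j^{(g,\cP)}$ and $P_i^{(g,S_j)}$ from Definition \ref{def::P_S_j}, and then to collapse it into a single integral against the product measure $P_j^{(g,\cP)} \otimes P_i^{(g,S_j)}$.

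Concretely, I would first fix $i\in S_j$ and rewrite \eqref{coalvalform} by replacing the coefficients with their measure-theoretic counterparts: $w^{(1)}_j(A,M) = P_j^{(g,\cP)}(\{A\})$ for each $A\subseteq M\setminus\{j\}$, and $w^{(2)}_i(T,S_j) = P_i^{(g,S_j)}(\{T\})$ for each $T\subseteq S_j\setminus\{i\}$. Introducing the shorthand
\[
\delta_{i,j}(A,T) := v\big(\cup_{\alpha\in A}S_{\alpha}\cup T\cup\{i\}\big) - v\big(\cup_{\alpha\in A}S_{\alpha}\cup T\big),
\]
equation \eqref{coalvalform} becomes
\[
g_i[N,v,\cP] = \sum_{A\subseteq M\setminus\{j\}}\sum_{T\subseteq S_j\setminus\{i\}} \delta_{i,j}(A,T)\,P_j^{(g,\cP)}(\{A\})\,P_i^{(g,S_j)}(\{T\}),
\]
which is exactly the iterated integral $\int\int \delta_{i,j}(A,T)\,P_i^{(g,S_j)}(dT)\,P_j^{(g,\cP)}(dA)$ against the two measures on the finite sets $2^{M\setminus\{j\}}$ and $2^{S_j\setminus\{i\}}$.

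Next I would invoke Fubini's theorem to express this iterated integral as a single integral against the product measure $P_j^{(g,\cP)}\otimes P_i^{(g,S_j)}$ on $2^{M\setminus\{j\}}\times 2^{S_j\setminus\{i\}}$. This is where one would normally worry, but both measures are supported on finite sets, so $\delta_{i,j}$ is trivially measurable and integrable with respect to the product $\sigma$-algebra, and Fubini applies without any regularity assumption on $v$; the identity reduces to a finite rearrangement of terms. This yields
\[
g_i[N,v,\cP] = \int \delta_{i,j}(A,T)\,[P_j^{(g,\cP)}\otimes P_i^{(g,S_j)}](dA,dT),
\]
which is the claimed representation. There is no real obstacle in this proof; the only point worth stating carefully is that the assumption \eqref{hyp::coeff_nonneg}-\eqref{hyp::coeff_sumtoone} on both families of coefficients is exactly what makes $P_j^{(g,\cP)}$ and $P_i^{(g,S_j)}$ bona fide probability measures, so that the product measure in Definition \ref{def::P_S_j} is well-defined and the above manipulation is legitimate.
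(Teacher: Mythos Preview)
Your proposal is correct and is exactly the natural argument: rewrite the double sum in \eqref{coalvalform} using Definition \ref{def::P_S_j} and identify it with the integral against the product of the two finite discrete measures. The paper itself does not spell out a proof of this lemma (it is stated as an auxiliary fact following directly from the definitions), so your write-up simply makes explicit the finite-sum/Fubini step that the authors leave implicit.
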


The above lemma states that, given a (not necessarily cooperative) game $v$, the coalitional value $g_i[N,v,\cP]$ can be viewed as an expected value with respect to a product of two probability measures. For the marginal game the above result must be adjusted to include dependence on predictors. Specifically, we can write the marginal coalitional value $g_i[N,\vpdp(\cdot;x^*,X,f),\cP]$ as an expected value with respect to a product of three probability measures as shown below.

\begin{definition}
Let $f\in [f] \in L^p(\tilde{P}_X)$, $U^{(f)}_p \in \B(\R^n)$ be a set of $P_X$-probability $1$ as in Lemma \ref{lmm::game_meas_marg}, and $\cP=\{S_1,\dots,S_m\}$ the partition of $N$. Let $x^* \in U_p^{(f)}$. For each $S_j \in \cP$ we set
\begin{equation}\label{param_marg_norm_coal}
\nu_{x^*}^{(p)}(f,\cP,S_j) := \sum_{A \subseteq M \setminus \{j\}, Q_A=\cup_{\alpha \in A} S_{\alpha}} \sum_{T \subseteq S_j } \int |f(x^*_{Q_A \cup T},x_{-(Q_A \cup T)})|^p P_{X_{-(Q_A \cup T)}}(dx_{-(Q_A \cup T)}) < \infty.
\end{equation}
\end{definition}

\begin{theorem}\label{thm::coalvalue_as_expectation}
Let $f \in [f] \in L^p(\tilde{P}_X)$. Let $\cP$ be a partition of $N$, $g[N,v,\cP]$ a coalitional value of the form \eqref{coalvalform}, and suppose that the coefficients $w^{(1)}_j(A,M)$ and $w^{(2)}_i(T,S_j)$ of $g_i$, $i\in N$ and $j\in M$, satisfy \eqref{hyp::coeff_nonneg}-\eqref{hyp::coeff_sumtoone}. 

\begin{itemize}
   \item [$(i)$] For $P_X$-almost sure $x^* \in \R^n$, for each $i \in S_j$, $S_j \in \cP$,  the map
\begin{equation}\label{Delta_coal_def}
\begin{aligned}
(A,T,x) \mapsto & \Delta_i(A,T,x;x^*,f)\\
&:=f(x^*_{(\cup_{\alpha\in A}S_{\alpha})\cup T \cup \{i\}},x_{-(\cup_{\alpha\in A}S_{\alpha})\cup T \cup \{i\}})-f(x^*_{(\cup_{\alpha\in A} S_{\alpha})\cup T},x_{-(\cup_{ \alpha \in A} S_{\alpha})\cup T}),
\end{aligned}
\end{equation}
with $A \subseteq M \setminus \{i\}$ and $T \subseteq S_j \setminus \{i\}$, is $P_j^{(g,\cP)} \otimes P_i^{(g,S_j)} \otimes  P_X $-measurable and $p$-power integrable satisfying the bound
\begin{equation}\label{Delta_bound_coal}
\begin{aligned}
& \int |\Delta_i(A,T,x;x^*,f)|^p  \, [  P_j^{(g,\cP)} \otimes P_i^{(g,S_j)} \otimes P_X](dA,dT,dx) \\
& \quad \leq  2^p \cdot \nu_{x^*}^{(p)}(f,\cP,S_j)  \max_{A \subset M \setminus \{j\}, T \subseteq S_j \setminus\{i\}} \big(w_j^{(1)}(A,M) w_i^{(2)}(T,S_j)\big).
\end{aligned}
\end{equation}

\item [$(ii)$] For $P_X$-almost sure $x^*\in \R^n$ the marginal coalitional value $g_i[N,\vpdp(\cdot; x^*,X,f),\cP]$ is well-defined for each $i \in S_j$, $S_j \in \cP$, and can be written as an expectation with respect to the product probability measure $P_j^{(g,\cP)} \otimes P_i^{(g,S_j)} \otimes P_X$. Specifically,
\begin{equation}\label{eq::g_as_expectation}
    g_i[N,\vpdp(\cdot; x^*,X,f),\cP]=\int \Delta_j(A,T,x;x^*,f)\,[P_j^{(g,\cP)} \otimes P_i^{(g,S_j)} \otimes P_X](dA,dT,dx)
\end{equation}

Consequently, taking random samples $(\mathcal{A}^{(k)}, \mathcal{T}^{(k)},X^{(k)})\sim P_j^{(g,\cP)}\otimes P_i^{(g,S_j)}\otimes P_X$, $k \in \{1,2,\dots,K\}$, we have
\begin{equation}\label{eq::g_approx}
    \frac{1}{K}\sum_{k=1}^K \Delta_i(\mathcal{A}^{(k)},\mathcal{T}^{(k)},X^{(k)};x^*,f) = g_i[N,\vpdp(\cdot; x^*,X,f),\cP] + \mathcal{E}_K^{g_i},
\end{equation}
where $\mathcal{E}^{g_i}_K \to 0$ in probability as $K\to \infty$. In addition, if $f \in [f] \in L^2(\tilde{P}_X)$, then $\mathcal{E}_K^{g_i} \to 0$ in $L^2(\P)$ with rate $O(K^{-1/2})$.
\end{itemize}
\end{theorem}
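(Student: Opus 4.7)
The plan is to follow the same three-step template used in the proofs of Theorems \ref{thm::gamevalue_as_expectation} and \ref{thm::quotientexplainer_as_expectation}, upgraded from a double to a triple product measure. First I would establish the measurability and integrability bound of part $(i)$, then use them together with Lemma \ref{coal_val_expect_prelim} to derive the expectation representation in part $(ii)$, and finally apply the weak law of large numbers plus the $p=2$ variance bound to obtain the Monte Carlo error estimate.

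For $(i)$, Lemma \ref{lmm::game_meas_marg} supplies a Borel set $U_p^{(f)}$ of full $P_X$-measure on which, for every $R \subseteq N$, the section $x \mapsto f(x^*_R, x_{-R})$ is $P_{X_{-R}}$-measurable and $p$-integrable. Fixing $x^* \in U_p^{(f)}$ and $i \in S_j$, the discreteness of both $P_j^{(g,\cP)}$ and $P_i^{(g,S_j)}$ reduces measurability of $\Delta_i(A,T,x;x^*,f)$ on the triple product to joint measurability of its $x$-sections, which already holds. For the bound \eqref{Delta_bound_coal}, I would expand the integral against $P_j^{(g,\cP)} \otimes P_i^{(g,S_j)} \otimes P_X$ as the double sum $\sum_{A,T} w_j^{(1)}(A,M)\,w_i^{(2)}(T,S_j) \int |f(x^*_{Q_A \cup T \cup \{i\}}, x_{-(Q_A \cup T \cup \{i\})}) - f(x^*_{Q_A \cup T}, x_{-(Q_A \cup T)})|^p\, P_X(dx)$, apply the elementary inequality $|a-b|^p \leq 2^p(|a|^p + |b|^p)$ pointwise, pull out the maximum of the coefficient products, and recognize the remaining sum as $\nu_{x^*}^{(p)}(f,\cP,S_j)$.

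For $(ii)$, I would apply Lemma \ref{coal_val_expect_prelim} to $v = \vpdp(\cdot; x^*, X, f)$, which expresses $g_i[N,\vpdp,\cP]$ as an integral of the marginal-game difference against $P_j^{(g,\cP)} \otimes P_i^{(g,S_j)}$. Substituting $\vpdp(R; x^*, X, f) = \int f(x^*_R, x_{-R})\, P_{X_{-R}}(dx_{-R})$ and using that the integrand is constant in the $x_R$ coordinates to replace $P_{X_{-R}}$ by $P_X$, the two terms become triple iterated integrals that, by Fubini (justified by the $p=1$ bound from $(i)$), collapse into the single integral \eqref{eq::g_as_expectation} against the product $P_j^{(g,\cP)} \otimes P_i^{(g,S_j)} \otimes P_X$.

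For the Monte Carlo error \eqref{eq::g_approx}, I would set $\mathcal{E}_K^{g_i}$ equal to the centered empirical average; since $(\mathcal{A}^{(k)},\mathcal{T}^{(k)},X^{(k)})$ are i.i.d.\ with common mean $g_i[N,\vpdp,\cP]$ by $(ii)$, the weak law of large numbers yields $\mathcal{E}_K^{g_i}\to 0$ in probability. Under $f \in L^2(\tilde{P}_X)$, the $p=2$ case of $(i)$ bounds $Var(\Delta_i(\mathcal{A}^{(1)},\mathcal{T}^{(1)},X^{(1)};x^*,f))$, so independence gives $\|\mathcal{E}_K^{g_i}\|_{L^2(\P)}^2 \leq K^{-1}\,Var(\Delta_i)$, yielding the $O(K^{-1/2})$ rate. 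The main obstacle I anticipate is the bookkeeping in the Fubini step of part $(ii)$: one must carefully justify the replacement of $P_{X_{-R}}$ by $P_X$ on each summand (harmless because of coordinate independence of the integrand) and then invoke the $L^1$ bound from $(i)$ to legitimize collapsing the iterated integrals; beyond this, the argument is a routine triple-product analogue of the proof of Theorem \ref{thm::gamevalue_as_expectation}.
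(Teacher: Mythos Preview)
Your proposal is correct and follows essentially the same approach as the paper, which simply states that the proof mirrors that of Theorem~\ref{thm::gamevalue_as_expectation}: Lemma~\ref{lmm::game_meas_marg} for measurability, the pointwise inequality $|a-b|^p\le 2^p(|a|^p+|b|^p)$ together with the definition of $\nu_{x^*}^{(p)}(f,\cP,S_j)$ for the bound~\eqref{Delta_bound_coal}, the definition of $g$ and the product measure (your use of Lemma~\ref{coal_val_expect_prelim} makes this explicit) plus the Fubini/marginalization step for~\eqref{eq::g_as_expectation}, and the weak law of large numbers with the $p=2$ variance bound for~\eqref{eq::g_approx}. Your write-up is in fact more detailed than the paper's terse proof sketch, and the one point you flag as a potential obstacle---replacing $P_{X_{-R}}$ by $P_X$ and invoking Fubini---is handled exactly as in the proof of Theorem~\ref{thm::gamevalue_as_expectation}.
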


\begin{proof}
The proof is similar to that of Theorem \ref{thm::gamevalue_as_expectation}. The measurability of $(A,T,x) \mapsto \Delta_j(A,T,x;x^*,f)$ for $P_X$-almost sure $x^*$ follows from Lemma \ref{lmm::game_meas_marg} while the bound \eqref{Delta_bound_coal} follows from the definition \eqref{coalvalform} of the coalitional value $g$, the definition of the product measure $P_j^{(g,\cP)} \otimes P_i^{(g,S_j)} \otimes P_X$, and that of $\nu_{x^*}^{(p)}(f,\cP,S_j)$. 

The representation \eqref{eq::h^P_as_expectation} of the marginal coalitional value follows from the definition  \eqref{coalvalform} of  $g$ and that of $P_j^{(g,\cP)} \otimes P_i^{(g,S_j)} \otimes P_X$, while the equation \eqref{eq::g_approx} is a consequence of the weak law of large numbers. Finally, if $f \in [f] \in L^2(\tilde{P}_X)$, then for $P_X$-almost sure $x^*$ the error term $\EE^{g_i}_K$ is bounded in $L^2(\PP)$ by $[K^{-1} \cdot Var(\Delta_j(\mathcal{A}^{(1)},\mathcal{T}^{(1)},X^{(1)};x^*,f))]^{\frac{1}{2}}$ with the variance of $\Delta_j(\mathcal{A}^{(1)},\mathcal{T}^{(1)},X^{(1)};x^*,f)$ bounded by the last term in \eqref{Delta_bound_coal} for $p=2$.
\end{proof}

Equation \eqref{eq::g_approx} instructs us on how to perform the MC sampling for a coalitional value. Due to the two-step property, Algorithm \ref{algo_coalval} below is simply an augmentation of Algorithm \ref{algo_quotgamevalue}.

\begin{algorithm}
    \SetAlgoLined 
    \KwIn{Observation $x^*\in \R^n$, model $f$, partition $\cP=\{S_1,\dots,S_m\}$, dataset $\bar{D}_X=\{x^{(k)}\}_{k=1}^K$.
    }
    \KwOut{MC estimate $\hat{g}$ of $g[N,\vpdp,\cP]$.
    }
    \BlankLine
    \For{$j$ in $\{1,\dots,m\}$}
    {
        \For{$i$ in $S_j$}
        {
            $\hat{g}_i := 0$;\\
            \For{$k$ in $\{1,\dots,K=|\bar{D}_X|\}$}
            {
                Select the observation $x^{(k)}$ from $\bar{D}_X$;\\
                Draw a random coalition $A^{(k)}$ from the distribution $P_j^{(g,\cP)}$;\\
                Draw a random coalition $T^{(k)}$ from the distribution $P_i^{(S_j)}$;\\
                $\hat{g}_i := \hat{g}_i + \Delta_i(A^{(k)},T^{(k)},x^{(k)};x^*,f)$;
            }
            $\hat{g}_i := \hat{g}_i/K$;
        }
    }
    \Return{$\hat{g} = (\hat{g}_1,\dots,\hat{g}_n)$;}
\caption{Monte Carlo sampling for coalitional values}
\label{algo_coalval}
\end{algorithm}

\begin{remark}\rm\label{rm::emp_marg_coalval}
To estimate $g[N,\hat{v}^{\ME}(\cdot;x^*,\bar{D}_X,f),\cP]$ one needs to adjust Algorithm \ref{algo_coalval} as follows. The input to the algorithm must contain the number $\tilde{K}$ of MC iterations, specified independently of $|\bar{D}_X|$. Line 4 must contain the for-loop over $k \in \{1,\dots,\tilde{K}\}$ and line 5 must be replaced with ``Draw a random observation $x^{(k)}$ from the empirical distribution $P_{\bar{D}_X}$''. 

The value produced by the adjusted Algorithm \ref{algo_coalval} is the estimate of $g[N,\hat{v}^{\ME}(\cdot;x^*,\bar{D}_X,f),\cP]$ with a mean squared error of $O(\tilde{K}^{-1})$. Finally, as in Lemma \ref{lmm::error_emp_marg_estimator}, it can be shown that the difference between the empirical marginal coalitional  value and the marginal coalitional value in $L^2(\PP)$ is bounded by $O(|\bar{D}_X|^{-1/2})$.
\end{remark}

\subsection{Sampling for two-step Shapley values}

Continuing the discussion on coalitional values, there are several of them that are not of the form \eqref{coalvalform}. One in particular that is of note is called two-step Shapley, defined in \cite{Kamijo2009} and given by

\begin{equation}\label{eq::twostepshap}
    TSh_i[N,v,\cP] = \varphi_i[S_j,v]+\frac{1}{|S_j|}\big(\varphi_j[M,v^{\cP}]-v(S_j)\big), \quad i\in S_j.
\end{equation}
Observe that two-step Shapley consists of three terms: the Shapley value of the player $i$ when the game $v$ is restricted to the group $S_j$, the quotient game Shapley value of the group $S_j$, and the game itself evaluated at $S_j$. Furthermore, if $|S_j|=1$, \eqref{eq::twostepshap} simply reduces to $\varphi_j[M,v^{\cP}]$.

Although Algorithm \ref{algo_coalval} cannot be directly applied to approximate $TSh_i[N,\vpdp,\cP]$, each individual term of \eqref{eq::twostepshap} can be evaluated via sampling based on Algorithm \ref{algo_lingamevalue} and Algorithm \ref{algo_quotgamevalue}. Given an observation $x^*\in \R^n$, a model $f$, and a background dataset $\bar{D}_X$, we have

\begin{itemize}
    \item[(i)] The value $\varphi_j[M,v^{\ME,\cP}(\cdot; x^*,X,f)]$ can be approximated via Algorithm \ref{algo_quot_game_marg}, where
    \[
        P_j^{(\varphi,\cP)}(A):=\frac{|A|!(m-|A|-1)!}{m!}, \quad A\subseteq M\setminus \{j\}.
    \]
    \item[(ii)] $\vpdp(S_j;x^*,X,f)$ can be approximated by averaging the values of $f(x_{S_j}^*,\cdot)$ over $\bar{D}_X$.
    \item[(iii)] The value $\varphi_i[S_j,\vpdp(\cdot; x^*,X,f)]$ can be approximated using Algorithm \ref{algo_lingamevalue} by adjusting it to accommodate the fact that the game is restricted to $S_j$. Specifically,  Line 5 has to provide the sampling of coalitions from $S_j \setminus \{i\}$ using the probability distribution
        \[
        P_i^{(\varphi,S_j)}(S) := \frac{|S|!(|S_j|-|S|-1)!}{|S_j|!}, \quad S\subseteq S_j\setminus \{i\}.
    \]
\end{itemize}

Combining the above steps leads to Algorithm \ref{algo_twostep} for $TSh_i[N,\vpdp,\cP]$.

\begin{algorithm}
    \SetAlgoLined 
    \KwIn{Observation $x^*\in \R^n$, model $f$, partition $\cP=\{S_1,\dots,S_m\}$, dataset $\bar{D}_X=\{x^{(k)}\}_{k=1}^K$.
    }
    \KwOut{MC estimate $\hat{TSh}$ of $TSh[N,\vpdp,\cP]$.
    }
    \BlankLine
    \For{$j$ in $\{1,\dots,m\}$}
    {
        \For{$i$ in $S_j$}
        {
            \If {$|S_j|==1$}{
                Apply the loop from Algorithm \ref{algo_quot_game_marg} and obtain $\hat{TSh}_i$;
            }
            \Else{
                $\hat{TSh}_i := 0$;\\
                \For{$k$ in $\{1,\dots,K=|\bar{D}_X|\}$}
                {
                    Select the observation $x^{(k)}$ from $\bar{D}_X$;\\
                    $f_j^{(k)}:=f(x_{S_j}^*,x_{-S_j}^{(k)})$\\
                    Draw a random coalition $A^{(k)}$ from the distribution $P_j^{(\varphi,\cP)}$;\\

                    $\Delta_j(A^{(k)},x^{(k)};x^*,f):=f(x_{\cup_{\alpha\in A^{(k)}\cup \{j\}}S_{\alpha}}^*,x_{-\cup_{\alpha\in A^{(k)}\cup \{j\}}S_{\alpha}}^{(k)})-f(x_{\cup_{\alpha\in A^{(k)}}S_{\alpha}}^*,x_{-\cup_{\alpha\in A^{(k)}}S_{\alpha}}^{(k)})$;\\
                    Draw a random coalition $S^{(k)}$ from the distribution $P_i^{(\varphi,S_j)}$;\\
                    
                    $\Delta_i(S^{(k)},x^{(k)};x^*,f):=f(x_{S^{(k)}\cup \{i\}}^*,x_{-S^{(k)}\cup \{i\}}^{(k)})-f(x_{S^{(k)}}^*,x_{-S^{(k)}})$;\\
                    \BlankLine
                    $\hat{TSh}_i := \hat{TSh}_i + \Delta_i(S^{(k)},x^{(k)};x^*,f)+\frac{1}{|S_j|}(\Delta_j(A^{(k)},x^{(k)};x^*,f)-f_j^{(k)})$;
                }
                $\hat{TSh}_i := \hat{TSh}_i/K$;
            }
        }
    }
    \Return{$\hat{TSh} = (\hat{TSh}_1,\dots,\hat{TSh}_n)$;}
\caption{Monte Carlo sampling for two-step Shapley values}
\label{algo_twostep}
\end{algorithm}

\begin{remark}\rm\label{rm::emp_marg_twostep}
To estimate $TSh[N,\hat{v}^{\ME}(\cdot;x^*,\bar{D}_X,f),\cP]$, one needs to adjust Algorithm \ref{algo_twostep} as follows. The input to the algorithm must contain the number $\tilde{K}$ of MC iterations, specified independently of $|\bar{D}_X|$. Line 8 must contain the for-loop over $k \in \{1,\dots,\tilde{K}\}$ and line 9 must be replaced with ``Draw a random observation $x^{(k)}$ from the empirical distribution $P_{\bar{D}_X}$''. 

The value produced by the adjusted Algorithm \ref{algo_twostep} is the estimate of $TSh[N,\hat{v}^{\ME}(\cdot;x^*,\bar{D}_X,f),\cP]$ with a mean squared error of $O(\tilde{K}^{-1})$. Finally, as in Lemma \ref{lmm::error_emp_marg_estimator}, it can be shown that the difference between the empirical marginal two-step Shapley value and the marginal two-step Shapley value in $L^2(\PP)$ is bounded by $O(|\bar{D}_X|^{-1/2})$.
\end{remark}

\section{Accelerated Monte Carlo sampling with precomputations}\label{sec::altMC_algos}

The algorithms presented in Section \ref{sec::MC_theory} on estimating explainers all contain parts that require real-time sampling (of coalitions $A\subseteq M$ and $S\subseteq N$) and repeated calls to the model $f$. One way of reducing the computation time of these algorithms is to separate out the sampling part and have it be executed as a precomputation step. The result of the precomputation can then be passed as an input to the computation step, which would execute the averaging of the MC iterates preferably in a manner that minimizes the number of model calls.

The idea of incorporating a precomputation step to generate and store samples for the MC iterations, while it is sound, may not always be feasible. To understand why, consider Algorithm \ref{algo_game_marg}. Observe that for each feature $i\in\{1,\dots,n\}$, $K=|\bar{D}_X|$ samples of coalitions must be generated (where each coalition is stored as a vector of zeroes and ones of size $n$), which means that the total memory required to store all $n$ matrices of coalition samples is $O(n^2 \cdot K)$ bytes. This can become exceedingly large if the number of features $n$ and MC samples $K$ are large.

To address this issue, we propose a methodology that allows us to reduce the number of stored matrices from $n$ to $1$, which will decrease the runtime of the precomputation and the memory required for storage. In order to accomplish this, one has to reformulate the linear game value \eqref{lingamevalue} so that the summation does not depend on the index $i$. In other words, we shift the sampling procedure from one based on $P_i^{(h)}$ to a probability distribution independent of $i$, which in turn allows us to generate only one set of coalition samples and use that to generate explanations for every feature $i \in N$.

\medskip
\noindent {\bf Accelerated-MC sampling for Shapley values.}

We showcase  the algorithm for the Shapley value \eqref{shapform} and then explain how these calculations can be extended to any linear game value; see Remark \ref{fastmc_extension}.

\begin{lemma}
    Define $P^{(\varphi)}(\{S\}) = \frac{1}{n}\frac{s!(n-s)!}{n!}$ on the space $\Omega = \{S:S\subsetneq N\}$, $s = |S|$, $n=|N|$. Then $P^{(\varphi)}(\Omega) = 1$ and for any game $(N,v)$
    \begin{equation}\label{altshapform}
        \varphi_i[N,v] = \int \left[\left(\frac{|N|}{|N|-|S|}\right) \left(v(S \cup \{i\}) - v( S )\right) \right]\,P^{(\varphi)}(dS), \quad i\in N.
    \end{equation}
\end{lemma}

\begin{proof}
    $P^{(\varphi)}$ defines a probability measure on $\Omega$. For any $S\in \Omega$, $P^{(\varphi)}(\{S\})$ is nonnegative and

    \begin{equation*}
        \begin{aligned}
        P^{(\varphi)}(\Omega) &= \sum_{S\in \Omega}P^{(\varphi)}(\{S\}) = \sum_{S\subsetneq N} \frac{1}{n}\frac{s!(n-s)!}{n!} = \sum_{S\subsetneq N} \frac{1}{n}\binom{n}{s}^{-1} = \sum_{\gamma=0}^{n-1}\sum_{S\subsetneq N,|S|=\gamma} \frac{1}{n}\binom{n}{\gamma}^{-1}\\
        &= \sum_{\gamma=0}^{n-1} \frac{1}{n}\binom{n}{\gamma}^{-1}\hspace{-10 pt} \sum_{S\subsetneq N,|S|=\gamma} 1 = \sum_{\gamma=0}^{n-1} \frac{1}{n}\binom{n}{\gamma}^{-1}\binom{n}{\gamma} = \sum_{\gamma=0}^{n-1} \frac{1}{n} = 1.
        \end{aligned}
    \end{equation*}
Furthermore, the Shapley value can be rewritten as an integral with respect to $P^{(\varphi)}$ as follows.
    \begin{equation*}
        \begin{aligned}
            \varphi_i[N,v] &= \sum_{S \subseteq N \backslash\{i\}} \frac{s!(n-s-1)!}{n!} [ v(S \cup \{i\}) - v( S ) ] = \sum_{S \subsetneq N} \frac{1}{n}\frac{s!(n-s)!}{n!} \left[\left(\frac{n}{n-s}\right) \left(v(S \cup \{i\}) - v( S )\right) \right]\\
            &= \int \left[\left( \frac{|N|}{|N|-|S|}\right) \left(v(S \cup \{i\}) - v( S )\right) \right]\,P^{(\varphi)}(dS).
        \end{aligned}
    \end{equation*}
\end{proof}

Notice that in \eqref{altshapform} the only dependence on the index $i$ is in the term $v(S\cup i)$. The coalitions $S$ are now independent of $i$ since we are considering proper subsets of $N$ and not subsets of $N\setminus \{i\}$. Practically, this has the implication that one can sample coalitions and reuse them to estimate $\varphi_i[N,v]$ for all $i\in N$. Algorithms \ref{algo_pre_fastmc_shapley} and \ref{algo_comp_fastmc_shapley} describe the precomputation and computation steps, respectively, where the former generates and stores the sampled coalitions and the coefficients $\frac{n}{n-s}$, and the latter simply loads that stored information and produces the MC estimate of $\varphi[N,\vpdp]$.

\begin{algorithm}
    \SetAlgoLined 
    \KwIn{Dataset $\bar{D}_X=\{x^{(k)}\}_{k=1}^K$.}
    \KwOut{Matrix $C$ and vector $w$.}
    \BlankLine
    Initialize a zero matrix $C$ of size $K\times n$;\\
    Initialize a zero vector $w$ of length $K$;\\
    \For{$k$ in $\{1,\dots,K\}$} 
    {
        Pick $r_k$ from the set $\in \{0,1,\dots,n-1\}$ uniformly at random;\\
        Randomly generate a logical array $c_k$ of length $n$ such that $\sum_{\ell=1}^n c_k[\ell] = r_k$;\\
        Set $C[k,\cdot] = c_k$;\\
        Set $w[k] = \frac{n}{n-r_k}$;\\
    }
    \Return{$C$ and $w$;}
\caption{Precomputation step of Accelerated-MC sampling for marginal Shapley values}
\label{algo_pre_fastmc_shapley}
\end{algorithm}

\begin{algorithm}
    \SetAlgoLined 
    \KwIn{Observation $x^*\in \R^n$, model $f$, dataset $\bar{D}_X=\{x^{(k)}\}_{k=1}^K$, matrix $C$ and vector $w$ from Algorithm \ref{algo_pre_fastmc_shapley}.
    }
    \KwOut{MC estimate $\hat{\varphi}=\{\hat{\varphi}\}_{i=1}^n$ of $\varphi[N,\vpdp]$.}
    \BlankLine
    Initialize a zero vector $\hat{\varphi}$ of length $n$;\\
    Set $X_{synth} = x^* \cdot C + \bar{D}_X\cdot (1-C)$;\\
    \Comment*[h]{$/^*$ Both multiplications in line 2 are component-wise. In the former, every row of $C$ is multiplied component-wise with $x^*$, creating a matrix with the same size as $C$ $^*/$}\\
    Set $X_{synth,copy} = copy(X_{synth})$;\\
    Evaluate $f_{synth} = f(X_{synth})$;\\
    \For{$i$ in $\{1,\dots,n\}$}
    {
        Set $X_{synth,copy}[\cdot,i] = x^*[i]$;\\
        Evaluate $f_{synth,i} = f(X_{synth,copy})$;\\
        Reset $X_{synth,copy}[\cdot,i] = X_{synth}[\cdot,i]$;\\
        \For{$k$ in $\{1,\dots,K\}$ such that $C[k,i]=0$}
        {
            Evaluate $\hat{\varphi}_i = \hat{\varphi}_i + w[k](f_{synth,i}[k] - f_{synth}[k])$;\\
        }
        Set $\hat{\varphi}_i = \hat{\varphi}_i/K$;\\
    }
    \Return{$\hat{\varphi} = (\hat{\varphi}_1,\dots,\hat{\varphi}_n)$;}
\caption{Computation step of Accelerated-MC sampling for marginal Shapley values}
\label{algo_comp_fastmc_shapley}
\end{algorithm}

\begin{remark}\rm\label{fastmc_extension}
To extend the Accelerated-MC sampling to any linear game value of the form \eqref{lingamevalue}, suppose that the coefficient $w_i(S,N)$ is renormalized to $w'_i(S,N) = c(S,N)w_i(S,N)$ so that $\sum_{S\subsetneq N}w'_i(S,N) = 1$. Then the term $v(S\cup \{i\}) - v(S)$ is multiplied by $c(S,N)^{-1}$. To generalize Algorithm \ref{algo_pre_fastmc_shapley} to any linear game value, notice that lines $4-5$ define the distribution of 1's in the logical array $c_k$ based on the probability $\frac{1}{n}\frac{r_k!(n-r_k)!}{n!}$ (the coefficient of the reformulated Shapley value \eqref{altshapform}), where $r_k$ specifies the number of 1's in the array. Thus, in the general case, these lines should be changed so that the distribution of 1's in $c_k$ is based on the probability $w'_i(S,N)$. Furthermore, line 7 should change to $w[k] = c(S,N)^{-1}$. No modifications are required in Algorithm \ref{algo_comp_fastmc_shapley}.
\end{remark}

\begin{remark}\rm
Note that $v(S \cup \{i\})-v(S)=0$ if $i \in S$. Thus, the difference of functions in line $10$ of Algorithm \ref{algo_comp_fastmc_shapley} has to be computed only for  those samples of coalitions which do not contain $i$.
\end{remark}

\begin{remark}\rm
    As with the previous algorithms presented in Section \ref{sec::MC_theory}, Algorithms \ref{algo_pre_fastmc_shapley} and \ref{algo_comp_fastmc_shapley} can be adjusted to estimate the Shapley values for the empirical marginal game $\hat{v}^{\ME}$ (based on the dataset $\bar{D}_X$) by sampling with replacement from $\bar{D}_X$, creating a new dataset $\tilde{D}$. Using this set as an input to the algorithms instead of $\bar{D}_X$ yields the estimate.
\end{remark}

\begin{remark}\rm
    The benefit of having precomputed samples is that in Algorithm \ref{algo_comp_fastmc_shapley} the input to the model $f$ can be the entire matrix of samples, which minimizes the call to the function. The mutliple function calls for each sample is the main bottleneck in the algorithms presented in Section \ref{sec::MC_theory}.
\end{remark}

\medskip
\noindent {\bf Accelerated-MC sampling for Owen values.} Next, we adapt the ideas above to the Owen value \eqref{OwenandBzOw} to produce sampling algorithms with a precomputation and computation step, respectively. The calculations again can be extended to any coalitional value of the form \eqref{coalvalform}; see Remark \ref{fastmc_extension_owen}.

\begin{lemma}
    Define $P_j^{(Ow,\cP)}(\{(R,T)\}) = \frac{1}{m}\frac{r!(m-r)!}{m!}\cdot \frac{1}{s_j}\frac{t!(s_j-t)!}{s_j!}$ on $\Omega_j^{\cP} = \{(R,T):R\subsetneq M,\ T\subsetneq S_j\}$, where $S_j \in \cP = \{S_1,\dots,S_m\}$, $s_j = |S_j|$, $r=|R|$ and $t=|T|$. Then $P_j^{(Ow,\cP)}(\Omega_j^{\cP}) = 1$ and for any coalitional game $[N,v,\cP]$ and $i\in S_j$
    \begin{equation}\label{altowenform}
        Ow_i[N,v,\cP] = \int \left[\left(\frac{|M|}{|M|-|R|}\frac{|S_j|}{|S_j|-|T|}\right) \left(v(Q_R \cup T \cup \{i\}) - v( Q_R \cup T )\right) \right]\,P_j^{(Ow,\cP)}(dR,dT),
    \end{equation}
    where $Q_R = \cup_{\alpha \in R}S_{\alpha}$ and $M=\{1,2,\dots,m\}$.
\end{lemma}

\begin{proof}
    Given $j\in M$ and a partition $\cP$, $P_j^{(Ow,\cP)}$ defines a probability measure on $\Omega_j^{\cP}$. For any $(R,T)\in \Omega_j^{\cP}$, $P_j^{(Ow,\cP)}(\{(R,T)\})$ is nonnegative and

    \begin{equation*}
        \begin{aligned}
        P_j^{(Ow,\cP)}(\Omega_j^{\cP}) &= \sum_{(R,T)\in \Omega_j^{\cP}}P_j^{(Ow,\cP)}(\{R,T\}) = \sum_{R\subsetneq M}\sum_{T\subsetneq S_j} \frac{1}{m}\frac{r!(m-r)!}{m!}\cdot \frac{1}{s_j}\frac{t!(s_j-t)!}{s_j!}\\
         &= \sum_{R\subsetneq M}\sum_{T\subsetneq S_j} \frac{1}{m}\binom{m}{r}^{-1} \frac{1}{s_j}\binom{s_j}{t}^{-1} = \frac{1}{m s_j} \left(\sum_{R\subsetneq M} \binom{m}{r}^{-1}\right) \left(\sum_{T\subsetneq S_j}\binom{s_j}{t}^{-1}\right)\\
        &= \frac{1}{m s_j} \left(\sum_{\gamma=0}^{m-1}\sum_{R\subsetneq M,|R|=\gamma} \binom{m}{\gamma}^{-1}\right) \left(\sum_{\zeta=0}^{s_j-1}\sum_{T\subsetneq S_j,|T|=\zeta}\binom{s_j}{\zeta}^{-1}\right)\\
        &= \frac{1}{m s_j} \left(\sum_{\gamma=0}^{m-1}\binom{m}{\gamma}^{-1}\binom{m}{\gamma}\right) \left(\sum_{\zeta=0}^{s_j-1}\binom{s_j}{\zeta}^{-1}\binom{s_j}{\zeta}\right) = 1.
        \end{aligned}
    \end{equation*}
Furthermore, the Owen value $Ow_i[N,v,\cP]$ for $i\in S_j$ can be rewritten as an integral with respect to $P_j^{(Ow,\cP)}$ as follows.
    \begin{equation*}
        \begin{aligned}
            Ow_i[N,v,\cP] &= \sum_{R\subseteq M\setminus \{j\}}\sum_{T\subseteq S_j\setminus \{i\}} \tfrac{|R|!(|M|-|R|-1)!}{|M|!}\tfrac{|T|!(|S_j|-|T|-1)!}{|S_j|!} \left[ v(Q_R \cup T \cup \{i\}) - v(Q_R \cup T) \right]\\
            &= \sum_{R\subsetneq M}\sum_{T\subsetneq S_j} \tfrac{1}{|M|}\tfrac{|R|!(|M|-|R|)!}{|M|!}\tfrac{1}{|S_j|}\tfrac{|T|!(|S_j|-|T|)!}{|S_j|!} \left(\tfrac{|M|}{|M|-|R|}\tfrac{|S_j|}{|S_j|-|T|}\right) \left[v(Q_R \cup T \cup \{i\}) - v(Q_R \cup T)\right]\\
            &= \int \left[\left(\frac{|M|}{|M|-|R|}\frac{|S_j|}{|S_j|-|T|}\right) \left(v(Q_R \cup T \cup \{i\}) - v( Q_R \cup T )\right) \right]\,P_j^{(Ow,\cP)}(dR,dT).
        \end{aligned}
    \end{equation*}
\end{proof}

Notice that in \eqref{altowenform} the sampling of the set $T \subsetneq S_j$ depends on the index $j$. Due to the nature of coalitional values the dependence on $S_j$ cannot be fully removed from the second summation. Practically, this has the implication that one can sample coalitions of unions $Q_R$ and reuse them to estimate $Ow_i[N,v,\cP]$ for all $i\in N$. However, separate samples of coalitions $T$ must be generated for each group $S_j$, $j\in M$. Algorithms \ref{algo_pre_fastmc_owen} and \ref{algo_comp_fastmc_owen} describe the precomputation and computation steps, respectively, where the former generates and stores the sampled coalitions of unions, the sampled coalitions of predictor indices for each group, and the coefficients $\frac{m}{m-r}\cdot \frac{s_j}{s_j-t}$. The latter algorithm simply loads that stored information and produces the MC estimate of $Ow[N,\vpdp,\cP]$.

\begin{algorithm}
    \SetAlgoLined 
    \KwIn{Partition $\cP=\{S_1,\dots,S_m\}$, dataset $\bar{D}_X=\{x^{(k)}\}_{k=1}^K$.
    }
    \KwOut{Matrices $\{C_i\}_{i=1}^m$ and $W$.
    }
    \BlankLine
    Initialize a zero matrix $R$ of size $K\times m$;\\
    Initialize a zero matrix $A$ of size $m\times n$;\\
    Initialize a zero matrix $W$ of size $m\times K$;\\    
    \For{$k$ in $\{1,\dots,K\}$}
    {
        Pick $q_k$ from the set $\in \{0,1,\dots,m-1\}$ uniformly at random;\\
        Randomly generate a logical array $r_k$ of length $m$ such that $\sum_{\ell=1}^m r_k[\ell] = q_k$;\\
        Set $R[k,\cdot] = r_k$;\\
    }
    \For{$j$ in $\{1,\dots,m\}$}
    {
        Set $A[j,S_j] = 1$;\\
        Initialize a zero matrix $T_j$ of size $K\times |S_j|$;\\
        \For{$k$ in $\{1,\dots,K\}$}
        {
            Pick $d_k$ from the set $\in \{0,1,\dots,|S_j|-1\}$ uniformly at random;\\
            Randomly generate a logical array $t_k$ of length $|S_j|$ such that $\sum_{\ell=1}^{|S_j|} t_k[\ell] = d_k$;\\
            Set $T_j[k,\cdot] = t_k$;\\
        }
    }
    Set $Q = RA$ (matrix multiplication);\\
    \For{$j$ in $\{1,\dots,m\}$}
    {
        Set $C_j = Q$;\\
        Set $C_j[\cdot, S_j] = Q[\cdot, S_j] + T_j$;\\
        Set any element of $C_j$ that is greater than $1$ equal to $1$;\\
        \For{$k$ in $\{1,\dots,K\}$}
        {
            Set $W[j,k] = \frac{m}{m-\sum_{\ell=1}^m R[k,\ell]}\cdot \frac{|S_j|}{|S_j| - \sum_{\ell=1}^{|S_j|} T_j[k,\ell]}$;\\
        }
    }
    \Return{$\{C_j\}_{j=1}^m$ and $W$;}
\caption{Precomputation step of Accelerated-MC sampling for marginal Owen values}
\label{algo_pre_fastmc_owen}
\end{algorithm}

\begin{algorithm}
    \SetAlgoLined 
    \KwIn{Observation $x^*\in \R^n$, model $f$, partition $\cP=\{S_1,\dots,S_m\}$, dataset $\bar{D}_X=\{x^{(k)}\}_{k=1}^K$, matrices $\{C_j\}_{j=1}^m$ and $W$ from Algorithm \ref{algo_pre_fastmc_owen}.
    }
    \KwOut{MC estimate $\hat{Ow}=\{\hat{Ow}_i\}_{i=1}^n$ of $Ow[N,\vpdp,\cP]$.
    }
    \BlankLine
    Initialize a zero vector $\hat{Ow}$ of length $n$;\\
    \For{$j$ in $\{1,\dots,m\}$}
    {
        Set $X_{synth} = x^* \cdot C_j + \bar{D}_X\cdot (1-C_j)$;\\
        \Comment*[h]{$/^*$ Both multiplications in line 3 are component-wise. In the former, every row of $C_j$ is multiplied component-wise with $x^*$, creating a matrix with the same size as $C_j$ $^*/$}\\
        Set $X_{synth,copy} = copy(X_{synth})$;\\
        Evaluate $f_{synth} = f(X_{synth})$;\\
        \For{$i$ in $S_j$}
        {
            Set $X_{synth,copy}[\cdot,i] = x^*[i]$;\\
            Evaluate $f_{synth,i} = f(X_{synth,copy})$;\\
            Reset $X_{synth,copy}[\cdot,i] = X_{synth}[\cdot,i]$;\\
            \For{$k$ in $\{1,\dots,K\}$ such that $C_j[k,i]=0$}
            {
                Evaluate $\hat{Ow}_i = \hat{Ow}_i + W[j,k](f_{synth,i}[k] - f_{synth}[k])$;\\
            }
            Set $\hat{Ow}_i = \hat{Ow}_i/K$;\\
        }
    }
    \Return{$\hat{Ow} = (\hat{Ow}_1,\dots,\hat{Ow}_n)$;}
\caption{Computation step of Accelerated-MC sampling for marginal Owen values}
\label{algo_comp_fastmc_owen}
\end{algorithm}

\begin{remark}\rm\label{fastmc_extension_owen}
    To extend the Accelerated-MC sampling to any coalitional value of the form \eqref{coalvalform}, suppose that the coefficient is renormalized to $w'_i(A,M,T,S_j) = c^{(1)}(A,M)w_j^{(1)}(A,M)c^{(2)}(T,S_j)w_i^{(2)}(T,S_j)$ so that $\sum_{A\subsetneq M}c^{(1)}(A,M)w_j^{(1)}(A,M) = \sum_{T\subsetneq S_j}c^{(2)}(T,S_j)w_i^{(2)}(T,S_j) = \sum_{A\subsetneq M}\sum_{T\subsetneq S_j}w'_i(A,M,T,S_j) = 1$. Then the term $v(Q_A\cup T\cup \{i\}) - v(Q_A\cup T)$ is multiplied by $(c^{(1)}(A,M)c^{(2)}(T,S_j))^{-1}$. Similar to Remark \ref{fastmc_extension}, to generalize Algorithm \ref{algo_pre_fastmc_owen} to any coalitional value, lines $5-6$ should be changed so that the distribution of 1's in $r_k$ is based on the probability $c^{(1)}(R,M)w_j^{(1)}(R,M)$, and lines $13-14$ should be changed so that the distribution of 1's in $t_k$ is based on the probability $c^{(2)}(T,S_j)w_i^{(2)}(T,S_j)$. Furthermore, line $24$ should change to $W[j,k] = (c^{(1)}(R,M)c^{(2)}(T,S_j))^{-1}$. No modifications are required in Algorithm \ref{algo_comp_fastmc_owen}.
\end{remark}

\begin{remark}\rm
    As with the previous algorithms, Algorithms \ref{algo_pre_fastmc_owen} and \ref{algo_comp_fastmc_owen} can be adjusted to estimate the Owen value for the empirical marginal game $\hat{v}^{\ME}$ (based on the dataset $\bar{D}_X$) by sampling with replacement from $\bar{D}_X$, creating a new dataset $\tilde{D}$. Using this set as an input to the algorithms instead of $\bar{D}_X$ yields the estimate.
\end{remark}

\section{Numerical Examples}\label{sec::numerical}

In this section we present the results from the numerical experiments conducted on synthetic data. The purpose of these experiments is to numerically show the asymptotic convergence of our MC approach for three empirical marginal game values: quotient Shapley values, Owen values, and two-step Shapley values.

The reason why we estimate the empirical marginal game value and not the true marginal is two-fold. First, because in practice it is infeasible to obtain the true marginal game value to have as a ground truth. On the other hand, the empirical marginal game value can be evaluated when the background dataset is small enough (recall that the complexity for $h_i[N,\hat{v}^{\ME}]$ is $O(2^n\cdot |\bar{D}_X|)$). Second, to properly showcase the asymptotic behavior of the estimator, we need to be able to carry out as many MC iterations as we desire. The version of the algorithm that estimates the true marginal game value would bound the number of MC iterations by $|\bar{D}_X|$.

There are two subsections below. In the first subsection we explain how the algorithms for the three aforementioned game values can be applied in a practical setting, which is the approach used to carry out the experiments. The following subsection contains the synthetic data examples, presents the results and provides a discussion.

\subsection{Practical implementation of algorithms}

In each of the Algorithms \ref{algo_game_marg}-\ref{algo_twostep}, there is at least one line that dictates to randomly draw from some distribution. Specifically, when estimating the true or empirical marginal game value one has to randomly draw coalitions, and in the case of the empirical marginal game value one also has to sample observations from the background dataset $\bar{D}_X$. Random sampling from $\bar{D}_X$ is not practically difficult. However, depending on the coefficients of the game value, it may not always be easy to randomly select a coalition.

Luckily, for game values that incorporate Shapley coefficients, such as the ones we will work with in our examples, the coalition sampling can be done in a relatively straightforward manner. The key is to use random permutations of player indices. We will explain in detail how to apply this idea to Algorithm \ref{algo_game_marg} when $h[N,\vpdp]$ is $\varphi[N,\vpdp]$ and then state the necessary changes to the other algorithms.

As described in Remark \ref{rem::shapley_probs}, the probability of sampling the coalition $S\subseteq N\setminus \{i\}$ is given by $P_i^{(\varphi)}(S) = \frac{|S|!(n-|S|-1)!}{n!}$. Now let $\sigma:N\to N$ be a permutation of the indices in $N$, and suppose $i$ is the player of interest whose Shapley value $\varphi_i[N,\vpdp]$ we would like to estimate. The below steps explain how random permutations work in sampling coalitions based on the Shapley coefficients:
\begin{itemize}
    \item[(1)] Suppose $\sigma:N\to N$ is a (selected uniformly at random) permutation of the indices in $N$ such that $\sigma(\ell)=i$ for some $\ell\in N$. This yields the permutation $\sigma(N) = (\sigma(1),\sigma(2),\dots,\sigma(\ell-1),i,\sigma(\ell+1),\dots,\sigma(n))$.
    \item[(2)] Consider the set of the first $l-1$ indices in the permuted set. Set $S(\sigma,i)=\{\sigma(1),\sigma(2),\dots,\sigma(\ell-1)\}$ as the sampled coalition. Then, the probability of observing a particular set $S^* \subseteq N \setminus \{i\}$ is given by
    \[
    P_{\sigma}(S(\sigma,i)=S^*)=\frac{|S^*|!(n-|S^*|-1)!}{n!}.
    \]
\end{itemize}

Notice that the above formula turns out to be exactly $P_i^{(\varphi)}(S^*)$. Therefore, in order to sample a coalition based on $P_i^{(\varphi)}$, one simply needs to randomly permute $N$ and set the sampled coalition $S$ as the indices to the left of the player of interest. The only thing that remains is to prove the claim in step (2).

\medskip

\noindent\textit{Claim: Let $i\in N$ be the player of interest,  $S^* \subseteq N \setminus \{i\}$ and $l=|S^*|+1$. The probability of sampling $S^*$ based on the Shapley value coefficients is equal to randomly selecting a permutation $\sigma:N\to N$ such that $S^*=\{\sigma(1),\sigma(2),\dots,\sigma(\ell-1)\}$ and $\sigma(\ell)=i$.}

\begin{proof}    
    Given $S^* \subseteq N \setminus \{i\}$, we set $l=|S^*|+1$. Then
    \[
  P_{\sigma}(S(\sigma,i)=S^*)=P_{\sigma}(S(\sigma,i)=S^* | \sigma(l)= i )P_{\sigma}(\sigma(l)=i).
    \]

To obtain the probability $P_{\sigma}(S(\sigma,i)=S^* | \sigma(l)=i )$, observe that this is simply the inverse of the binomial coefficient $\binom{n-1}{\ell-1}$ since there are that many distinct choices for the indices to the left of $i$ in the permuted set $\sigma(N)$ where $\sigma(\ell)=i$. Next, since $\sigma$ is a permutation of $N$, then $P_{\sigma}(\sigma(l)=i)=\frac{1}{n}$. Putting the two together proves the claim.
\end{proof}

Considering the above, each algorithm can be adjusted as follows for practical purposes when Shapley coefficients are involved:

\medskip
\textbf{Algorithm \ref{algo_game_marg} for $\varphi[N,\vpdp]$:} Replace line 5 with ``Take a random permutation $\sigma$ of $N$. Suppose $\sigma(\ell)=i$ for some $\ell \in N$. Then set $S^{(k)}=\{\sigma(1),\dots,\sigma(\ell-1)\}$''.

\medskip
\textbf{Algorithm \ref{algo_quot_game_marg} for $\varphi[M,v^{\ME,\cP}]$:} Replace line 5 with ``Take a random permutation $\sigma$ of $M$. Suppose $\sigma(\ell)=j$ for some $\ell \in M$. Then set $A^{(k)}=\{\sigma(1),\dots,\sigma(\ell-1)\}$''.

\medskip
\textbf{Algorithm \ref{algo_coalval} for $Ow[N,\vpdp,\cP]$:} Replace line 6 with ``Take a random permutation $\sigma$ of $M$. Suppose $\sigma(\ell)=j$ for some $\ell \in M$. Then set $A^{(k)}=\{\sigma(1),\dots,\sigma(\ell-1)\}$''. Next, replace line 7 with ``Take a random permutation $\tau(\{1,\dots,s\})$ where $S_j=\{i_1,\dots,i_s\}$. Suppose $i_{\tau(t)}=i$ for some $t\in \{1,\dots,s\}$. Then set $T^{(k)}=\{i_{\tau(1)},\dots,i_{\tau(t-1)}\}$''.

\medskip
\textbf{Algorithm \ref{algo_twostep}:} Replace line 11 with ``Take a random permutation $\sigma$ of $M$. Suppose $\sigma(\ell)=j$ for some $\ell \in M$. Then set $A^{(k)}=\{\sigma(1),\dots,\sigma(\ell-1)\}$''. Next, replace line 13 with ``Take a random permutation $\tau(\{1,\dots,s\})$ where $S_j=\{i_1,\dots,i_s\}$. Suppose $i_{\tau(t)}=i$ for some $t\in \{1,\dots,s\}$. Then set $S^{(k)}=\{i_{\tau(1)},\dots,i_{\tau(t-1)}\}$''.

\begin{remark}\rm
    The same adjustments apply when one applies the version of each algorithm that estimates the empirical marginal game value.
\end{remark}

\subsection{Synthetic data examples}\label{subsec::num_experiments}

Now we will present six different numerical experiments conducted using the second version of the algorithms that estimates the empirical marginal game value.

For each of the three game values mentioned in the beginning of this section (quotient Shapley, Owen, and two-step Shapley values), there are two experiments conducted. One where we showcase the error of estimation as the number of MC iterations increase, and the other is a similar experiment that also compares the different errors for varying number of predictors in the model $f$. The latter experiment is to provide numerical evidence that the number of predictors does not substantially impact the relative error.

In detail, the experiments are set up as follows. A data generating model $Y=f(X)$ is constructed, whose specifics will be provided once each experiment is discussed, and the background dataset $\bar{D}_X$ is sampled from the distribution of $X$ and is of size $100$. The number $K$ of MC iterations will vary in the set $\{2^r\}_{r=9}^{14}$ to depict the asymptotic behavior of the error estimation. As for the error itself, we evaluate the Mean Integrated Squared Error (MISE) between the empirical marginal game value that has been directly computed on all observations in $\bar{D}_X$, and the MC estimate that also samples observations from $\bar{D}_X$.

Specifically, for the empirical marginal quotient game value $h_j[M,\hat{v}^{\ME,\cP}(\cdot;x^{(r)},\bar{D}_{X},f)]$, for some $j\in M$, the estimated MISE between $h_j$ and the MC estimate $\hat{h}_j$ is given by
\begin{equation}\label{MISE}
    \widehat{\MISE}(h_j,\hat{h}_j) := \frac{1}{|\bar{D}_X|}\sum_{x\in \bar{D}_X} \big(h_j[M,\hat{v}^{\ME,\cP}(\cdot;x,\bar{D}_X,f)]-\hat{h}_j[M,\hat{v}^{\ME,\cP}(\cdot;x,\bar{D}_X,f)]\big)^2.
\end{equation}

We obtain an estimate of MISE for every MC estimate $\hat{h}_j$ produced. To construct confidence intervals for MISE, we evaluate $50$ estimates for it by running the MC approach 50 times for each individual observation in $\bar{D}_X$.

Finally, another quantity we estimate is the Relative MISE or RMISE, which is given by
\begin{equation}\label{RMISE}
    \widehat{\text{RMISE}}(h_j,\hat{h}_j) := \frac{\widehat{\text{MISE}}(h_j,\hat{h}_j)}{\frac{1}{|\bar{D}_X|}\sum_{x\in \bar{D}_X} \big(h_j[M,\hat{v}^{\ME,\cP}(\cdot;x,\bar{D}_X,f)]\big)^2}.
\end{equation}

Estimating RMISE will provide insight in the experiments where the number of predictors increases. As we add more and more predictors to a function whose output is bounded (a probability), we expect that the contributions themselves will be smaller across the predictors. This is a consequence of the efficiency property. Therefore, even though MISE will be expected to decrease without any impact from the increasing number of predictors, RMISE may be impacted.

Confidence intervals for RMISE are also constructed. As a final note before we discuss each example separately, the plots for MISE and RMISE are plotted on log-log scales to clearly show the error rate as the number of MC iterations increases.

\medskip
\noindent\textbf{Experiment 1a: Asymptotic behavior of the MC quotient Shapley estimate.}\\
The data generating model is a logistic function with four predictors.
\begin{align*}
    Y &= f(X) = \frac{\sqrt{6}}{1+\exp[-3(X_1-5)+0.2(X_2-15)-2(X_3-2/7)-5X_4]},\\
    X_1 &\sim Normal(5,1),\ X_2 \sim Gamma(3,|X_1|),\ X_3 \sim Beta(2,5),\ X_4 \sim Uniform(-1,1).
\end{align*}
Since $X_1$, $X_2$ are dependent, we form the partition $\cP$ with three groups, $\cP=\{S_1, S_2, S_3\}$ where $S_1=\{1,2\}$, $S_2=\{3\}$, and $S_3=\{4\}$. The group of interest is $S_1$, whose empirical marginal quotient Shapley $\varphi_1[M,\hat{v}^{\ME,\cP}]$ is evaluated directly.

The error estimates for MISE and RMISE for the corresponding estimate $\hat{\varphi}_1[M,\hat{v}^{\ME,\cP}]$ are given in Figure \ref{fig::qshap_asym}, showing their empirical mean after the 50 runs and corresponding $95\%$ confidence intervals. Also plotted in the figure is the theoretical rate and the estimated variance over the number of MC iterations.

\begin{figure}[H]
    \centering
       \begin{subfigure}[t]{0.45\textwidth}
           \centering
           \includegraphics[width=\textwidth]{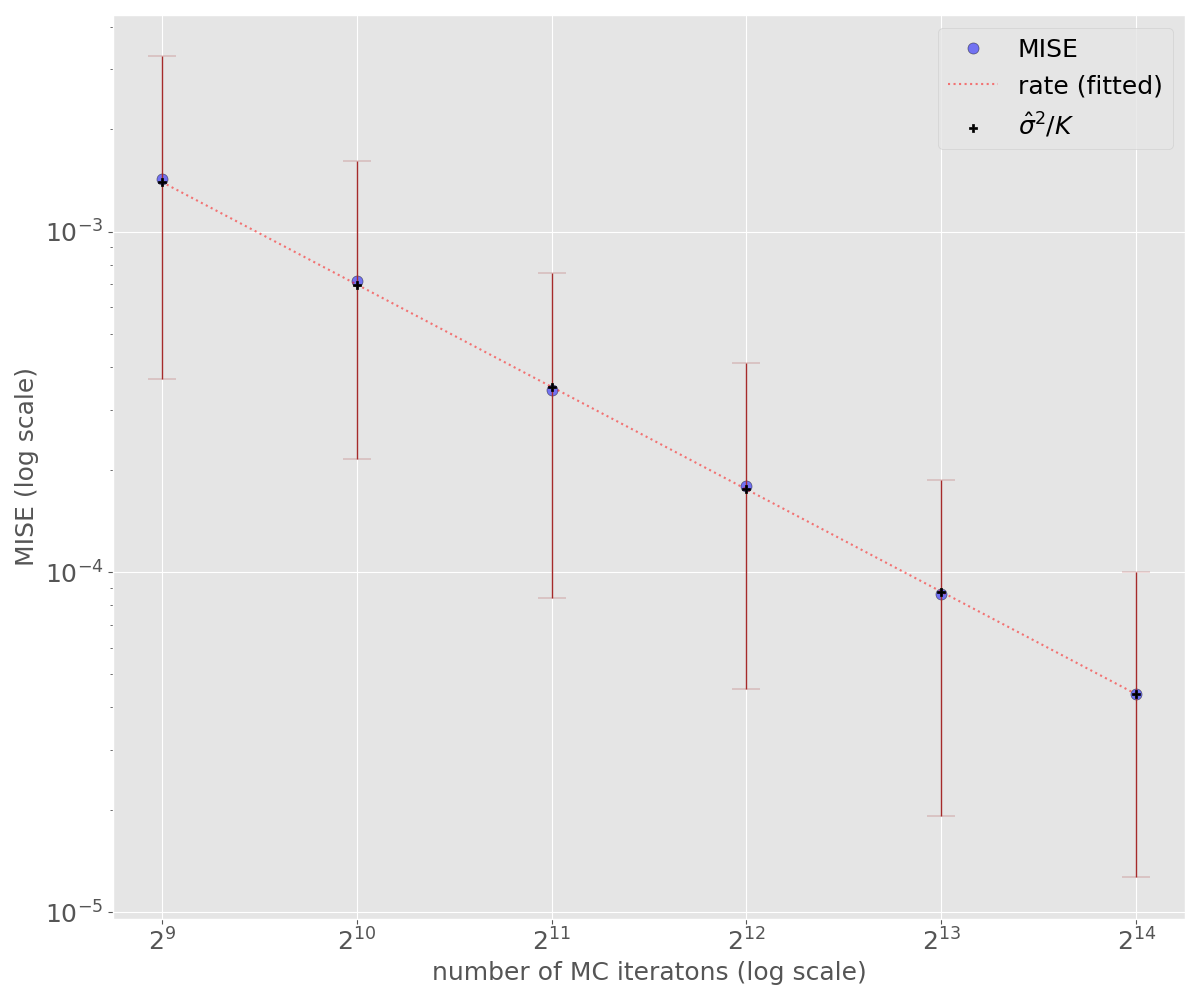}
           \caption{MISE for $\hat{\varphi}_1[M,\hat{v}^{\ME,\cP}]$}
       \end{subfigure}
       ~~
       \begin{subfigure}[t]{0.45\textwidth}
           \centering
           \includegraphics[width=\textwidth]{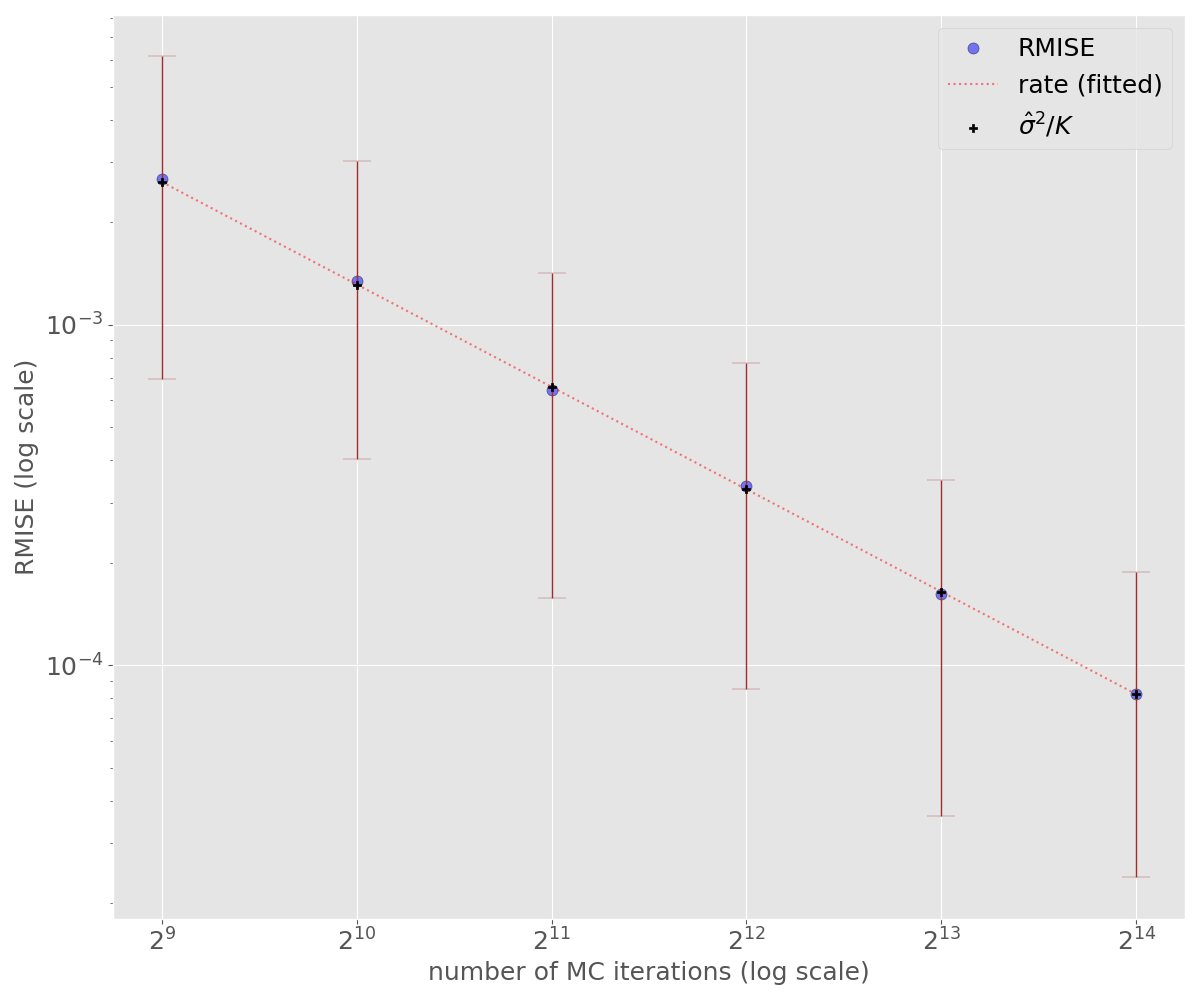}
           \caption{RMISE for $\hat{\varphi}_1[M,\hat{v}^{\ME,\cP}]$}
       \end{subfigure}
       \caption{ MC error estimate of the empirical marginal quotient Shapley for $S_1$. }\label{fig::qshap_asym}
   \end{figure}

\medskip
\noindent\textbf{Experiment 1b: Asymptotics of the MC Shapley estimate for increasing number of predictors.}\\
The data generating model is a logistic function given by
\begin{align*}
    Y &= f(X) = \frac{\sqrt{6}}{1+\exp[-3(X_1-5)+0.2(X_2-15)-2(X_3-2/7)-5X_4+\sum_{l=5}^p X_l]},\\
    X_1 &\sim Normal(5,1),\ X_2 \sim Gamma(3,|X_1|),\ X_3 \sim Beta(2,5),\ X_4 \sim Uniform(-1,1),\\
    X_l &\sim Normal(0,3), \ l\in \{5,\dots,p\},
\end{align*}
where $p$ takes values in $\{4,5,10,16\}$ (for $p=4$ we assume $\sum_{l=5}^p X_l=0$). Thus, four different models are embedded in the above formula. The purpose of this experiment is to show that as we increase the number of predictors the effect on the error is minimal.

The predictor of interest is $X_1$, whose empirical marginal Shapley value $\varphi_1[N,\hat{v}^{\ME}]$ is evaluated directly. We then perform the 50 MC runs to obtain its corresponding estimate $\hat{\varphi}_1[N,\hat{v}^{\ME}]$ and build the error plots for MISE and RMISE, shown in Figure \ref{fig::shap_asym_inc}, showing their empirical mean and corresponding $95\%$ confidence intervals for each of the four models. Also plotted in the figure is the theoretical rate and the estimated variance over the number of MC iterations, again for each of the four models.

\begin{figure}[H]
    \centering
       \begin{subfigure}[t]{0.45\textwidth}
           \centering
           \includegraphics[width=\textwidth]{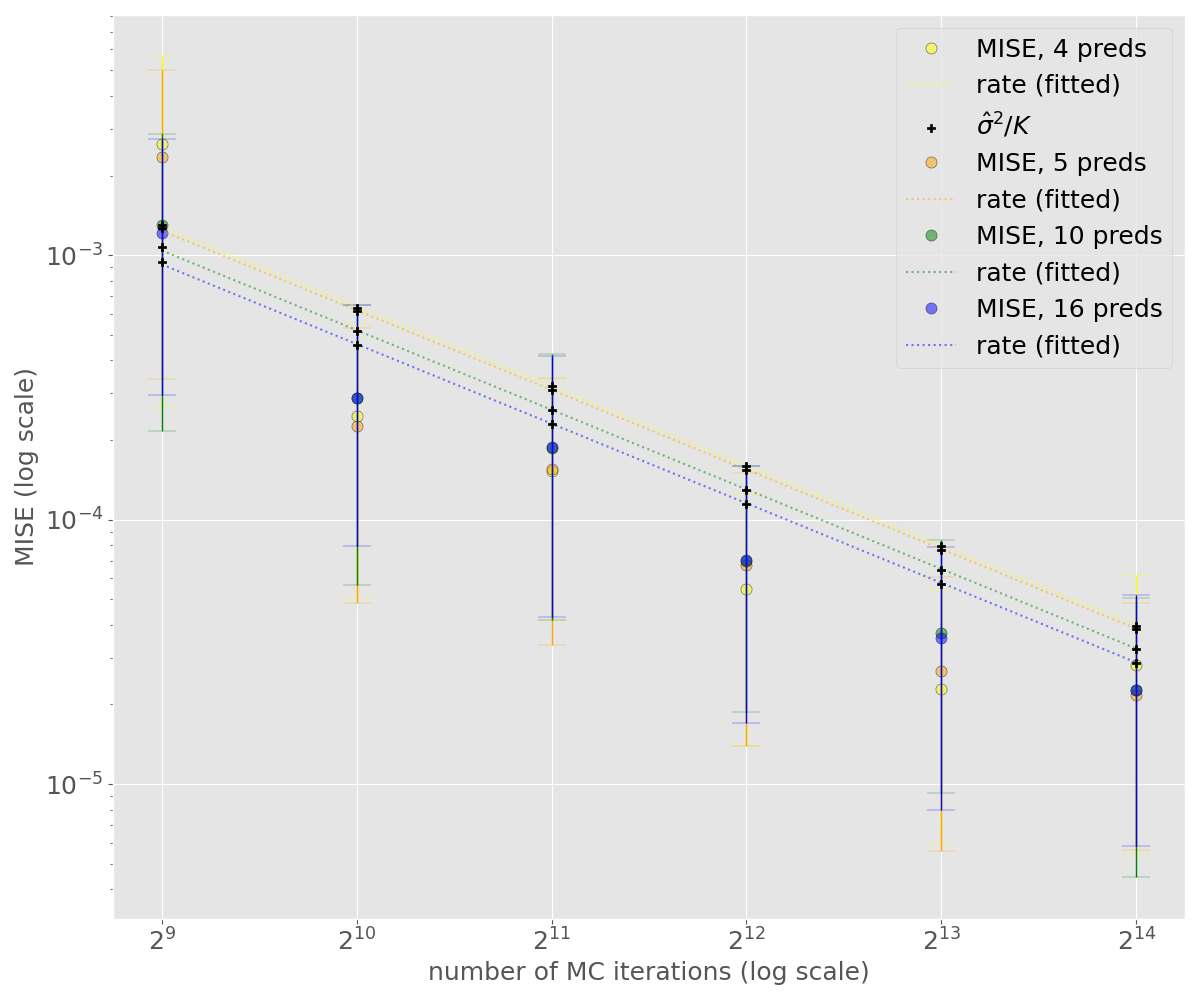}
           \caption{MISE for $\hat{\varphi}_1[N,\hat{v}^{\ME}]$}
       \end{subfigure}
       ~~
       \begin{subfigure}[t]{0.45\textwidth}
           \centering
           \includegraphics[width=\textwidth]{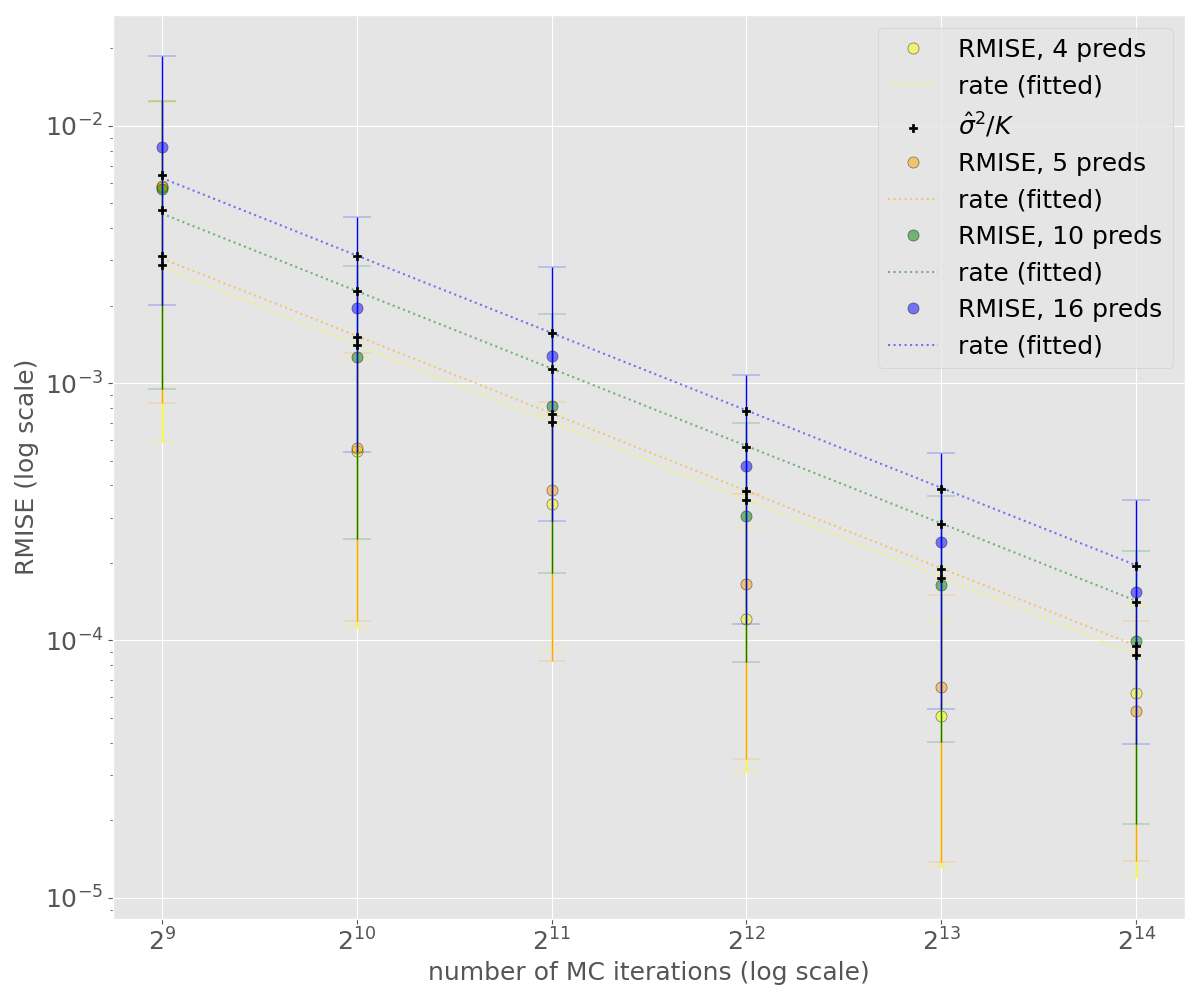}
           \caption{RMISE for $\hat{\varphi}_1[N,\hat{v}^{\ME}]$}
       \end{subfigure}
       \caption{ MC error estimate of the empirical marginal Shapley for increasing number of predictors. }\label{fig::shap_asym_inc}
   \end{figure}

\medskip
\noindent\textbf{Experiment 2a: Asymptotic behavior of the MC Owen value estimate.}\\
The data generating model is a logistic function with six predictors.
\begin{align*}
    Y &= f(X) = \frac{\sqrt{6}}{1+\exp[-3(X_1-5)+0.2(X_2-15)-2(X_3-2/7)-5X_4+X_5-0.5(\pi-\frac{1}{\pi})-X_6]},\\
    X_1 &\sim Normal(5,1),\ X_2 \sim Gamma(3,|X_1|),\ X_3 \sim Beta(2,5),\ X_4 \sim Uniform(-1,1),\\
    X_5 &= \exp(X_4)+\epsilon_1, \ \text{and} \ X_6 = X_4^2\sin(\pi X_4) + \epsilon_2, \ \epsilon_1 \sim Normal(0,0.1^2), \ \epsilon_2 \sim Normal(0,0.05^2).
\end{align*}
We form the partition $\cP$ with three groups based on dependencies, $\cP=\{S_1, S_2, S_3\}$ where $S_1=\{1,2\}$, $S_2=\{3\}$, and $S_3=\{4,5,6\}$. The predictor of interest is $X_4$, whose empirical marginal Owen value $Ow_4[N,\hat{v}^{\ME},\cP]$ is evaluated directly.

The error estimates for MISE and RMISE for the corresponding estimate $\hat{Ow}_4[N,\hat{v}^{\ME},\cP]$ are given in Figure \ref{fig::owen_asym}, showing their empirical mean after the 50 runs and corresponding $95\%$ confidence intervals. Also plotted in the figure is the theoretical rate and the estimated variance over the number of MC iterations.

\begin{figure}[H]
    \centering
       \begin{subfigure}[t]{0.45\textwidth}
           \centering
           \includegraphics[width=\textwidth]{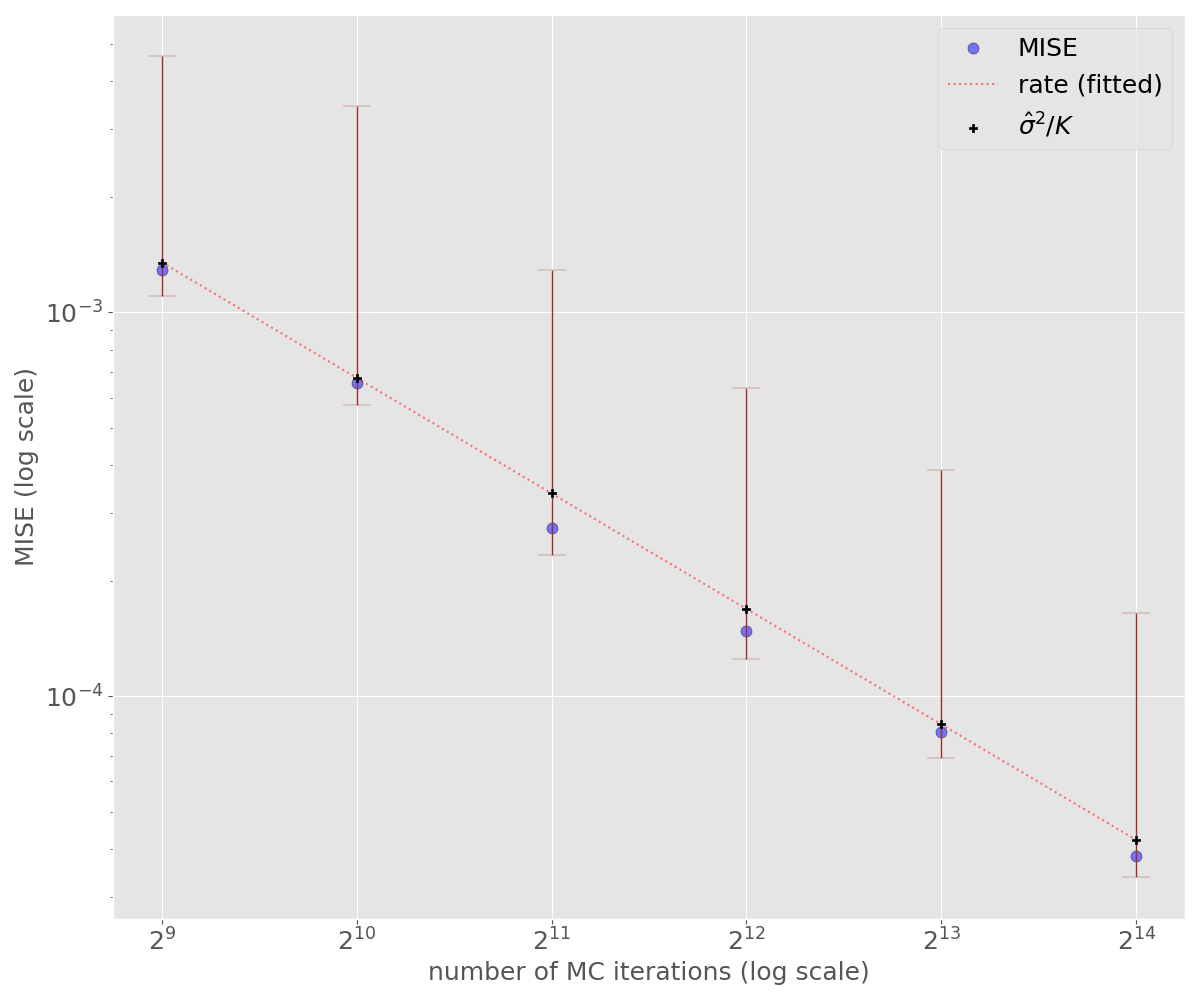}
           \caption{MISE for $\hat{Ow}_4[N,\hat{v}^{\ME},\cP]$}
       \end{subfigure}
       ~~
       \begin{subfigure}[t]{0.45\textwidth}
           \centering
           \includegraphics[width=\textwidth]{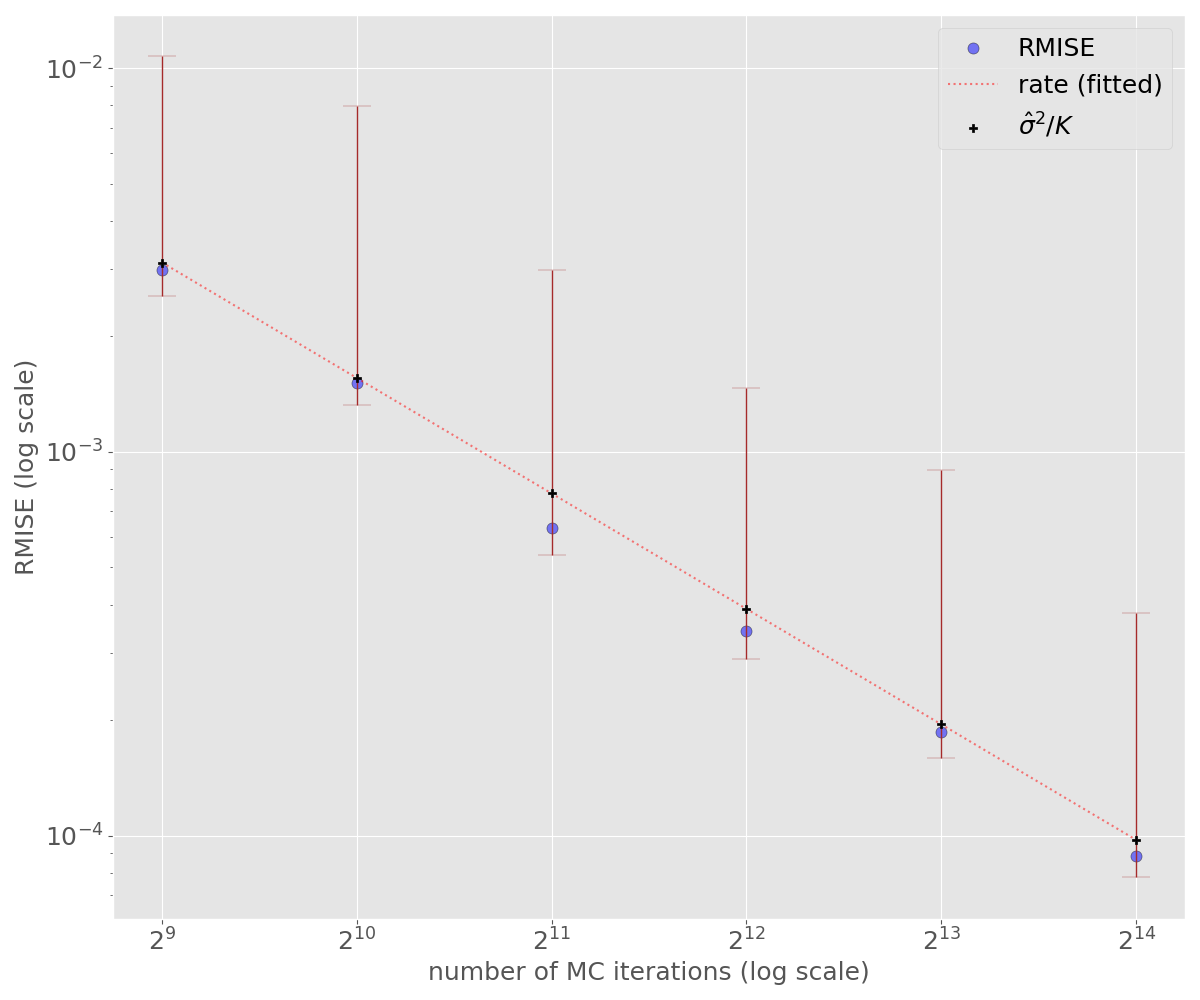}
           \caption{RMISE for $\hat{Ow}_4[N,\hat{v}^{\ME},\cP]$}
       \end{subfigure}
       \caption{ MC error estimate of the empirical marginal Owen value for $X_4$. }\label{fig::owen_asym}
   \end{figure}

\medskip
\noindent\textbf{Experiment 2b: Asymptotics of the MC Owen estimate for increasing number of predictors.}\\
The data generating model is a logistic function given by
\begin{align*}
    Y &= f(X) = \frac{\sqrt{6}}{1+\exp[-3(X_1-5)+0.2(X_2-15)-2(X_3-2/7)-5X_4+\sum_{l=5}^p X_l]},\\
    X_1 &\sim Normal(5,1),\ X_2 \sim Gamma(3,|X_1|),\ X_3 \sim Beta(2,5),\ X_4 \sim Uniform(-1,1),
\end{align*}
and $(X_5,\dots,X_p)$, where $p$ takes values in $\{6,10,14,18\}$, follows a Multivariate Normal distribution with $\E[X_l]=0$, $Var[X_l]=3$, and $Cov[X_l,X_q]=0.1$, $l,q\in \{5,\dots,p\}$, $l\ne q$. Thus, four different models are embedded in the above formula. As with Experiment 1b, the purpose of this experiment is to show that as we increase the number of predictors the effect on the error is minimal.

The predictor of interest is $X_5$, whose empirical marginal Owen value $Ow_5[N,\hat{v}^{\ME},\cP]$ is evaluated directly. The partition $\cP$ is formed by dependencies, so that $\cP=\{S_1,S_2,S_3,S_4\}$ where $S_1=\{1,2\}$, $S_2=\{3\}$, $S_3=\{4\}$, and $S_4=\{5,\dots,p\}$. We then perform the 50 MC runs to obtain the empirical marginal Owen value MC estimate $\hat{Ow}_5[N,\hat{v}^{\ME},\cP]$ and build the error plots for MISE and RMISE, shown in Figure \ref{fig::owen_asym_inc}, showing their empirical mean and corresponding $95\%$ confidence intervals for each of the four models. Also plotted in the figure is the theoretical rate and the estimated variance over the number of MC iterations, again for each of the four models.

\begin{figure}[H]
    \centering
       \begin{subfigure}[t]{0.45\textwidth}
           \centering
           \includegraphics[width=\textwidth]{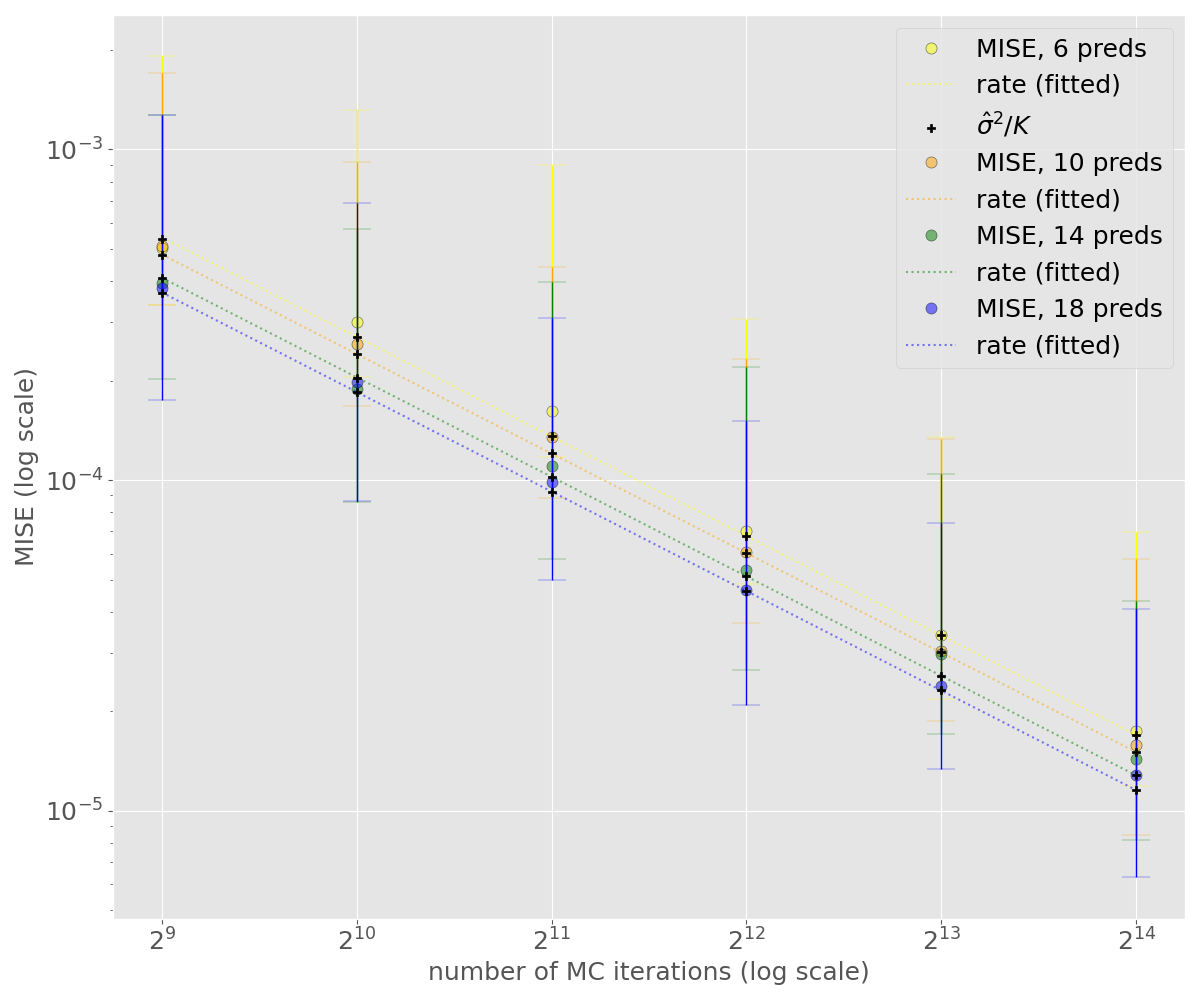}
           \caption{MISE for $\hat{Ow}_5[N,\hat{v}^{\ME},\cP]$}
       \end{subfigure}
       ~~
       \begin{subfigure}[t]{0.45\textwidth}
           \centering
           \includegraphics[width=\textwidth]{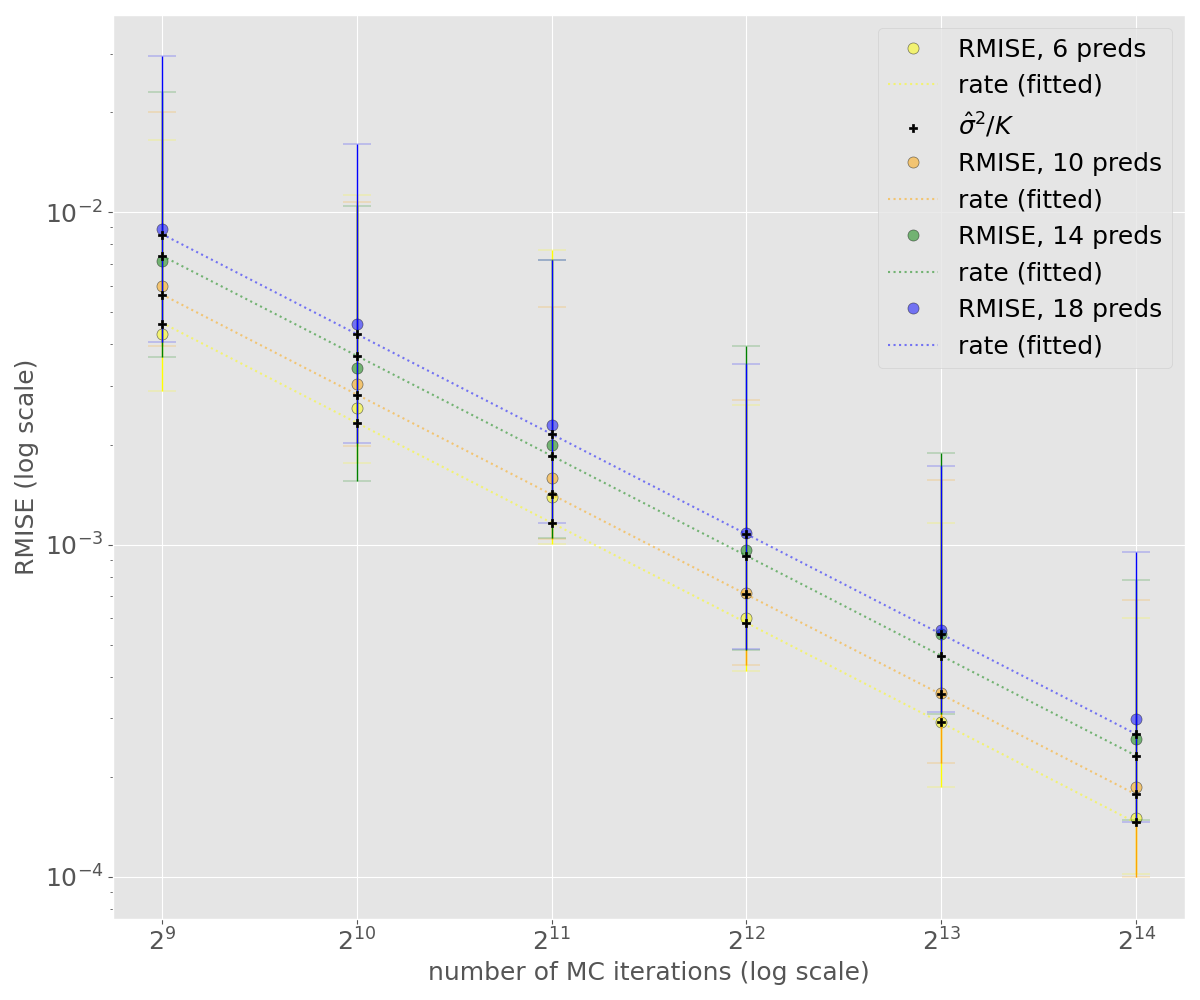}
           \caption{RMISE for $\hat{Ow}_5[N,\hat{v}^{\ME},\cP]$}
       \end{subfigure}
       \caption{ MC error estimate of the empirical marginal Owen value as the number of predictors increases. }\label{fig::owen_asym_inc}
   \end{figure}

\medskip
\noindent\textbf{Experiment 3a: Asymptotic behavior of the MC two-step Shapley estimate.}\\
The setup is the same as with Experiment 2a. The difference is that for the predictor of interest $X_4$ we estimate the empirical marginal two-step Shapley value $TSh_4[N,\hat{v}^{\ME},\cP]$.

The error estimates for MISE and RMISE for $\hat{TSh}_4[N,\hat{v}^{\ME},\cP]$ are given in Figure \ref{fig::ts_asym}, showing their empirical mean after the 50 runs and corresponding $95\%$ confidence intervals. Also plotted in the figure is the theoretical rate and the estimated variance over the number of MC iterations.

\begin{figure}[H]
    \centering
       \begin{subfigure}[t]{0.45\textwidth}
           \centering
           \includegraphics[width=\textwidth]{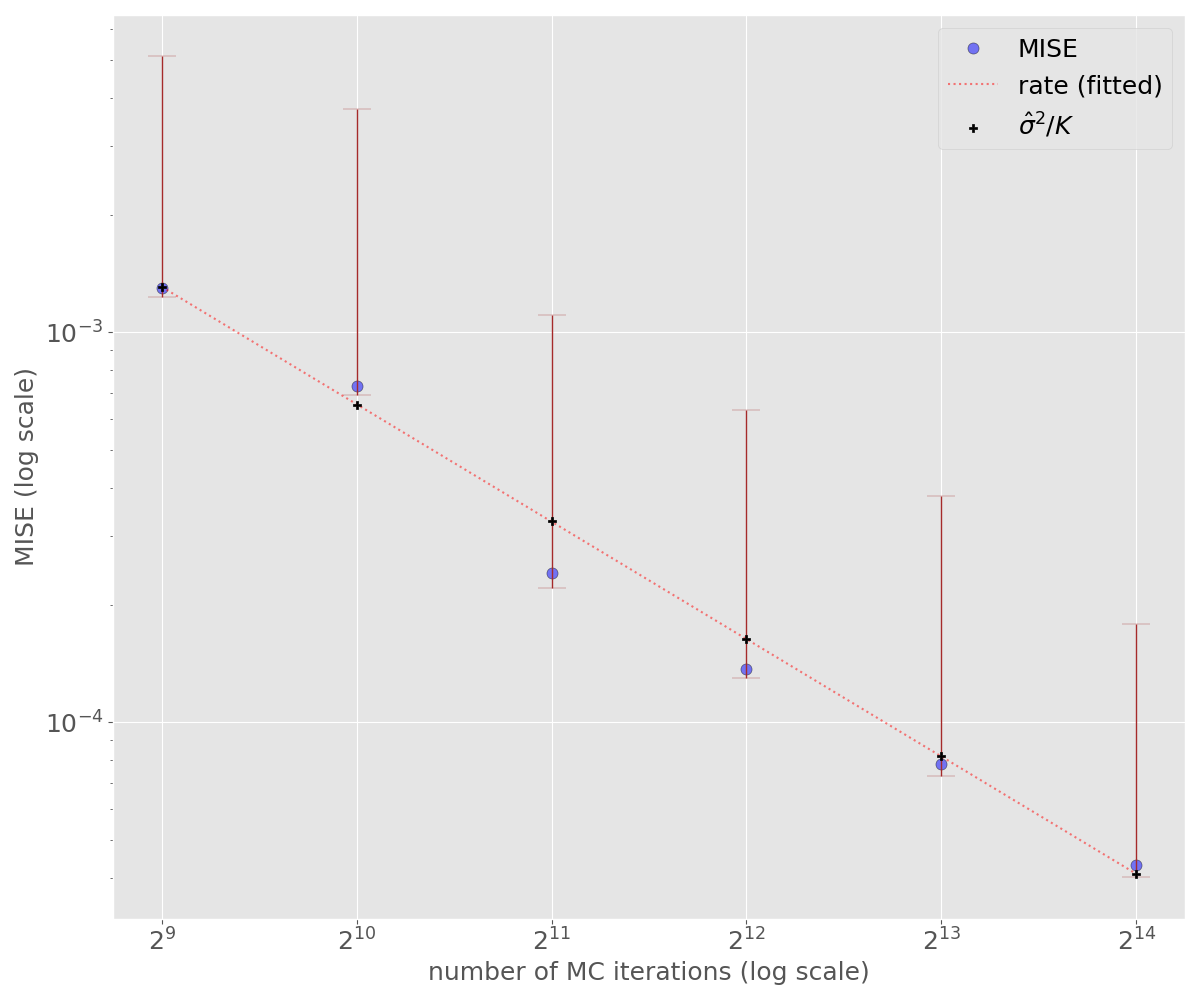}
           \caption{MISE for $\hat{TSh}_4[N,\hat{v}^{\ME},\cP]$}
       \end{subfigure}
       ~~
       \begin{subfigure}[t]{0.45\textwidth}
           \centering
           \includegraphics[width=\textwidth]{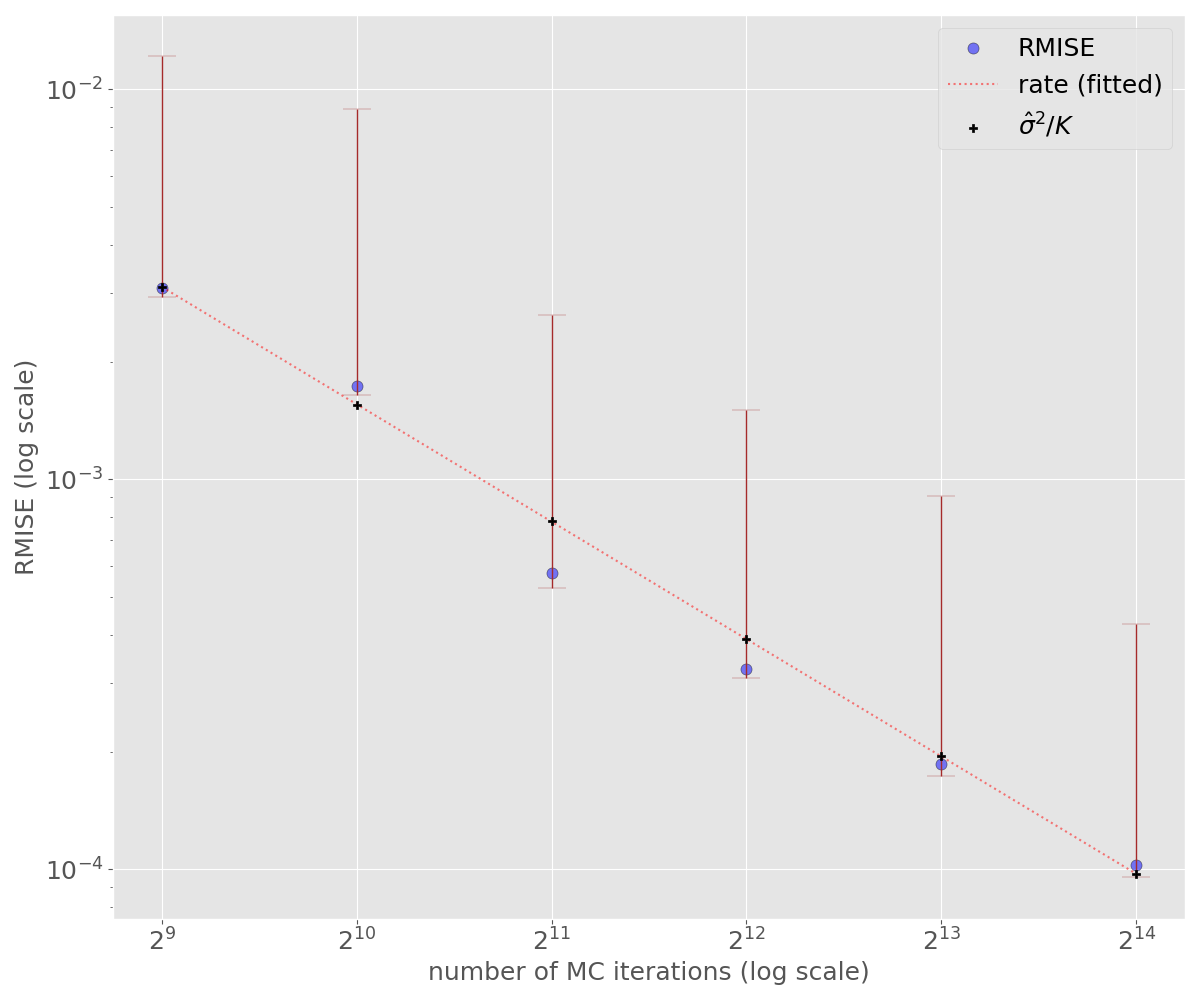}
           \caption{RMISE for $\hat{TSh}_4[N,\hat{v}^{\ME},\cP]$}
       \end{subfigure}
       \caption{ MC error estimate of the empirical marginal two-step Shapley for $X_4$. }\label{fig::ts_asym}
   \end{figure}

\medskip
\noindent\textbf{Experiment 3b: Asymptotics for MC two-step Shapley as the number of predictors increases.}\\
The setup is the same as with Experiment 2b. The difference is that for the predictor of interest $X_5$ we estimate the empirical marginal two-step Shapley value $TSh_5[N,\hat{v}^{\ME},\cP]$.

We then perform the 50 MC runs to obtain the empirical marginal two-step Shapley value MC estimate $\hat{TSh}_5[N,\hat{v}^{\ME},\cP]$ and build the error plots for MISE and RMISE, shown in Figure \ref{fig::ts_asym_inc}, showing their empirical mean and corresponding $95\%$ confidence intervals for each of the four models. Also plotted in the figure is the theoretical rate and the estimated variance over the number of MC iterations, again for each of the four models.

\begin{figure}[H]
    \centering
       \begin{subfigure}[t]{0.45\textwidth}
           \centering
           \includegraphics[width=\textwidth]{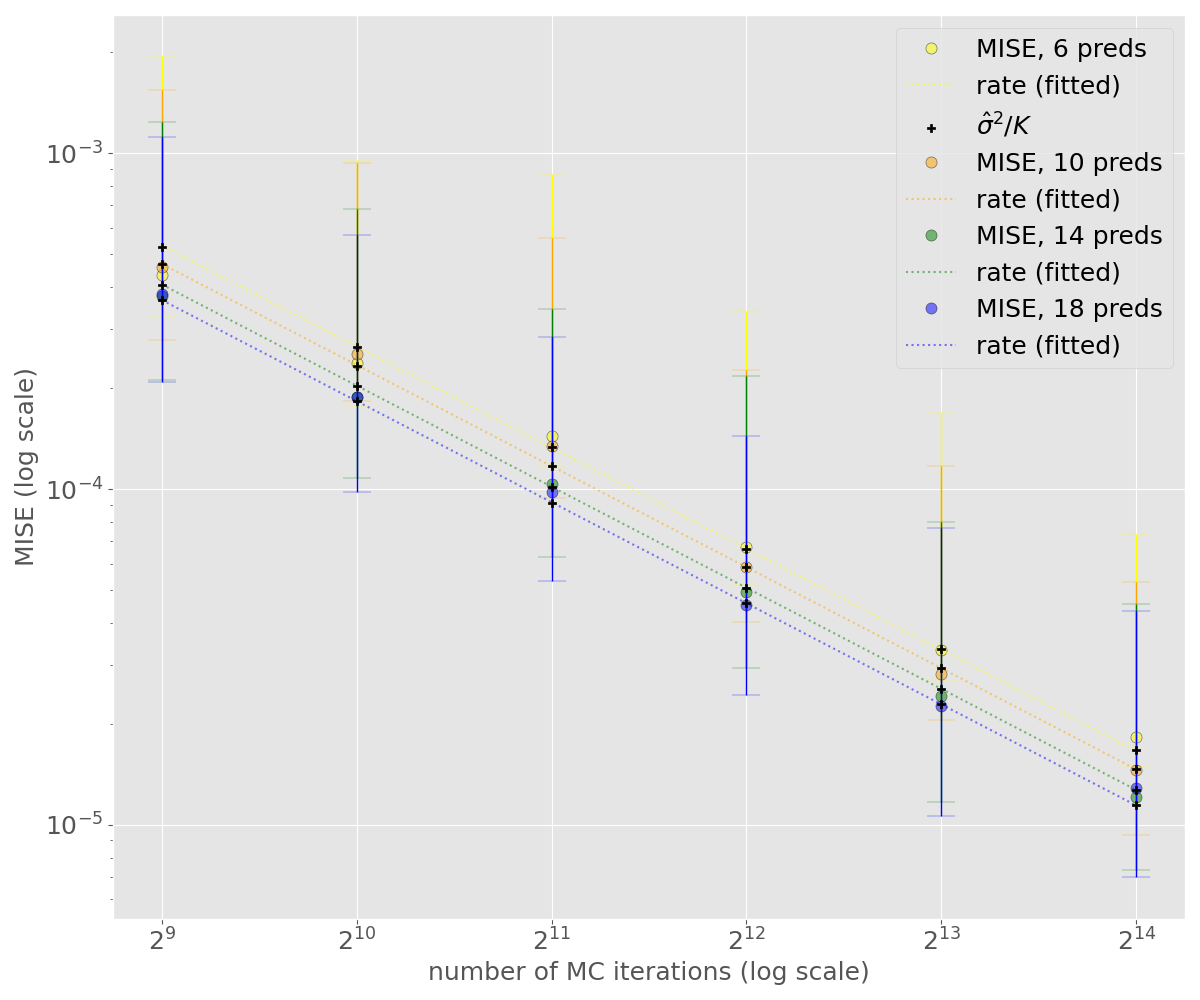}
           \caption{MISE for $\hat{TSh}_5[N,\hat{v}^{\ME},\cP]$}
       \end{subfigure}
       ~~
       \begin{subfigure}[t]{0.45\textwidth}
           \centering
           \includegraphics[width=\textwidth]{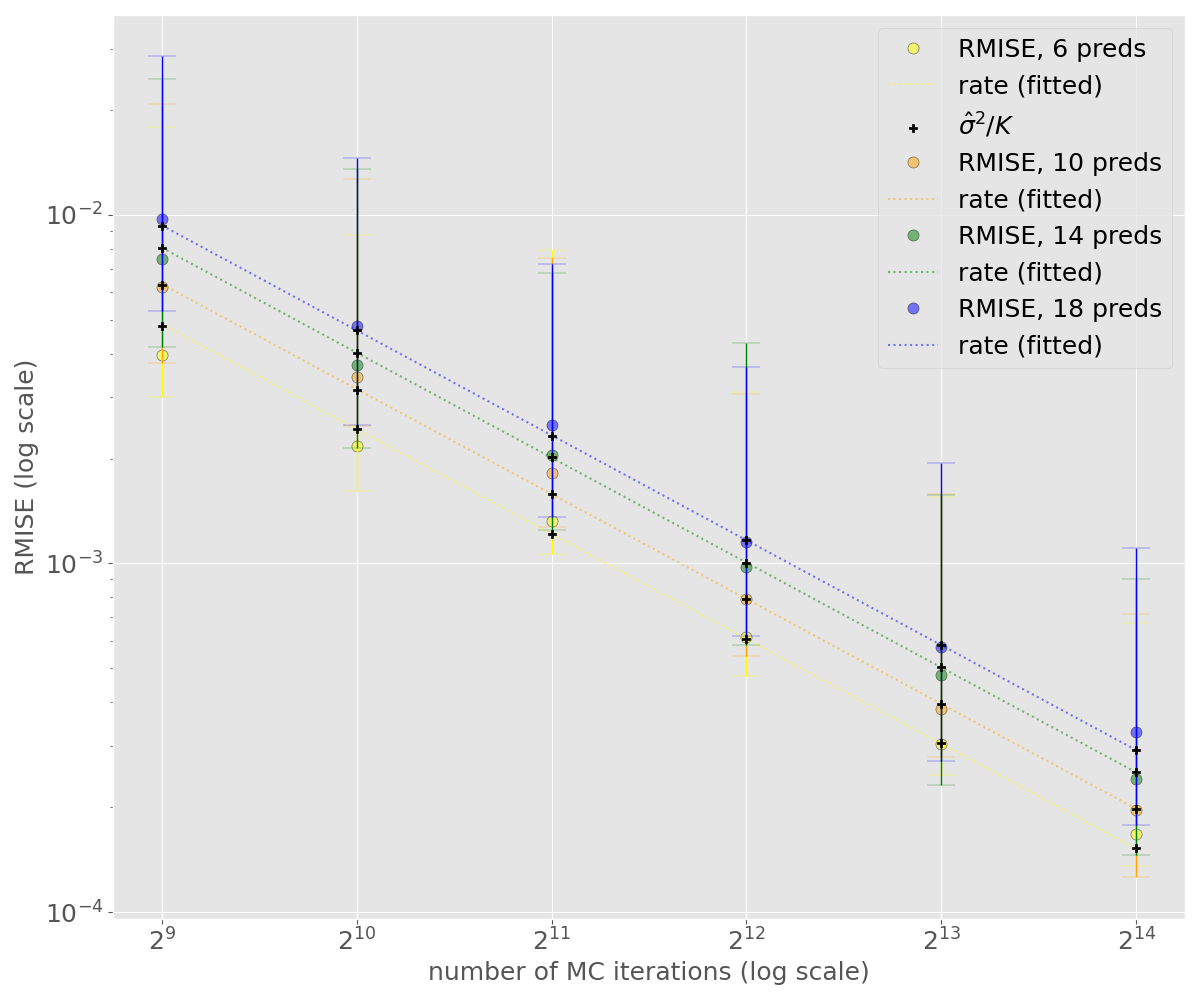}
           \caption{RMISE for $\hat{TSh}_5[N,\hat{v}^{\ME},\cP]$}
       \end{subfigure}
       \caption{ MC error estimate of the empirical marginal two-step Shapley as the number of predictors increases. }\label{fig::ts_asym_inc}
   \end{figure}

\medskip
\noindent\textit{Conclusion.} As we can see from the plots, the error decays as the theory dictates in all experiments. Furthermore, we see that the increase in number of predictors has minimal impact on the absolute error. However, as discussed earlier, for the relative MISE we see that the error does slightly increase for more predictors due to the contributions being smaller as they are dispersed across more predictors.

\begin{appendices}

\section*{Appendix}
\section{Monte Carlo integration}
The foundation of Monte Carlo approximation is the ability to draw independent samples from a distribution given by some measure $P$. Details on Monte Carlo theory can be found in \cite{MCbook} and \cite[\S 24]{Wasserman}. Assuming that $F(Z)$ has bounded variance, where $Z\sim P$, the weak law of large numbers guarantees that the estimator
\[
    I_K = \frac{1}{K}\sum_{k=1}^K F\left(Z^{(k)}\right)
\]
is a consistent estimator of $\E[F(Z)]=\int F(Z)dP$, where $\{Z^{(k)}\}$ is an i.i.d. sequence of random variables distributed according to $P$. The estimation of the error of $I_K$ is given by $\sqrt{Var[I_K]}=\sqrt{Var[F(Z)]}\cdot K^{-1/2}$. Furthermore, $I_K$ is unbiased since $\E[I_K] = \E[F(Z)]$; see Ch. I, $\S 5$ of \cite{Shiryaev}.

\begin{remark}\rm
    The weak law of large numbers does not require $Var[F(Z)]< \infty$ for convergence to occur. It will simply be the case that the convergence may be slower than $O(K^{-1/2})$ by only having the integrability of $F(Z)$.
\end{remark}

\end{appendices}

\end{document}